\PassOptionsToPackage{numbers, compress}{natbib}
\documentclass{article}
\usepackage[preprint]{neurips_2026}

\usepackage[utf8]{inputenc}
\usepackage[T1]{fontenc}
\usepackage{hyperref}
\usepackage{url}
\usepackage{booktabs}
\usepackage{amsfonts}
\usepackage{amsmath}
\usepackage{cancel}
\usepackage{amssymb}
\usepackage{amsthm}
\usepackage{nicefrac}
\usepackage{microtype}
\usepackage{xcolor}
\usepackage{caption}
\usepackage{graphicx}
\usepackage{multirow}
\usepackage{colortbl}
\usepackage{tikz}
\usetikzlibrary{calc,fit,positioning,arrows.meta,decorations.pathreplacing,matrix}

\newtheorem{theorem}{Theorem}
\newtheorem{proposition}[theorem]{Proposition}

\newtheorem{lemma}[theorem]{Lemma}

\newcommand{\bc}{\mathbf{c}}         
\newcommand{\bch}{\hat{\mathbf{c}}}  
\newcommand{\bs}{\mathbf{s}}         
\newcommand{\bG}{\mathbf{G}}         
\newcommand{\bpi}{\boldsymbol{\pi}}  

\newcommand{\bpsi}{\boldsymbol{\psi}}  
\newcommand{\R}{\mathbb{R}}
\newcommand{\Z}{\mathbb{Z}}
\newcommand{\E}{\mathbb{E}}
\newcommand{\Var}{\mathrm{Var}}

\newcommand{\argmin}{\operatorname{argmin}}

\title{Fitting Multilinear Polynomials for Logic Gate Networks}

\author{Youngsung Kim \\
Department of AI; and Department
of ECE,  \\
Inha University \\
\texttt{y.kim@inha.ac.kr;yskim.ee@gmail.com}}

\begin{document}

\maketitle

\begin{abstract}

We study learnable logic gate networks that stack layers of 2-input Boolean gates to build combinational circuits. Every 2-input gate has a unique multilinear polynomial with 4 coefficients, so the 16 Boolean gates form a codebook of prototypes in a 4-dimensional space, reducing training to a vector-quantization problem. The baseline method, Soft-Mix, learns a 16-dimensional softmax over gate identities, but the codebook has rank~4: 11 of 15 simplex directions carry nullspace gradient, and at uniform initialization the backward signal vanishes exactly. We prove that no affine product reparameterization fixes the resulting interaction-coefficient starvation under STE, and show that the covariance Jacobian of soft-VQ selection bypasses it by coupling the starved coefficient to the always-active constant channel. Working in the 4-dimensional polynomial space reduces each neuron from 16 to 4 parameters. On seven datasets, at least one 4-parameter method matches or exceeds Soft-Mix on every dataset; the CovJac advantage over STE grows monotonically with interaction demand across all seven datasets. At depth, Soft-Mix collapses ($-37.3$pp on CIFAR-10 at 12 layers) while CovJac holds ($-0.5$pp on CIFAR-10, stable on MNIST).

\end{abstract}


\section{Introduction}
\label{sec:introduction}

Learnable logic gate networks build combinational circuits by assigning a Boolean gate to every neuron~\citep{petersen2022deep,petersen2024convolutional}. These networks sit at the extreme efficiency end of the neural network spectrum: the deployed model is a combinational circuit requiring no floating-point arithmetic and no weight storage in the hidden layers~\citep{andronic2023polylut,bacellar2024dwn}, with inherent interpretability since each neuron computes a named logic function (AND, XOR, etc.)~\citep{jukna2012boolean}. The challenge is training: selecting the right gate per neuron is a discrete optimization problem with structural gradient pathologies. For 2-input gates (the setting studied here), training must choose one of 16 possible gates per neuron, a discrete selection problem typically relaxed via continuous surrogates~(\S\ref{sec:background}).

\paragraph{Observation: the 16 gates live in a 4-dimensional space.}
Every 2-input Boolean function has a unique multilinear polynomial with 4 coefficients (\S\ref{sec:mle})~\citep{hammer1968boolean,owen1972multilinear}, so the 16 gates are 16 integer points in a 4-dimensional space. The softmax operates in 15 dimensions, but only 4 of those directions actually change the neuron's output; the remaining 11 carry gradient that does no work (\S\ref{sec:method}). This suggests training directly in the 4-dimensional coefficient space instead, learning $\mathbf{c}\in\mathbb{R}^4$ and snapping to the nearest valid gate.

\paragraph{Two gradient pathologies in this space.}
Working in 4 dimensions eliminates the redundancy but exposes two problems. \textbf{(i)}~The soft mixture used by the existing method suffers from exact gradient cancellation: at the standard uniform initialization, the backward signal through each neuron is provably zero for every input (Proposition~\ref{prop:softmix-cancel}). This worsens with depth. \textbf{(ii)}~Under the straight-through estimator (STE), the interaction coefficient $c_{ab}$, which distinguishes AND from OR and XOR from XNOR, receives gradient on only 25\% of training samples. We prove that no affine reparameterization of the polynomial basis can fix this (Proposition~\ref{prop:basis-futility}).

\paragraph{Solution: covariance Jacobian coupling.}
Our second method, Multilinear-CovJac, uses the Jacobian of a soft vector-quantization step to couple $c_{ab}$ to the always-active constant channel, routing gradient to it on every sample (Proposition~\ref{prop:covjac}). Both resulting methods, Multilinear-STE and Multilinear-CovJac, use 4 parameters per neuron (Figure~\ref{fig:intro}).


\definecolor{cSM}{HTML}{C62828}    
\definecolor{cSMbg}{HTML}{FFEBEE}
\definecolor{cML}{HTML}{0072B2}    
\definecolor{cMLbg}{HTML}{DCEEF7}
\definecolor{cFx}{HTML}{212121}    
\definecolor{cFBg}{HTML}{E0E0E0}

\begin{figure}[t]
\centering
\resizebox{0.95\linewidth}{!}{%
\begin{tikzpicture}[x=1cm,y=1cm,
  arr/.style={-{Latex[length=2.5pt]},line width=0.5pt,cFx!70},
  sub/.style={font=\tiny,text=cFx!80,anchor=north,yshift=-2pt,align=center},
]

\def\Ya{0.65}

\node[font=\footnotesize\bfseries,text=cSM,anchor=east] at(-0.1,\Ya)
  {(a) Soft-Mix};

\node[draw=cSM,fill=cSMbg,line width=0.7pt,rounded corners=2pt,
  minimum height=0.45cm,inner sep=4pt,
  font=\footnotesize\bfseries,text=cSM](pi) at(1.3,\Ya)
  {$\boldsymbol{\pi}$};
\node[sub] at(pi.south){$\in\!\Delta^{15}$\\[-1pt]16 params};

\node[draw=cFx!70,fill=cFBg,line width=0.6pt,rounded corners=2pt,
  minimum height=0.45cm,inner sep=4pt,
  font=\footnotesize\bfseries,text=cFx](G) at(2.7,\Ya)
  {$\mathbf{G}$};
\draw[arr](pi.east)--(G.west);
\node[sub] at(G.south){$\mathbb{Z}^{16\times4}$\\[-1pt]rank 4};

\node[draw=cFx!70,fill=cFBg,line width=0.6pt,rounded corners=2pt,
  minimum height=0.45cm,inner sep=4pt,
  font=\footnotesize,text=cFx](ca) at(4.2,\Ya)
  {$\mathbf{c}$};
\draw[arr](G.east)--(ca.west)
  node[midway,above,font=\tiny,text=cFx!70]{$\boldsymbol{\pi}^\top\!\mathbf{G}$};

\node[draw=cFx!70,fill=white,line width=0.6pt,rounded corners=2pt,
  minimum height=0.45cm,inner sep=4pt,
  font=\footnotesize,text=cFx](za) at(5.65,\Ya)
  {$z$};
\draw[arr](ca.east)--(za.west)
  node[midway,above,font=\tiny,text=cFx!70]{$\mathbf{c}^\top\!\boldsymbol{\psi}$};

\draw[arr](za.east)--++(0.5,0);
\node[font=\scriptsize,text=cSM,anchor=west](ga)
  at([xshift=14pt]za.east){$\partial z/\partial a = 0$ {\tiny at uniform $\boldsymbol{\pi}$}};

\draw[decorate,decoration={brace,amplitude=2.5pt,raise=2pt},cSM!50]
  (pi.north west)--(G.north east)
  node[midway,above=5pt,font=\tiny,text=cSM!70]
  {11/15 simplex directions $\to$ nullspace};

\def\Yb{-0.75}

\node[font=\footnotesize\bfseries,text=cML,anchor=east] at(-0.1,\Yb)
  {(b) Multilinear (ours)};

\node[draw=cML,fill=cMLbg,line width=0.7pt,rounded corners=2pt,
  minimum height=0.45cm,inner sep=4pt,
  font=\footnotesize\bfseries,text=cML](cb) at(2.7,\Yb)
  {$\mathbf{c}$};
\node[sub] at(cb.south){$\in\!\mathbb{R}^4$\\[-1pt]4 params};

\node[draw=cFx!70,fill=cFBg,line width=0.6pt,rounded corners=2pt,
  minimum height=0.45cm,inner sep=4pt,
  font=\footnotesize,text=cFx](chat) at(4.2,\Yb)
  {$\hat{\mathbf{c}}$};
\draw[arr](cb.east)--(chat.west)
  node[midway,above,font=\tiny,text=cFx!70]{snap to $\bG$};
\node[sub] at(chat.south){$\in\bG$\\[-1pt]quantized};

\node[draw=cFx!70,fill=white,line width=0.6pt,rounded corners=2pt,
  minimum height=0.45cm,inner sep=4pt,
  font=\footnotesize,text=cFx](zb) at(5.65,\Yb)
  {$z$};
\draw[arr](chat.east)--(zb.west)
  node[midway,above,font=\tiny,text=cFx!70]{$\hat{\mathbf{c}}^\top\!\boldsymbol{\psi}$};

\draw[arr](zb.east)--++(0.5,0);
\node[font=\scriptsize,text=cML,anchor=west](gb)
  at([xshift=14pt]zb.east){$\partial z/\partial a = c_a\!+\!c_{ab}b$};
\node[font=\tiny,text=cML!80,anchor=north east]
  at(gb.south east){$\neq 0$ after snap};

\pgfmathsetmacro{\Ymid}{(\Ya+\Yb)/2}
\node[font=\scriptsize,text=cFx!80](psi) at(5.65,\Ymid)
  {$\boldsymbol{\psi}\!=\![1,\,a,\,b,\,ab]$};
\draw[cFx!40,line width=0.3pt,-{Latex[length=1.5pt]}](psi.north)--(za.south);
\draw[cFx!40,line width=0.3pt,-{Latex[length=1.5pt]}](psi.south)--(zb.north);

\end{tikzpicture}
}%

\caption{Both methods differ in how the coefficient vector
$\mathbf{c}$ is obtained.
Colored blocks are \textbf{learnable parameters}
(\textcolor{cSM}{red}: Soft-Mix, \textcolor{cML}{blue}: ours);
gray blocks are fixed or derived.
\textbf{(a)}~Soft-Mix learns $\boldsymbol{\pi}\!\in\!\Delta^{15}$
(16 params) and projects through the rank-4 codebook $\mathbf{G}$:
11 of 15 simplex directions carry nullspace gradient, and at uniform
$\boldsymbol{\pi}$ the input derivative vanishes exactly
(Prop.~\ref{prop:softmix-cancel}).
\textbf{(b)}~Our method learns
$\mathbf{c}\!\in\!\mathbb{R}^4$ (4 params) and quantizes to the
nearest codebook entry
$\hat{\mathbf{c}}\!\in\!\mathbf{G}$; the output is
$z = \hat{\mathbf{c}}^\top\!\boldsymbol{\psi}$, and after snapping
$|c_a + c_{ab}b| \geq 1$ always
(Prop.~\ref{prop:ste-noncancel}).}
\label{fig:intro}
\end{figure}

\paragraph{Contributions.}
\textbf{(1)}~We identify the rank-4 structure of the Boolean gate codebook and show that 11 of 15 softmax gradient directions are wasted; at uniform initialization the gradient vanishes exactly. \textbf{(2)}~We prove that the interaction coefficient $c_{ab}$ is inherently starved under STE, and that no affine product basis can simultaneously fix coverage, coherence, and bias. \textbf{(3)}~We propose two 4-parameter gate layers: Multilinear-STE (hard snap + polynomial STE) and Multilinear-CovJac (soft-VQ with input-independent Jacobian coupling to $c_{ab}$). \textbf{(4)}~We validate on seven datasets that at least one 4-parameter method matches or exceeds 16-parameter Soft-Mix on every dataset; the CovJac advantage grows monotonically with task interaction demand; at $L{=}12$, Soft-Mix collapses ($-37.3$pp on CIFAR-10) while CovJac holds ($-0.5$pp).


\section{Background}
\label{sec:background}

\subsection{Training logic gate networks}

Selecting one of 16 known operations per neuron is a discrete optimization problem analogous to operator selection in neural architecture search~\citep{liu2019darts}, but over a small, fully enumerable set. The standard approach is continuous relaxation: replace the discrete gate choice with a differentiable surrogate so the network can be trained end-to-end with backpropagation, then discretize at deployment. Differentiable logic gate networks (DLGNs)~\citep{petersen2022deep}, the architecture we study, implement this with $L$ hidden layers of width $k$. Each neuron in layer $l$ receives two inputs from layer $l{-}1$ via fixed random wiring; the first layer draws from the input features. The final layer's $k$ outputs are partitioned into $C$ equal groups (one per class), summed within each group, and passed through softmax to produce class probabilities (GroupSum readout).

\paragraph{Soft mixture over all 16 gates.}
In the approach of \citet{petersen2022deep}, each neuron learns a distribution $\bpi\in\Delta^{15}$ over the 16 gates; the training output is the weighted mixture $z = \sum_{j=0}^{15} \pi_j\, g_j(a,b)$, and the deployed gate is $g^* = \arg\max_g \pi_g$. Because training uses a soft mixture of all 16 gates while deployment uses a single selected gate, there is a potential train-to-deploy mismatch~\citep{kim2026alignforward}. \citet{kim2024narrowing} further analyzed the generalization gap and introduced logical skip connections to narrow it.

\paragraph{Single-gate selection via Gumbel-Softmax.}
Keeps 16 logits per neuron but samples a one-hot gate via Gumbel-Softmax~\citep{jang2017categorical,maddison2017concrete} during training (hard argmax at deployment), with straight-through gradients~\citep{bengio2013estimating}. Only one gate is active per forward pass, avoiding the backward cancellation (\S\ref{sec:softmix-cancel}). \citet{kim2023deep} introduced Gumbel-Softmax with STE to logic gate networks to reduce the discretization gap. \citet{yousefi2025mindthegap} further analyze the discretization gap in DLGNs and propose methods to reduce it.

\paragraph{Input-wise Parametrization (IWP).}
\citet{ruttgers2026narrowing} reduce to 4 parameters per neuron in the corner basis $\phi_{ij}(a,b)=a^i(1{-}a)^{1-i}b^j(1{-}b)^{1-j}$ and identifies that complementation symmetry ($g\mapsto 1{-}g$) causes pairwise gradient cancellation at symmetric initialization, stalling optimization at depth.

\subsection{The multilinear extension}
\label{sec:mle}

Every 2-input Boolean function has a unique multilinear polynomial~\citep{hammer1968boolean,owen1972multilinear,odonnell2014analysis,jukna2012boolean}. In the canonical monomial basis this is
\begin{equation}
g(a,b) = c_0 + c_a\,a + c_b\,b + c_{ab}\,ab, \qquad \bc=[c_0,c_a,c_b,c_{ab}]^\top\in\R^4,
\label{eq:mle}
\end{equation}
where $c_0$ is the constant output, $c_a$ and $c_b$ are linear effects, and $c_{ab}$ is the interaction term, zero for separable gates, nonzero for interactive gates. Since the inputs are Boolean, all 16 gates have integer coefficients, e.g.\ $\text{AND}{=}[0,0,0,1]$, $\text{OR}{=}[0,1,1,{-}1]$, $\text{XOR}{=}[0,1,1,{-}2]$, $\text{A}{=}[0,1,0,0]$, $\text{FALSE}{=}[0,0,0,0]$ (full table in Appendix~\ref{app:identity1}).

The same polynomial can be written in the corner basis $\phi_{ij}(a,b){=}a^i(1{-}a)^{1-i}b^j(1{-}b)^{1-j}$, where each basis function isolates one input combination; this is the parameterization used by IWP~\citep{ruttgers2026narrowing}. The two bases are related by an invertible linear map (Appendix~\ref{app:cost}) and span the same function space; the critical difference is the gradient structure under discrete training constraints, analyzed in \S\ref{sec:theory}. Since the canonical basis has the sparsest monomials under STE (\S\ref{sec:theory}), we train directly in this 4-dimensional coefficient space. Connections to fuzzy logic, probabilistic circuits, and degree as logical complexity are in Appendix~\ref{app:background}.


\section{Method}
\label{sec:method}

We collect the 16 integer coefficient vectors into a codebook $\bG\in\Z^{16\times 4}$ of rank~4 ($\bG_j$: row $j$, a 4-vector; $G_{jk}$: scalar entry), and observe that the Soft-Mix output can be written as $z = \bc_{\mathrm{eff}}^\top\bpsi(a,b)$ with $\bc_{\mathrm{eff}}=\bpi^\top\bG$ and $\bpsi=[1,a,b,ab]^\top$. Since $\bG$ has rank~4, the map $\bpi\mapsto\bpi^\top\bG$ projects the 15-dimensional simplex onto a 4-dimensional space: \textbf{only 4 of 15 simplex directions change the neuron's output; the remaining 11 carry gradient that does no work.}

We therefore train directly in the 4-dimensional coefficient space. Each neuron stores $\bc=[c_0,c_a,c_b,c_{ab}]^\top\in\R^4$ (4 learnable parameters). Because the deployed circuit requires a valid Boolean gate, $\bc$ must be quantized to the nearest codebook entry:
\begin{equation}
\bch = \bG_{j^*},\qquad j^* = \argmin_j \|\bc-\bG_j\|_2^2.
\label{eq:snap}
\end{equation}
By Eq.~\ref{eq:mle}, $z = \bch^\top\bpsi(a,b)$ agrees with the Boolean truth table on $\{0,1\}^2$. The two methods below share this quantization and differ only in \emph{when} it is applied and \emph{how} gradients flow through it.

\subsection{Multilinear--Straight-Through Estimator (STE)}
\label{sec:multilinear-ste}

\paragraph{Forward.}
Each neuron snaps $\bc$ to the nearest gate $\bch$ via Eq.~\ref{eq:snap} and evaluates $z = \bch^\top\bpsi(a,b) = \hat c_0 + \hat c_a a + \hat c_b b + \hat c_{ab}\,ab$. The same quantized gate is used in both training and deployment, so there is \textbf{no train-to-deploy mismatch} (Appendix~\ref{app:identity1}).

\paragraph{Backward.}
The quantization $\bc\mapsto\bch$ (Eq.~\ref{eq:snap}) maps every $\bc$ in a region of $\R^4$ to the same codebook vector, so small perturbations of $\bc$ leave $\bch$ unchanged and $\partial\bch/\partial\bc = \mathbf{0}$ almost everywhere. The straight-through estimator (STE; \citet{bengio2013estimating}) bypasses this by approximating $\partial\bch/\partial\bc \approx \mathbf{I}$ in the backward pass. Let $\mathcal{L}$ denote the network loss and $\delta = \partial\mathcal{L}/\partial z$ the upstream error signal backpropagated to this neuron's output $z = \bch^\top\bpsi$. Then by the chain rule:
\begin{equation}
\frac{\partial\mathcal{L}}{\partial\bc} \approx \frac{\partial\mathcal{L}}{\partial\bch} = \frac{\partial\mathcal{L}}{\partial z} \cdot\frac{\partial z}{\partial\bch} = \delta\cdot\bpsi(a,b) = \delta\cdot[1,\,a,\,b,\,ab]^\top,
\label{eq:ste}
\end{equation} The gradient for each
coefficient $c_k$ is scaled by the corresponding basis monomial $[1, a, b, ab]_k$, which is nonzero only when all its input arguments equal~1. This creates a \textbf{hierarchical} coverage pattern: under i.i.d.\ Bernoulli$(1/2)$ inputs (idealized; confirmed empirically in \S\ref{sec:experiments}), $c_0$ receives gradient 100\% of the time (its monomial is~1), $c_a$ and $c_b$ at 50\% (active when $a{=}1$ or $b{=}1$), and $c_{ab}$ at only 25\% (active only when $a{=}b{=}1$). The expected active-component count is $1{+}\tfrac{1}{2}{+}\tfrac{1}{2}{+}\tfrac{1}{4}=2.25$, versus $4{\times}\tfrac{1}{4}=1.0$ for the corner basis, a $2.25\times$ ratio (Lemma~\ref{lem:coverage}; full table in Appendix~\ref{app:coverage}). The empirical consequence (a 17pp accuracy gap) is in \S\ref{sec:experiments}.

However, this hierarchy has a cost. \textbf{The interaction coefficient $c_{ab}$, the interaction term (second-order Sobol index~\citep{sobol1993sensitivity,owen2013variance}) that distinguishes AND from OR and XOR from XNOR, receives gradient on only 1 in 4 samples.} This is not an implementation artifact: it is intrinsic to the multilinear basis under STE, and no affine reparameterization of the basis can fix it without introducing bias or losing coherence (Proposition~\ref{prop:basis-futility}, \S\ref{sec:basis-futility}). The $c_{ab}$ starvation motivates the second variant below.

\subsection{Multilinear--Covariance Jacobian (CovJac)}
\label{sec:multilinear-covjac}

Under STE, the interaction coefficient $c_{ab}$ learns from only 25\% of training samples while $c_0$ learns from all of them. We seek a gradient mechanism that couples $c_{ab}$ to the always-active dimensions, routing gradient to it on every sample.

\paragraph{Idea: soft vector quantization.}
Instead of hard-quantizing $\bc$ to the nearest codebook entry (as in STE), we relax the quantization during training by computing a proximity-weighted average over all codebook entries~\citep{oord2017neural,agustsson2017softhard}:
\begin{equation}
\omega_j = \mathrm{softmax}\!\bigl(-\|\bc-\bG_j\|^2/\tau\bigr)_j, \qquad \bc_{\text{soft}} = \sum_j \omega_j\,\bG_j.
\label{eq:softvq}
\end{equation}
Here $\omega_j$ are proximity weights (not the Soft-Mix gate distribution $\bpi$): codebook entries closer to $\bc$ receive higher weight. During training the neuron evaluates $z = \bc_{\text{soft}}^\top\bpsi(a,b)$; at deployment it uses the hard-quantized $\bch$ as before. We fix $\tau{=}1$ throughout. The codebook is the fixed set of 16 Boolean gates, not learned.

\paragraph{Why this helps: the covariance Jacobian.}
The key property of soft-VQ is that gradients flow through the Jacobian $\partial\bc_{\text{soft}}/\partial\bc$. Since $\omega_j \propto \exp(-\|\bc-\bG_j\|^2/\tau)$, the derivative of each weight is $\partial\omega_j/\partial\bc = (2/\tau)\,\omega_j(\bG_j - \bc_{\text{soft}})$; substituting into $\partial\bc_{\text{soft}}/\partial\bc = \sum_j (\partial\omega_j/\partial\bc)\,\bG_j^\top$ gives
\begin{equation}
\mathbf{J} = \frac{\partial\bc_{\text{soft}}}{\partial\bc} = \frac{2}{\tau}\,\mathrm{Cov}_{\boldsymbol{\omega}}(\bG),
\label{eq:covjac}
\end{equation}
the $\omega$-weighted covariance of the codebook, $\mathrm{Cov}_\omega(\bG)_{ik} = \sum_j\omega_j(G_{ji}-(\bc_{\text{soft}})_i)(G_{jk}-(\bc_{\text{soft}})_k)$ (full proof in Appendix~\ref{app:ste-covjac}; see also~\citet{huh2023straightening} for STE failure modes motivating soft alternatives). This matrix is symmetric PSD, strictly positive on the diagonal, and and, crucially, \textbf{independent of the input} $(a,b)$ (Proposition~\ref{prop:covjac}, \S\ref{sec:covjac}). Because $\mathbf{J}$ is a covariance matrix over the codebook, its off-diagonal entries couple different coefficient dimensions. Writing $J_{ik}$ for the $(i,k)$ entry of $\mathbf{J}$ (0-indexed: $i{=}0$ for $c_0$, $i{=}3$ for $c_{ab}$) and $\delta=\partial\mathcal{L}/\partial z$ for the upstream scalar gradient, the gradient reaching $c_{ab}$ is:
\begin{equation}
\frac{\partial\mathcal{L}}{\partial c_{ab}} = \delta\cdot\bigl(\underbrace{1\cdot J_{03}}_{\text{always active}} + a\cdot J_{13} + b\cdot J_{23} + ab\cdot J_{33}\bigr).
\label{eq:cab-gradient}
\end{equation}
The leading term $J_{03}$ multiplies the constant monomial~1, which is active on every sample. Since $J_{03} = (2/\tau)\,\mathrm{Cov}_{\omega}(G_{\cdot 0},\, G_{\cdot 3}) \neq 0$ generically, \textbf{$c_{ab}$ receives a non-zero gradient on every training sample}, not just the 25\% where $a{=}b{=}1$.

\paragraph{Comparison of the two mechanisms.}
Both methods reduce each neuron from 16 to 4 parameters. The two gradient mechanisms are complementary: STE is \emph{support-weighted} (gradient fires when the relevant input is active) and quantizes in both phases (no train-deploy mismatch); CovJac is \emph{codebook-coupling-weighted} (gradient fires via $J_{03}$ regardless of input) and uses soft-VQ during training, quantizing only at deployment. The price of CovJac is a small train-deploy gap (${\leq}\,0.2\%$ on all main datasets; ${\leq}\,1\%$ on MONK's-2 with $k{=}136$). The $4\times$ parameter reduction yields a $4\times$ reduction in Adam optimizer state. A full method comparison is in Appendix~\ref{app:cost}.


\section{Theoretical Analysis}
\label{sec:theory}

Section~\ref{sec:method} claimed that Soft-Mix gradient cancels at uniform $\bpi$, that Multilinear-STE avoids this cancellation, and that CovJac couples $c_{ab}$ to an always-active channel. We now prove these claims. We also address whether a different polynomial basis could fix the $c_{ab}$ starvation. We show that the answer is no: no affine product basis can simultaneously achieve full coverage, coherent direction, and unbiased updates for $c_{ab}$ under STE. This is why CovJac (a different gradient mechanism, not a different basis) is needed.

\subsection{Soft-Mix backward cancellation}
\label{sec:softmix-cancel}

We denote $c_{a,j}$, $c_{b,j}$, $c_{ab,j}$ for the coefficients of gate~$j$ (i.e.\ the entries of $\bG_j$ from \S\ref{sec:method}).

\begin{lemma}[Zero-sum gate derivative symmetry; elementary]
\label{lem:zerosum}
For the 16 Boolean coefficient vectors,
\begin{equation}
\sum_{j=0}^{15} c_{a,j} = \sum_{j=0}^{15} c_{b,j} = \sum_{j=0}^{15} c_{ab,j} = 0.
\label{eq:zerosum}
\end{equation}
\end{lemma}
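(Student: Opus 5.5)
The plan is to express each gate's multilinear coefficients in terms of its truth table and then sum over all 16 gates. A 2-input gate $g$ is determined by its four values $g_{00},g_{01},g_{10},g_{11}\in\{0,1\}$, where $g_{ij}=g(i,j)$. Matching Eq.~\ref{eq:mle} at the four Boolean points gives, for gate $j$,
\begin{equation*}
c_{0,j}=g_{00},\quad c_{a,j}=g_{10}-g_{00},\quad c_{b,j}=g_{01}-g_{00},\quad c_{ab,j}=g_{11}-g_{10}-g_{01}+g_{00}.
\end{equation*}
This is the invertible linear map between the corner and monomial bases referred to in \S\ref{sec:mle}. Each of $c_a$, $c_b$, $c_{ab}$ is therefore a linear form in the truth-table values whose coefficients sum to zero. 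Equivalently, each is orthogonal to the all-ones vector in the truth-table coordinates, while $c_0$ is the only form that is not.

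Summing over the 16 gates amounts to summing over all of $\{0,1\}^4$, since the gates are in bijection with truth tables. For each fixed input position $(i,j)$, exactly half of the 16 truth tables have $g_{ij}=1$, so $\sum_{\text{gates}} g_{ij}=8$ for every position. By linearity,
\begin{equation*}
\sum_j c_{a,j}=8-8=0,\qquad \sum_j c_{b,j}=0,\qquad \sum_j c_{ab,j}=8-8-8+8=0.
\end{equation*}

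A second argument uses the complementation involution $g\mapsto 1-g$, which permutes the 16 gates. It sends $\bc$ to $[1-c_0,-c_a,-c_b,-c_{ab}]$, and it has no fixed points because $g=1-g$ is impossible for Boolean values. The gates therefore split into 8 complementary pairs, and the non-constant coefficients cancel within each pair. This pairing also connects the lemma to the complementation symmetry noted for IWP.

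No step is a real obstacle. The only point needing care is getting the inversion formulas right, especially the sign pattern of $c_{ab}$. I would check them against AND $=[0,0,0,1]$ and XOR $=[0,1,1,-2]$ as a sanity test.
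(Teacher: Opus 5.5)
Your proof is correct. Your second argument is exactly the paper's proof: the complementation map $g\mapsto 1-g$ has no fixed points and sends $\bc$ to $[1-c_0,-c_a,-c_b,-c_{ab}]$, so the gates form 8 pairs that cancel in the non-constant coordinates.

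Your primary argument takes a genuinely different route. You write $c_a$, $c_b$, $c_{ab}$ as linear forms in the truth table whose coefficients sum to zero. You then use that each corner value equals $1$ for exactly 8 of the 16 gates, so $\sum_j g_j\equiv 8$. By uniqueness of the multilinear representation, $\sum_j \bG_j=(8,0,0,0)$.

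This buys a slightly stronger and more unified statement. It computes the whole codebook mean $(1/2,0,0,0)$ at once. The forward identity $\tfrac{1}{16}\sum_j g_j=1/2$, which the paper notes separately, and the backward zero-sum then become a single fact. It also extends verbatim to any codebook whose pointwise sum is constant, such as all Boolean functions of $n$ inputs.

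The pairing argument gives a finer, local cancellation: terms cancel within each complementary pair. Hence $\partial z/\partial a=0$ holds for every $\bpi$ with $\pi_j=\pi_{\bar j}$, not only at the uniform point. This is the structure behind the IWP symmetry the paper cites, and your counting argument alone does not yield it.
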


\emph{Proof sketch.} Boolean functions are closed under pointwise complementation $g\mapsto\bar g=1-g$, which is an involution with no fixed points; the set $\mathcal{B}$ of 16 Boolean functions therefore splits into 8 complementary pairs. By Eq.~\ref{eq:mle}, $\bar g = 1-g = (1{-}c_0) - c_a a - c_b b - c_{ab} ab$, so $\bar c_a = -c_a$, $\bar c_b = -c_b$, $\bar c_{ab} = -c_{ab}$. Each complementary pair sums to zero in the non-constant coefficients. Summing pair-by-pair gives the claim. \emph{(Attribution: the complementation pairing is the same observation used by~\citet[Eq.~7]{ruttgers2026narrowing} in the corner basis. The zero-sum follows from standard Boolean function symmetry~\citep[Ch.~1]{crama2011boolean,odonnell2014analysis}.)}

\begin{proposition}[Soft-Mix backward vanishes at uniform $\bpi$]
\label{prop:softmix-cancel}
\citet[Eq.~7]{ruttgers2026narrowing} showed that this complementation symmetry causes approximate gradient cancellation at symmetric initialization. We sharpen this to an exact identity that holds for all inputs, not only at initialization: for $z=\sum_{j=0}^{15}\pi_j g_j(a,b)$ and $\pi_j=1/16$ for all $j$,
\begin{equation}
\frac{\partial z}{\partial a}(a,b) = \frac{\partial z}{\partial b}(a,b) = 0 \quad\text{for every }(a,b)\in[0,1]^2.
\label{eq:softmix-cancel}
\end{equation}
\end{proposition}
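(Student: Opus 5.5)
The plan is to reduce the claim to Lemma~\ref{lem:zerosum} by writing the mixture in the monomial basis of Eq.~\ref{eq:mle} and differentiating term by term. By linearity, $z=\sum_j \pi_j g_j(a,b) = \bc_{\mathrm{eff}}^\top\bpsi(a,b)$ with $\bc_{\mathrm{eff}}=\bpi^\top\bG$, as already observed in \S\ref{sec:method}. Each $g_j$ is extended to $[0,1]^2$ by its multilinear polynomial $c_{0,j}+c_{a,j}a+c_{b,j}b+c_{ab,j}ab$, so $z$ is itself a multilinear polynomial with coefficients $\sum_j\pi_j c_{0,j}$, $\sum_j \pi_j c_{a,j}$, $\sum_j\pi_j c_{b,j}$, $\sum_j\pi_j c_{ab,j}$.

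Next, differentiate. For every $(a,b)\in[0,1]^2$,
\begin{equation*}
\frac{\partial z}{\partial a}(a,b)=\sum_{j=0}^{15}\pi_j\bigl(c_{a,j}+c_{ab,j}\,b\bigr),\qquad
\frac{\partial z}{\partial b}(a,b)=\sum_{j=0}^{15}\pi_j\bigl(c_{b,j}+c_{ab,j}\,a\bigr).
\end{equation*}
Substituting $\pi_j=1/16$ pulls the constant out, so $\partial z/\partial a = \tfrac1{16}\bigl(\sum_j c_{a,j} + b\sum_j c_{ab,j}\bigr)$, and symmetrically for $\partial z/\partial b$. Lemma~\ref{lem:zerosum} makes each of the three sums vanish, which gives Eq.~\ref{eq:softmix-cancel} identically in $(a,b)$. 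The constant coefficient $\sum_j c_{0,j}/16=1/2$ does not enter either derivative, which is consistent with $z\equiv 1/2$ at uniform $\bpi$.

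There is no real analytic obstacle here. The only point needing care is to state that the relaxed gates $g_j$ on $[0,1]^2$ are exactly their multilinear extensions (Eq.~\ref{eq:mle}), as in the DLGN real-valued gate table. Under that reading, the identity holds pointwise for all inputs and not just on Boolean corners. For robustness one could also cross-check via the complementation pairing directly: $g_j+\bar g_j\equiv 1$ on $[0,1]^2$, so $\sum_j g_j\equiv 8$ and its partial derivatives vanish.

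Regarding the wording ``not only at initialization'' in the statement: the proof should make explicit that the conclusion concerns every input $(a,b)$ at the uniform $\bpi$. It holds for any input values arriving from earlier layers, not merely at a particular data point.
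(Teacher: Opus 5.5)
Your proposal is correct and follows the paper's proof: write $z$ in the monomial basis, differentiate to get $\sum_j \pi_j(c_{a,j}+c_{ab,j}b)$, factor out $1/16$, and apply Lemma~\ref{lem:zerosum}. Your cross-check via complementation pairs ($\sum_j g_j \equiv 8$, hence $z\equiv 1/2$) matches the forward-pass identity the paper notes right after its proof.
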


\emph{Proof sketch.} Substituting the multilinear form and differentiating, $\partial z/\partial a = \sum_j\pi_j(c_{a,j}+c_{ab,j}b)$. At $\pi_j=1/16$, this equals $(1/16)[\sum_j c_{a,j} + b\sum_j c_{ab,j}]=0$ by Lemma~\ref{lem:zerosum}. The same holds for $\partial z/\partial b$.

\paragraph{Dynamical interpretation (Observation~1; informal, not a theorem).}
A near-uniform bound (proved in Appendix~\ref{app:cancellation}) gives $|\partial z/\partial a|\leq \|\bpi-\mathbf{u}\|_1$ where $\mathbf{u}=(1/16,\ldots,1/16)$ is the uniform distribution, so if $\|\bpi^{(l)}-\mathbf{u}\|_1\leq\epsilon$ at layer~$l$, the per-layer signal attenuation is at most~$\epsilon$.  At initialization with logit std $\sigma{=}1.0$, the expected $\|\bpi-\mathbf{u}\|_1\approx 0.72$, giving per-layer signal~${\leq}\,0.72$; over $L$ layers the product is $O(0.72^L)$, which at $L{=}12$ is~${\approx}\,0.019$. \emph{What is rigorous}: the per-neuron bound $\|\bpi-\mathbf{u}\|_1$. \emph{What is not}: that neurons remain near-uniform throughout training rather than committing quickly; this is confirmed empirically (\S\ref{sec:diagnostics}, Figure~\ref{fig:cancellation}).

\paragraph{Single-gate-forward corollary.}
Proposition~\ref{prop:softmix-cancel}'s cancellation requires summing over all 16 gates: only then do the Lemma~\ref{lem:zerosum} zero-sums cancel inside $\partial z/\partial a$. Any method committing to a \emph{single} gate in the forward pass (whether by hard quantization or stochastic sampling~\citep{jang2017categorical,maddison2017concrete}) avoids the cancellation by construction, regardless of parameter count. This prediction is confirmed by the depth experiments (\S\ref{sec:depth}, Figure~\ref{fig:depth-scaling}).

\subsection{Multilinear-STE avoids the cancellation}
\label{sec:multilinear-ste-theory}

\begin{proposition}[Multilinear-STE backward non-cancellation]
\label{prop:ste-noncancel}
In Multilinear-STE with committed gate $\bch=(c_0,c_a,c_b,c_{ab})$, the STE backward derivative is $\partial z/\partial a = c_a + c_{ab}b$, and for every Boolean gate depending on input $a$, $\max_{b\in\{0,1\}}|c_a+c_{ab}b|\geq 1$.
\end{proposition}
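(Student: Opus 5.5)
The derivative formula follows directly from the forward pass. In Multilinear-STE the output is $z=\hat c_0+\hat c_a a+\hat c_b b+\hat c_{ab}ab$ with $\bch$ held fixed (the STE only changes how gradients reach $\bc$, not how they reach the inputs). Differentiating in $a$ therefore gives $\partial z/\partial a=c_a+c_{ab}b$ at once. The substance of the proposition is the lower bound.

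\textbf{Lower bound via finite differences.} Because $z$ is affine in $a$ for fixed $b$, the derivative equals a difference of truth-table entries:
\begin{equation*}
c_a+c_{ab}b = g(1,b)-g(0,b), \qquad b\in\{0,1\}.
\end{equation*}
This can be verified directly from Eq.~\ref{eq:mle}: $g(1,b)-g(0,b)=c_a+c_{ab}b$. Since $g$ is Boolean, each difference lies in $\{-1,0,1\}$. Saying that $g$ depends on $a$ means there is some $b\in\{0,1\}$ with $g(1,b)\neq g(0,b)$. For that $b$, $|c_a+c_{ab}b|=1$, so $\max_{b\in\{0,1\}}|c_a+c_{ab}b|\geq 1$, with equality in fact. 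The symmetric statement for $b$ follows identically.

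\textbf{Main obstacle.} The only point needing care is interpretation. First, "depending on input $a$" must be read as Boolean dependence, i.e.\ existence of a sensitive $b$. Equivalently, $(c_a,c_{ab})\neq(0,0)$, since the map $(c_a,c_{ab})\mapsto(c_a,\,c_a+c_{ab})$ is invertible. Second, the bound holds at Boolean $b$, not necessarily at all $b\in[0,1]$: for XOR, $c_a+c_{ab}b=1-2b$ vanishes at $b=1/2$. So I would state explicitly that the maximum ranges over $b\in\{0,1\}$, which is the regime of the forward pass. As a sanity check, I would enumerate the 10 gates depending on $a$ from the table in Appendix~\ref{app:identity1} and confirm the values of $(c_a,\,c_a+c_{ab})$. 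In contrast with Proposition~\ref{prop:softmix-cancel}, no averaging over gates occurs, so the zero-sums of Lemma~\ref{lem:zerosum} never enter.
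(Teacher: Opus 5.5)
Your proof is correct, but it takes a different route from the paper's.

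\textbf{The paper's route.} The paper splits on the integer coefficients. If $c_a\neq 0$, it evaluates at $b=0$ and gets $|c_a|\geq 1$. Otherwise $c_{ab}\neq 0$, and it evaluates at $b=1$ to get $|c_{ab}|\geq 1$. This uses only that the coefficients are integers (Identity~1). The paper states without argument that dependence on $a$ is equivalent to $(c_a,c_{ab})\neq(0,0)$.

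\textbf{Your route.} You identify $c_a+c_{ab}b$ with the truth-table difference $g(1,b)-g(0,b)$. Dependence on $a$ then literally means this difference is nonzero at some Boolean $b$, and the value set $\{-1,0,1\}$ comes for free.

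\textbf{What each buys.} Your argument gives more than the paper's:
\begin{itemize}
\item it shows the maximum is exactly $1$;
\item it proves the equivalence with $(c_a,c_{ab})\neq(0,0)$ via the invertible map $(c_a,c_{ab})\mapsto(c_a,c_a+c_{ab})$, rather than assuming it;
\item its upper half, $|c_a+c_{ab}b|\leq 1$ on Boolean $b$ for every gate, is the fact the paper only checks by enumeration when deriving the near-uniform bound $|\partial z/\partial a|\leq\|\bpi-\mathbf{u}\|_1$.
\end{itemize}
The paper's case split is equally short for the bare inequality but yields nothing beyond it. Your remark that the bound fails off the cube (XOR at $b=1/2$) is correct and consistent with the statement's restriction to $b\in\{0,1\}$.

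\textbf{A small slip in your sanity check.} The number of gates depending on $a$ is 12, not 10: all 16 except FALSE, TRUE, $b$, $\neg b$. This matches the 12 rows of Table~\ref{tab:ste-enumeration}. The figure 10 counts the gates that depend on both inputs. The enumeration you want is that table, not Appendix~\ref{app:identity1}, which contains no full gate table.
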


\emph{Proof sketch.} The STE rule treats $\bch$ as a parameter; differentiating the polynomial gives $c_a + c_{ab}b$. The gate depends on $a$ iff $(c_a,c_{ab})\neq(0,0)$; case analysis on $c_a\neq 0$ and $c_a=0,c_{ab}\neq 0$ evaluated at $b=0$ or $b=1$ respectively each yields a value of magnitude $\geq 1$ since the integer coefficients are in $\{-2,\ldots,2\}$. Full proof and a 12-row enumeration of all $a$-dependent gates are in Appendix~\ref{app:ste-covjac}.

\subsection{CovJac selection Jacobian}
\label{sec:covjac}

Proposition~\ref{prop:ste-noncancel} shows that STE avoids cancellation but still starves $c_{ab}$ (25\% coverage). We now show that the CovJac Jacobian has three properties that together guarantee $c_{ab}$ receives gradient on \emph{every} sample.

\begin{proposition}[CovJac input-independence, PSD, positive diagonal]
\label{prop:covjac}
For $\bc\mapsto\bc_{\text{soft}}(\bc) = \sum_j\omega_j\bG_j$, the Jacobian is
\begin{equation}
\mathbf{J}(\bc) = \frac{2}{\tau}\,\mathrm{Cov}_{\boldsymbol{\omega}}(\bG),
\label{eq:covjac-theory}
\end{equation}
i.e.\ (i) input-independent (depends only on $\bc$, not $(a,b)$); (ii) symmetric PSD; (iii) strictly positive diagonal for every coordinate throughout training at $\tau{>}0$ on the 16-Boolean codebook.
\end{proposition}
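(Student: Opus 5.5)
The plan is to compute the Jacobian directly from the softmax definition in Eq.~\ref{eq:softvq}, then read off (i)--(iii) from the covariance form. Write $s_j(\bc) = -\|\bc-\bG_j\|^2/\tau$, so $\nabla_{\bc} s_j = -(2/\tau)(\bc-\bG_j)$. The standard softmax derivative $\partial\omega_j/\partial s_l = \omega_j(\mathbb{1}[j=l]-\omega_l)$ gives
\[
\frac{\partial\omega_j}{\partial\bc} = \omega_j\Bigl(\nabla s_j - \sum_l\omega_l\nabla s_l\Bigr) = \frac{2}{\tau}\,\omega_j\bigl(\bG_j - \bc_{\text{soft}}\bigr),
\]
where the $\bc$ terms cancel because $\sum_l\omega_l=1$. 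Then $\mathbf{J} = \sum_j \bG_j\,(\partial\omega_j/\partial\bc)^\top = (2/\tau)\sum_j\omega_j\bG_j(\bG_j-\bc_{\text{soft}})^\top$. Since $\sum_j\omega_j(\bG_j-\bc_{\text{soft}})=0$, we may subtract $\bc_{\text{soft}}$ from the left factor for free, which yields $(2/\tau)\sum_j\omega_j(\bG_j-\bc_{\text{soft}})(\bG_j-\bc_{\text{soft}})^\top = (2/\tau)\mathrm{Cov}_{\boldsymbol\omega}(\bG)$. This establishes Eq.~\ref{eq:covjac-theory}.

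Properties (i) and (ii) are then immediate. For (i), the weights $\omega_j$ depend only on $\bc$, $\bG$ and $\tau$, never on $(a,b)$; the inputs enter only through $\bpsi$ in the forward evaluation $z=\bc_{\text{soft}}^\top\bpsi$, which lies downstream of $\mathbf{J}$. For (ii), $\mathbf{J}$ is a nonnegative combination of outer products $\mathbf{v}_j\mathbf{v}_j^\top$, so it is symmetric and satisfies $\mathbf{x}^\top\mathbf{J}\mathbf{x} = (2/\tau)\sum_j\omega_j(\mathbf{v}_j^\top\mathbf{x})^2\ge 0$.

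The only step with real content is (iii). The diagonal entry is $J_{kk}=(2/\tau)\sum_j\omega_j(G_{jk}-(\bc_{\text{soft}})_k)^2$. At finite $\tau>0$ and finite $\bc$, every $\omega_j$ is strictly positive, because a softmax of finite logits has full support. So $J_{kk}=0$ would force $G_{jk}$ to be constant in $j$ for all 16 rows. I would then check that no coordinate column of $\bG$ is constant. $c_0$ takes both values $0$ (AND) and $1$ (NAND). $c_a$ takes $0$ (FALSE) and $1$ (A), and $c_b$ behaves symmetrically. $c_{ab}$ takes $0$ (FALSE) and $1$ (AND). Hence $J_{kk}>0$ for every $k$ and every finite $\bc$, which is exactly ``throughout training.''

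The main obstacle I anticipate is the quantifier ``throughout training.'' Positivity holds pointwise for each finite $\bc$, but it is not bounded away from zero: as $\|\bc\|\to\infty$, or as $\bc$ approaches a codebook entry with small $\tau$, the weights concentrate and $J_{kk}$ decays exponentially. I would therefore state (iii) as strict positivity for every finite parameter value rather than as a uniform lower bound, and note that the dominant lower-order contribution comes from the nearest neighbours of the closest codeword. Lemma~\ref{lem:zerosum} is not needed; the nonconstant-column check is elementary from the table in Appendix~\ref{app:identity1}.
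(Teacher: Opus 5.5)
Your proof is correct and follows the paper's argument essentially step for step: the same softmax chain-rule identity $\partial\omega_j/\partial\bc = (2/\tau)\,\omega_j(\bG_j-\bc_{\text{soft}})$, recentering to the covariance form, PSD via a nonnegative weighted sum of squares, and a positive diagonal from full softmax support together with the non-constant codebook columns. Your caveat that (iii) gives strict positivity at every finite $\bc$ rather than a uniform lower bound is an accurate reading of ``throughout training.''
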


\emph{Proof sketch.} The softmax Jacobian is a standard covariance matrix (see e.g.\ \citet[Eq.~4.106]{bishop2006pattern} for the softmax case). Here the chain rule gives $\partial\omega_j/\partial c_m = (2/\tau)\omega_j(G_{jm}-(\bc_{\text{soft}})_m)$. Since $(\bc_{\text{soft}})_i = \sum_j \omega_j G_{ji}$, differentiating by $c_m$ and substituting yields $J_{im} = (2/\tau)\bigl[\sum_j\omega_j G_{ji}G_{jm} - (\sum_j\omega_j G_{ji})(\sum_j\omega_j G_{jm})\bigr] = (2/\tau)\,\mathrm{Cov}_\omega(G_{\cdot i},G_{\cdot m})$, completing (i). For any $\mathbf{v}$, $\mathbf{v}^\top\mathbf{J}\mathbf{v} = (2/\tau)\Var_\omega(\sum_i v_i G_{\cdot i})\geq 0$ gives PSD. The diagonal $J_{mm}=(2/\tau)\Var_\omega(G_{\cdot m})>0$ since every coordinate takes at least two distinct values on the 16-Boolean codebook and softmax at finite $\tau$ has full support. Full proof in Appendix~\ref{app:ste-covjac}.

\noindent The cross-coefficient coupling consequence ($c_{ab}$ receives
gradient on every sample via $J_{03}$, Eq.~\ref{eq:cab-gradient}) follows directly from Proposition~\ref{prop:covjac}'s positive diagonal and input-independence.

\subsection{Basis futility}
\label{sec:basis-futility}

A natural question is whether a different polynomial basis could fix the $c_{ab}$ starvation without changing the gradient mechanism. The following proposition shows that the answer is no: any affine reparameterization of the basis trades one desirable property for another.

\begin{proposition}[No affine product basis simultaneously fixes STE]
\label{prop:basis-futility}
Let $\phi(x)=\alpha+\beta x$ with $\beta\neq 0$ and $\tilde{\bpsi}(a,b)=(1,\phi(a),\phi(b),\phi(a)\phi(b))^\top$. Define: \emph{coverage} $\rho = \mathbb{P}(\tilde\psi_3(a,b)\neq 0)$ (fraction of samples where the interaction component receives gradient), \emph{coherence} $\kappa = |\E[\tilde\psi_3]|/\E[|\tilde\psi_3|]$ (alignment of the gradient direction), and \emph{STE bias} $B = \inf_{D} \E[\|\tilde{\bpsi} - D\bpsi_{\mathrm{can}}\|^2]$ where $D$ ranges over positive diagonal matrices (distance to the canonical basis modulo positive rescaling). Full definitions in Appendix~\ref{app:basis-futility}. Then no affine product basis simultaneously attains $\rho=1$, $\kappa=1$, and $B=0$. In particular: the canonical basis $\phi(x){=}\beta x$ ($\beta{>}0$) is the unique $B{=}0$ case, with $\rho=1/4,\kappa=1$; all other affine bases give $\rho=1$ but either $\kappa<1$ or $B>0$ (or both).
\end{proposition}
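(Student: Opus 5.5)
The plan is a case analysis on the root of $\phi$, i.e.\ on where $\phi$ vanishes, under i.i.d.\ Bernoulli$(1/2)$ inputs as in Lemma~\ref{lem:coverage}. With $a,b\in\{0,1\}$ we have $\tilde\psi_3=\phi(a)\phi(b)$, and $\phi$ takes only the two values $\phi(0)=\alpha$ and $\phi(1)=\alpha+\beta$. Since $\beta\neq0$, at most one of them is zero. This gives three exhaustive cases:
\begin{itemize}
\item[(A)] $\alpha=0$;
\item[(B)] $\alpha+\beta=0$;
\item[(C)] $\alpha\neq0$ and $\alpha+\beta\neq0$.
\end{itemize}

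\textbf{Step 1 (coverage).}
\begin{itemize}
\item[(A)] $\tilde\psi_3\neq0$ only when $a=b=1$, so $\rho=1/4$.
\item[(B)] $\tilde\psi_3\neq0$ only when $a=b=0$, so $\rho=1/4$.
\item[(C)] $\phi(a)\phi(b)$ never vanishes, so $\rho=1$.
\end{itemize}
Hence $\rho=1$ forces case (C).

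\textbf{Step 2 (coherence).} Write $\kappa=|\E[\phi(a)]|^2/\E[|\phi(a)|]^2$, using independence of $a$ and $b$ for both numerator and denominator. By the triangle inequality, $\kappa=1$ iff $\phi(a)$ has constant sign almost surely. In case (C) this means $\alpha(\alpha+\beta)>0$. So $\kappa<1$ exactly when $\alpha$ and $\alpha+\beta$ have opposite signs (for example the $\pm1$ basis $\phi(x)=2x-1$, which gives $\kappa=0$). Cases (A) and (B) have $\kappa=1$ trivially.

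\textbf{Step 3 (bias).} This is the step I expect to be the main obstacle, because $B$ is an infimum over positive diagonal $D=\mathrm{diag}(d_0,\dots,d_3)$ and I must show it is attained at zero only in the canonical case. The objective separates by coordinate:
\[
B=\sum_k \inf_{d_k>0}\E\bigl[(\tilde\psi_k-d_k\psi_{\mathrm{can},k})^2\bigr].
\]
The constant coordinate contributes $0$ (take $d_0=1$). For coordinate $1$, $B=0$ requires $\phi(a)=d_1a$ almost surely. At $a=0$ this gives $\alpha=0$; at $a=1$ it gives $\beta=d_1>0$.

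Conversely, for every other basis each term is a strictly positive infimum. If $\alpha\neq0$, the $a=0$ outcome contributes at least $\tfrac12\alpha^2$ regardless of $d_1$. If $\alpha=0$ and $\beta<0$, the best choice $d_1\to0^+$ still leaves $\tfrac12\beta^2>0$. This also uses that the infimum over the open set $d>0$ behaves like a closed one, so the infimum stays strictly positive and is not an artifact of the open constraint. The same reasoning applies to coordinates $2$ and $3$. Hence $B=0$ iff $\phi(x)=\beta x$ with $\beta>0$, and this case falls in (A), so it has $\rho=1/4$ and $\kappa=1$.

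\textbf{Combining.} $B=0$ forces the canonical basis, which has $\rho=1/4$, so $\rho=1$, $\kappa=1$ and $B=0$ cannot hold together. Any non-canonical basis has $B>0$. Among these, case (C) gives $\rho=1$ with $\kappa<1$ precisely when $\alpha$ and $\alpha+\beta$ have opposite signs. This yields the stated trichotomy. One caveat: case (B) and the case $\alpha=0,\beta<0$ still have $\rho=1/4$, so the phrase ``all other affine bases give $\rho=1$'' in the statement needs the qualification that these bases are excluded. If the appendix definitions do not exclude them, I would note this explicitly rather than claim $\rho=1$ for them.
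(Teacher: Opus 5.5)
Your proof is correct and follows essentially the paper's own route. The case split is the same: which of $\phi(0),\phi(1)$ vanishes, and then the sign of $\phi(0)\phi(1)$. As in the paper, $B>0$ is forced because $\tilde\psi_1=\phi(a)$ is not a positive multiple of $a$. Your caveat is justified, since the paper's own Part~(i') finds $\rho=1/4$ and $B>0$ when $\phi(1)=0$. You also close the case $\alpha=0,\ \beta<0$, which the paper skips by assuming $\beta>0$.

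One small overstatement: coordinate~3 contributes nothing when $\alpha=0,\ \beta<0$, because $\tilde\psi_3=\beta^2ab$ is matched exactly by $d_3=\beta^2$. This does not affect the argument, since coordinate~1 alone already gives $B\geq\tfrac12\beta^2>0$.
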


\emph{Proof sketch.} Parameterize $(p,q)=(\phi(0),\phi(1))$. The canonical basis ($p{=}0$, $\phi(x){=}\beta x$) gives $(\rho,\kappa,B)=(1/4,1,0)$: low coverage but no bias. All other choices have $\rho{=}1$ but introduce either coherence loss ($\kappa{<}1$, e.g.\ Walsh basis with $p{=}{-}q$ gives $\kappa{=}0$) or STE bias ($B{>}0$, e.g.\ smoothed basis with $p,q{>}0$). Full case-split in Appendix~\ref{app:basis-futility}; empirical confirmation in Appendix~\ref{app:ste-failures}.


\section{Experiments}
\label{sec:experiments}

We validate the theoretical predictions on seven binary-input datasets: Adult, Splice~\citep{dua2019uci}, MNIST~\citep{lecun1998mnist}, SVHN~\citep{netzer2011svhn}, CIFAR-10/100~\citep{krizhevsky2009cifar}, and MONK's-2~\citep{thrun1991monks}. All experiments use the DLGN architecture~\citep{petersen2022deep} with $L{=}6$ layers, GroupSum readout, Adam~\citep{kingma2015adam} (lr$=0.01$), batch 512, and 3 seeds. We report \textbf{last-10 accuracy} (mean of final 10 evaluation checkpoints). We compare \emph{Soft-Mix} (16p), \emph{Gumbel-ST} (16p), \emph{Multilinear-STE} (ours, 4p), and \emph{Multilinear-CovJac} (ours, 4p); CovJac uses $\tau{=}1$ except Adult ($\tau{=}0.1$; Appendix~\ref{app:tau}). Full setup details in Appendix~\ref{app:hyperparams}; additional results including depth tables, width scaling, architecture baselines, and temperature sensitivity in Appendix~\ref{app:experiments-details}.

\subsection{Cross-dataset comparison}
\label{sec:cross-dataset}

Table~\ref{tab:mnist-main} shows that CovJac matches or exceeds all baselines on every dataset using $4\times$ fewer parameters. The CovJac advantage over Soft-Mix is small on easy tasks (+0.01pp on MNIST) but grows on harder ones (+0.84pp on CIFAR-10, +0.45pp on CIFAR-100 with the tightest variance). On Adult, the default $\tau{=}1.0$ causes CovJac's $c_{ab}$ coupling to add overhead on this low-interaction task; lowering $\tau$ to $0.1$ reduces the coupling and recovers M-STE-level accuracy (84.93\%).

The most informative comparison is CovJac vs STE, since both use 4 parameters and differ only in the gradient mechanism. This gap widens monotonically with task interaction demand: $+0.02$pp (Adult) $\to$ $+0.13$pp (Splice) $\to$ $+0.21$pp (MNIST) $\to$ $+2.22$pp (SVHN) $\to$ $+2.95$pp (CIFAR-10) $\to$ $+4.35$pp (CIFAR-100) $\to$ $+7.53$pp (MONK's-2; Table~\ref{tab:h7-cross-dataset} and Figure~\ref{fig:interaction-scaling} in Appendix~\ref{app:h7}). On the most interaction-dense tasks (CIFAR-10/100), STE underperforms even Soft-Mix (56.02\% / 24.02\% vs 58.13\% / 27.92\%), confirming that $c_{ab}$ starvation is the binding constraint. CovJac is the best method on all seven datasets; under a null of equal performance, a sign test gives $p = 2^{-7} < 0.01$ (Appendix~\ref{app:significance}).

\begin{table}[t]
\centering
\footnotesize
\caption{Cross-dataset comparison (last-10 accuracy $\pm$ std, 3 seeds;
best-checkpoint accuracy in Appendix Table~\ref{tab:best-acc}). \textbf{Bold} = best per column. $L{=}6$; $k$: MNIST 64k, Adult/Splice 4k, SVHN/CIFAR-10 128k, CIFAR-100 256k, MONK's-2 136 (100k iters for CIFAR-100; 10k for MONK's-2; rest 50k).}
\label{tab:mnist-main}
\scriptsize
\begin{tabular}{@{}lcccccccc@{}}
\toprule
Method & \#p/n & Adult & Splice & MONK's-2 & MNIST & SVHN & CIFAR-10 & CIFAR-100 \\
\midrule
Soft-Mix       & 16 & 84.68{\tiny$\pm$.07} & 97.01{\tiny$\pm$.11} & 81.32{\tiny$\pm$1.09} & 98.29{\tiny$\pm$.04} & 68.21{\tiny$\pm$.08} & 58.13{\tiny$\pm$.12} & 27.92{\tiny$\pm$.43} \\
Gumbel-ST      & 16 & 84.69{\tiny$\pm$.06} & 96.81{\tiny$\pm$.30} & 81.46{\tiny$\pm$.89} & 98.17{\tiny$\pm$.07} & 66.12{\tiny$\pm$.38} & 58.24{\tiny$\pm$.25} & 27.72{\tiny$\pm$.40} \\
M-STE (ours)   & 4  & 84.91{\tiny$\pm$.01} & 97.07{\tiny$\pm$.00} & 78.53{\tiny$\pm$1.25} & 98.09{\tiny$\pm$.02} & 66.69{\tiny$\pm$.11} & 56.02{\tiny$\pm$.23} & 24.02{\tiny$\pm$.17} \\
M-CovJac (ours)& 4  & \textbf{84.93}{\tiny$\pm$.03}$^\dagger$ & \textbf{97.20}{\tiny$\pm$.23} & \textbf{86.06}{\tiny$\pm$2.17} & \textbf{98.30}{\tiny$\pm$.04} & \textbf{68.91}{\tiny$\pm$.13} & \textbf{58.97}{\tiny$\pm$.26} & \textbf{28.37}{\tiny$\pm$.22} \\
\bottomrule
\end{tabular}

{\tiny $^\dagger$Adult: $\tau{=}0.1$ (all other datasets: $\tau{=}1.0$;
Appendix~\ref{app:tau}).}
\end{table}

\subsection{Depth stability}
\label{sec:depth}

\begin{figure}[t]
\centering
\begin{minipage}[t]{0.37\linewidth}
\vspace{0pt}
\centering
\includegraphics[width=\linewidth]{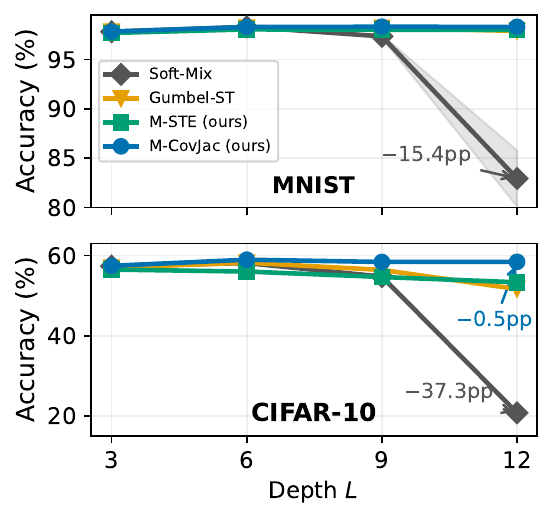}
\caption{Accuracy vs depth (3 seeds). Soft-Mix collapses on both
MNIST and CIFAR-10; CovJac holds ($-0.5$pp on CIFAR-10 at $L{=}12$).}
\label{fig:depth-scaling}
\end{minipage}%
\hfill
\begin{minipage}[t]{0.60\linewidth}
\vspace{0pt}
\centering
\footnotesize
\captionof{table}{Ablation on MNIST ($k{=}64\text{k}$, $L{=}6$, 3 seeds).
Each row differs from the one above by exactly one design choice. The +17.25pp IWP-STE$\to$M-STE gap isolates the basis effect (Appendix~\ref{app:basis-constraint}); the +0.21pp
M-STE$\to$M-CovJac gap isolates the gradient mechanism.
\textbf{Bold} = best. Last-10 accuracy.}
\label{tab:ablation}
\begin{tabular}{@{}lclr@{}}
\toprule
Method & Acc (\%) & Design change & $\Delta$ \\
\midrule
IWP-STE$^\dagger$    & 80.84$\pm$2.62 & corner + STE & -- \\
M-STE (ours) & 98.09$\pm$0.02 & $\to$ multilinear & +17.25 \\
M-CovJac (ours) & \textbf{98.30$\pm$0.04} & $\to$ CovJac & +0.21 \\
Soft-Mix             & 98.29$\pm$0.04 & $\to$ 16p softmax & $-$0.01 \\
\midrule
\textit{IWP$^\ddagger$}       & \textit{77.95$\pm$2.60} & \textit{continuous} & -- \\
\textit{M-free$^\S$}          & \textit{98.06$\pm$0.11} & \textit{continuous} & -- \\
\bottomrule
\end{tabular}

{\tiny $^\dagger$Our ablation. $^\ddagger$\citep{ruttgers2026narrowing}.
$^\S$\textit{Italic}: not Boolean gates.}
\end{minipage}
\end{figure}

\paragraph{Depth (Figure~\ref{fig:depth-scaling}).}
As predicted by Proposition~\ref{prop:softmix-cancel} and Observation~1, Soft-Mix collapses at depth because its gradient cancellation compounds across layers. On MNIST, Soft-Mix drops $-15.4$pp from $L{=}6$ to $L{=}12$; on CIFAR-10, the collapse reaches $-37.3$pp. In contrast, all single-gate-forward methods survive: CovJac is the most depth-stable ($-0.5$pp on CIFAR-10, essentially zero on MNIST), followed by STE and Gumbel-ST. On Adult with $\tau{=}0.1$, CovJac degrades only $-0.04$pp from $L{=}6$ to $L{=}12$ (Appendix Table~\ref{tab:adult-depth}). Doubling the training budget to 100k iterations partially recovers
Soft-Mix but still leaves a 6.0pp gap to single-gate methods
(Appendix Table~\ref{tab:depth-100k}), confirming that the penalty is structural, not a training-speed lag. Full depth tables for all datasets are in Appendix~\ref{app:depth-h6}.

\paragraph{Ablation (Table~\ref{tab:ablation}).}
\label{sec:ablation}
The +17.25pp IWP-STE $\to$ M-STE gap is purely from the basis change (both span the same function space; Appendix~\ref{app:cost}); the effect is even larger on CIFAR-10 (+22.65pp) and present on Adult (+8.50pp). The $2{\times}2$ basis$\times$constraint factorial (Appendix~\ref{app:basis-constraint}) confirms: the gap appears only under STE, not under continuous training. At $L{=}12$, Multilinear-free drops to 92.95\% while IWP-free holds at 97.80\% (Appendix Table~\ref{tab:basis-constraint-depth}), reversing the ranking: the multilinear basis's degree hierarchy becomes a liability without the snap constraint.

\subsection{Cost, width scaling, and interaction-demand scaling}
\label{sec:scaling}

\paragraph{Computational cost.}
At deployment all methods evaluate a single hard gate (${\sim}7$ ops).
M-STE is $25\times$ cheaper at training (one polynomial vs 16); all
4-parameter methods save $4\times$ optimizer memory (Appendix Table~\ref{tab:cost}).

\paragraph{Width scaling (empirical).}
At $L{=}1$ on MNIST (isolating the per-layer effect), CovJac achieves 61\% faster error decay with width: fitting $A(k) = 100 - Bk^{-\alpha}$ (3 seeds) gives $\alpha{=}0.42$ vs Soft-Mix's $0.26$, likely because the $J_{03}$ coupling makes more neurons effectively trainable per width unit. At matched parameter budget ($4\times$ wider CovJac), the gap is $+3.3$pp, costing only $1/6$ the inference FLOPs (Appendix~\ref{app:width-scaling}).

\paragraph{Interaction-demand scaling.}
The CovJac advantage over STE scales monotonically with task interaction demand: from $+0.02$pp on separable Adult to $+7.53$pp on interaction-dense MONK's-2 (Appendix Figure~\ref{fig:interaction-scaling}). The intuition is that STE's 25\% $c_{ab}$ coverage (Lemma~\ref{lem:coverage}) is sufficient when a task can be solved with mostly separable gates, but becomes the binding constraint when interactive gates (AND, XOR) are needed. We formalize this as the \emph{$c_{ab}$ starvation severity} (the Soft-Mix$\,-\,$STE accuracy deficit) and find Pearson $r{=}0.85$ correlation with the CovJac advantage (Appendix~\ref{app:interaction-demand}, Table~\ref{tab:starvation-severity}). The advantage persists at larger scale: on CIFAR-100 at $k{=}1280\text{k}$ and Tiny-ImageNet at $k{=}2560\text{k}$, CovJac remains the best method with growing gaps (Appendix~\ref{app:scale-up}).

\subsection{Diagnostic experiments}
\label{sec:diagnostics}

\begin{figure}[t]
\centering
\includegraphics[width=0.88\linewidth]{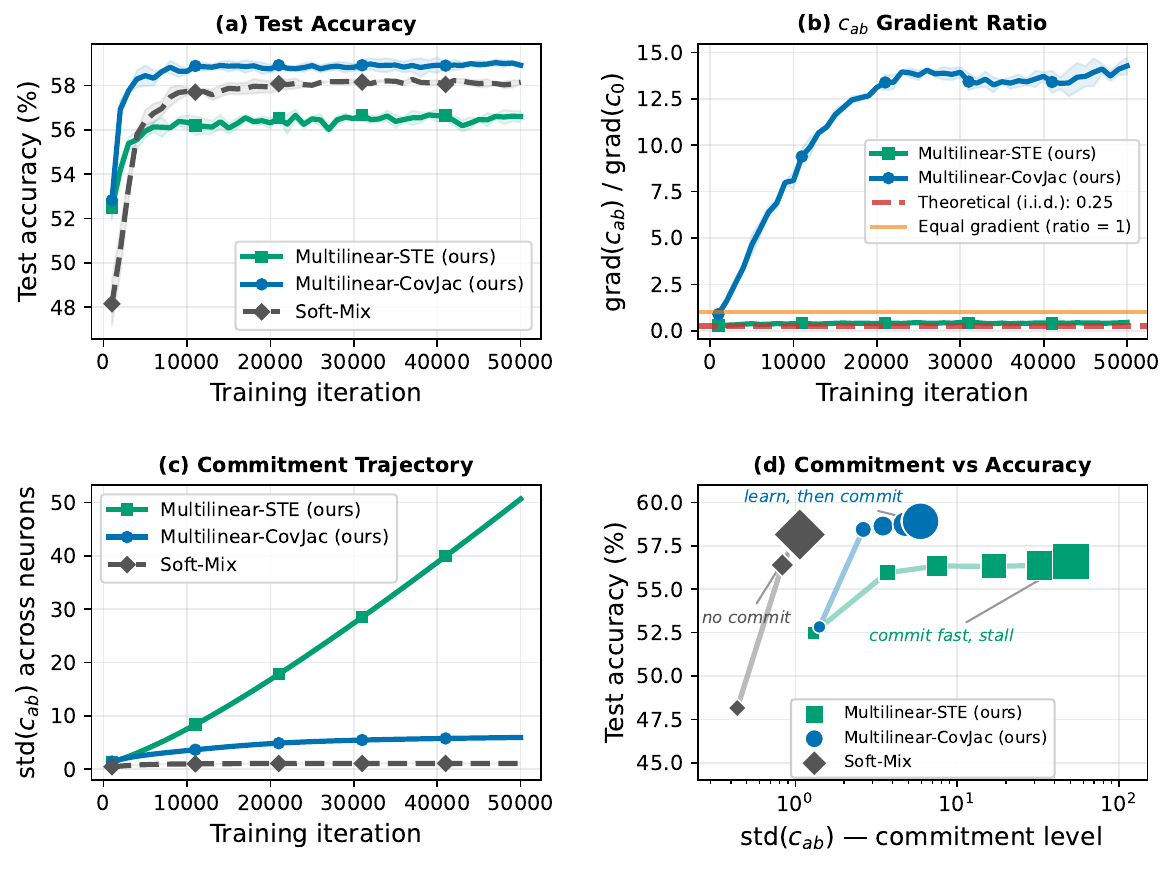}
\caption{CIFAR-10 ($k{=}128\text{k}$, $L{=}6$, 3 seeds).
\textbf{(a)}~Accuracy over training. \textbf{(b)}~$c_{ab}$ gradient ratio: CovJac reaches $14\times$, STE stays at theoretical 0.25. \textbf{(c)}~Commitment ($\mathrm{std}(c_{ab})$): STE commits fast, CovJac slow, Soft-Mix never. \textbf{(d)}~Commitment vs accuracy: STE commits to wrong gates and stalls; CovJac learns first, then commits; Soft-Mix improves without committing.}
\label{fig:commitment}
\end{figure}

\paragraph{Commitment dynamics (Figure~\ref{fig:commitment}).}
Figure~\ref{fig:commitment} tracks three quantities on CIFAR-10 over training. Panel~(b) shows the gradient-norm ratio $\|\nabla c_{ab}\|/\|\nabla c_0\|$: for STE this stabilizes at 0.227 (matching the theoretical $\E[ab]{=}1/4$ from Lemma~\ref{lem:coverage}), while for CovJac it reaches 14.27, confirming that the $J_{03}$ coupling (Proposition~\ref{prop:covjac}) amplifies the $c_{ab}$ gradient by ${\sim}60\times$ relative to STE. Panel~(c) measures gate commitment via $\mathrm{std}(c_{ab})$ across neurons: STE commits quickly (std=50.7) but to suboptimal gates (56.02\% final accuracy); CovJac commits slowly (std=5.9) but accurately (58.97\%); Soft-Mix never commits (std=1.1). Panel~(d) reveals the mechanism: STE locks into gates early before learning which ones are correct, while CovJac explores first and commits only after accuracy plateaus (Appendix~\ref{app:gradnorm}).

\paragraph{Signal survival (Figure~\ref{fig:cancellation}).}
To directly test Proposition~\ref{prop:softmix-cancel}, we measure per-layer signal survival $s_l = \|\partial z_l/\partial\mathbf{h}_{l-1}\|/\|\partial z_l/\partial\mathbf{h}_{l-1}\|_{\max}$, the fraction of gradient signal not cancelled. At initialization on MNIST ($L{=}12$, logit std $\sigma{=}1.0$),
Soft-Mix has $s_l \approx 0.29$ (71\% of gradient cancelled at each layer), while M-STE has $s_l = 1.0$ and CovJac has $s_l = 0.993$. This directly confirms the theory: Soft-Mix cancels because it sums over all 16 gates (Proposition~\ref{prop:softmix-cancel}); single-gate methods do not (Proposition~\ref{prop:ste-noncancel}).

\paragraph{Gate selection at convergence
(Figure~\ref{fig:codebook-neurons}).} The gate type distribution at convergence reveals how each method uses the 16-gate codebook. CovJac selects the highest fraction of strong-interaction gates ($|c_{ab}|{\geq}2$): 25.7\% on CIFAR-10, 27.2\% on MNIST, 36.2\% on Adult, compared to 16--21\% for all other methods. CovJac also has the fewest constant gates ($<$0.5\% vs 4--10\%), indicating almost no wasted neurons. This is a direct consequence of the $J_{03}$ coupling: gradient flows to $c_{ab}$ on every sample, pushing neurons toward interactive gates (Appendix~\ref{app:codebook-neurons}).

\begin{figure}[tb]
\centering
\begin{minipage}[t]{0.35\linewidth}
\vspace{-15pt}
\centering
\includegraphics[width=\linewidth]{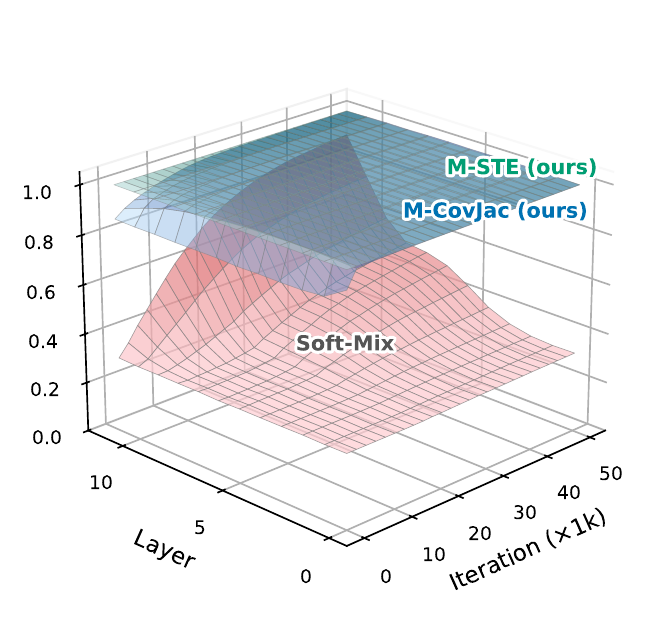}
\caption{Signal survival $s_l$ per layer on
MNIST ($L{=}12$, 3 seeds). Soft-Mix (\textcolor{red}{red}): $s_l\!\approx\!0.3$ (71\% cancelled), confirming Proposition~\ref{prop:softmix-cancel}. Ours (\textcolor{blue}{blue}, \textcolor{teal}{teal}): $s_l\!\approx\!1.0$ (no cancellation).}
\label{fig:cancellation}
\end{minipage}%
\hfill
\begin{minipage}[t]{0.63\linewidth}
\vspace{0pt}
\centering
\includegraphics[width=\linewidth]{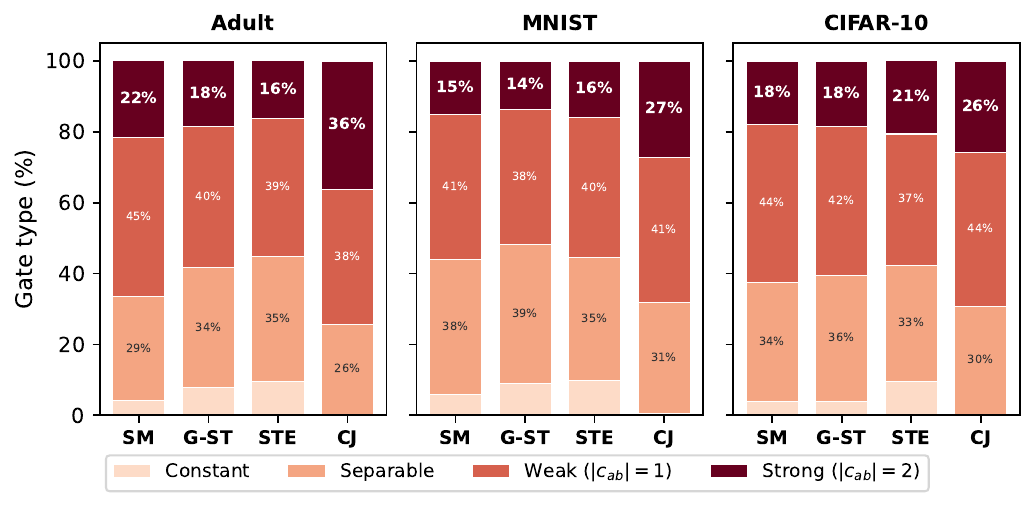}
\caption{Gate type distribution (3 seeds).
SM=Soft-Mix, G-ST=Gumbel-ST, STE=M-STE (ours), CJ=M-CovJac (ours). CovJac: highest strong-interaction fraction (darkest, 26--36\%), fewest constant ($<$0.5\%, not visible in bars).}
\label{fig:codebook-neurons}
\end{minipage}
\end{figure}

\paragraph{Controls.}
Hard-ST (16p, argmax + STE) confirms single-gate-forward suffices for depth stability (98.11\% MNIST, 57.39\% CIFAR-10) but does not fix $c_{ab}$ starvation: it has 16 parameters and still suffers the 25\% coverage limitation. Walsh and smoothed bases also fail, as predicted by Proposition~\ref{prop:basis-futility}: Walsh achieves full coverage ($\rho{=}1$) but exact coherence cancellation ($\kappa{=}0$, 94.92\%); the smoothed basis has $\rho{=}1$ and $\kappa{=}1$ but non-zero STE bias ($B{>}0$), collapsing to 10.14\% (Appendix~\ref{app:ste-failures}).


\section{Conclusion}
\label{sec:conclusion}

The 16 two-input Boolean gates are 16 points in a 4-dimensional polynomial space. Training in this space rather than the 16-dimensional softmax eliminates 11 nullspace gradient directions, avoids exact gradient cancellation at uniform initialization, and reduces each neuron from 16 to 4 parameters. The CovJac gradient mechanism further bypasses interaction-coefficient starvation by coupling $c_{ab}$ to the always-active constant channel. Empirically, CovJac matches or exceeds Soft-Mix on all seven datasets using $4\times$ fewer parameters. The CovJac-vs-STE gap grows monotonically with task interaction demand ($r{=}0.85$ correlation with starvation severity; Appendix~\ref{app:interaction-demand}), from $+0.02$pp on separable Adult to $+7.53$pp on interaction-dense MONK's-2. At depth, Soft-Mix collapses ($-37.3$pp on CIFAR-10 at $L{=}12$) while CovJac holds ($-0.5$pp on CIFAR-10), directly confirming the backward-cancellation theory. The three pathologies identified here arise from structural properties of the codebook, not from Boolean gates specifically: nullspace redundancy appears whenever a codebook has rank less than its size; backward cancellation appears under complementation symmetry; and starvation appears whenever monomial-degree ordering creates hierarchical coverage under STE. The CovJac coupling (Proposition~\ref{prop:covjac}) applies to any fixed codebook. We demonstrate these effects on 2-input Boolean gates; whether they arise in other codebook-based selection problems (NAS operation sets, MoE expert assignments) is an empirical question for future work, alongside $k$-input gates and learnable input wiring (Appendix~\ref{app:wiring-sensitivity}).

\section*{Acknowledgments}

The author thanks Oliver and Guy at StarAI Lab, UCLA, for constructive discussions on (probabilistic) logic circuits and the lab members (Benjie, Poorva, Daniel, Renato, Zoe, Gwen) for their kind interactions during a research visit.

\bibliographystyle{unsrtnat}
\bibliography{references}

\appendix

\section{Full Proofs}
\label{app:proofs}

\paragraph{Notation.}
For each of the 16 Boolean functions $g_j : \{0,1\}^2 \to \{0,1\}$ (indexed $j = 0,\ldots,15$ by the truth-table bit pattern), we write its unique multilinear representation \[ g_j(a,b) = c_{0,j} + c_{a,j}\, a + c_{b,j}\, b + c_{ab,j}\, ab, \] with coefficient vector $\bG_j = (c_{0,j},\, c_{a,j},\, c_{b,j},\, c_{ab,j})^\top \in \Z^4$. Collect these as rows of the codebook $\bG\in\Z^{16\times 4}$. The canonical polynomial basis is $\bpsi(a,b) = (1, a, b, ab)^\top$. A continuous learned coefficient vector is denoted $\bc\in\R^4$, and under Multilinear-STE the forward ``hard'' gate is $\bG_{\text{hard}} = \bG_{j^*}$ with $j^* = \argmin_j \|\bc-\bG_j\|^2$. Under Multilinear-CovJac the soft coefficient is $\bc_{\text{soft}} = \sum_j \omega_j \bG_j$ with $\omega_j = \mathrm{softmax}(-\|\bc-\bG_j\|^2/\tau)_j$ (we write $\omega_j$ for the CovJac proximity weights and reserve $\pi_j$ for the
Soft-Mix gate distribution, following \S\ref{sec:method}).

\subsection{Zero Discretization Gap (Identity~1)}
\label{app:identity1}

\paragraph{Identity~1 (Polynomial collapse)}
Every function $g:\{0,1\}^2 \to \R$ admits a unique multilinear polynomial representation \[ g(a,b) = c_0 + c_a\, a + c_b\, b + c_{ab}\, ab, \] with coefficients \[ c_0 = g(0,0),\ c_a = g(1,0)-g(0,0),\ c_b = g(0,1)-g(0,0),\ c_{ab} = g(1,1)-g(1,0)-g(0,1)+g(0,0). \] For the 16 Boolean-valued functions $g : \{0,1\}^2 \to \{0,1\}$, the resulting coefficient vectors $\bG_j = (c_0, c_a, c_b, c_{ab})^\top$ lie in $\Z^4$, with entries in the ranges $c_0 \in \{0,1\}$, $c_a, c_b \in \{-1,0,1\}$, $c_{ab} \in \{-2,-1,0,1,2\}$.

\emph{Proof.} (Direct corner-value expansion / dimension argument.) Let $V$ be the vector space of functions $\{0,1\}^2 \to \R$, which has dimension 4 since a function is determined by its four corner values $g(0,0), g(1,0), g(0,1), g(1,1)$. Consider the four monomials $\{1,\,a,\,b,\,ab\}$ as elements of $V$. Their corner-value vectors are \[ 1 \leftrightarrow (1,1,1,1),\ a \leftrightarrow (0,1,0,1),\ b \leftrightarrow (0,0,1,1),\ ab \leftrightarrow (0,0,0,1), \] where entries are ordered $(a,b) = (0,0),(1,0),(0,1),(1,1)$. The $4\times 4$ matrix formed by these vectors, \[ M = \begin{pmatrix}1 & 0 & 0 & 0\\ 1 & 1 & 0 & 0\\ 1 & 0 & 1 & 0\\ 1 & 1 & 1 & 1\end{pmatrix}, \] is lower triangular with unit diagonal, hence $\det M = 1 \neq 0$. The four monomials are therefore linearly independent in $V$, and since $\dim V = 4$, they form a basis. Uniqueness of the representation is immediate from basis uniqueness.

To obtain the explicit formulae, solve $M\bc = (g(0,0),g(1,0),g(0,1),g(1,1))^\top$ by forward substitution:
\begin{itemize}
\item Row 1: $c_0 = g(0,0)$.
\item Row 2: $c_0 + c_a = g(1,0)\ \Rightarrow\ c_a = g(1,0) - g(0,0)$.
\item Row 3: $c_0 + c_b = g(0,1)\ \Rightarrow\ c_b = g(0,1) - g(0,0)$.
\item Row 4: $c_0 + c_a + c_b + c_{ab} = g(1,1)\ \Rightarrow\ c_{ab} = g(1,1) - g(1,0) - g(0,1) + g(0,0)$.
\end{itemize}
For Boolean-valued $g$ (i.e.\ $g \in \{0,1\}$), the four corner values are each in $\{0,1\}$, so $c_0 \in \{0,1\}$, $c_a, c_b \in \{-1,0,1\}$, and $c_{ab} \in \{-2,-1,0,1,2\}$. Enumerating the $2^4 = 16$ Boolean-valued corner assignments gives 16 distinct integer coefficient vectors, one per Boolean gate. \hfill$\square$

\emph{Attribution:}~\citep{hammer1968boolean,owen1972multilinear,crama2011boolean}. The $\{-1,+1\}$ Fourier reformulation is in \citet[Proposition~1.1]{odonnell2014analysis}. Elementary; proved here for self-containedness.

\paragraph{Zero discretization gap.}
For any Multilinear-STE layer with binary inputs $(a,b) \in \{0,1\}^2$, \[ z_{\text{train}}(a,b) = z_{\text{eval}}(a,b). \]

\emph{Proof.} Let $\bch = \bG_{j^*}$ be the committed Boolean gate (the codebook entry nearest to $\bc$ in squared Euclidean distance, ties broken by lexicographic order). The training-time forward pass of Multilinear-STE emits \[ z_{\text{train}}(a,b) = \bch^\top \bpsi(a,b) = \bG_{j^*}^\top \bpsi(a,b). \] The evaluation-time forward pass emits the Boolean gate value itself, $z_{\text{eval}}(a,b) = g_{j^*}(a,b)$. By Identity~1, the multilinear polynomial with coefficient vector $\bG_{j^*}$ equals $g_{j^*}$ on $\{0,1\}^2$, i.e.\ precisely $g_{j^*}(a,b) = \bG_{j^*}^\top \bpsi(a,b)$ for all $(a,b)\in\{0,1\}^2$. Substituting, $z_{\text{eval}}(a,b) = \bG_{j^*}^\top \bpsi(a,b) = z_{\text{train}}(a,b)$.
\hfill$\square$

\paragraph{Remark.}
Multilinear-CovJac does not satisfy the zero-gap property as a pointwise identity (since $\bc_{\text{soft}} \neq \bG_{j^*}$ during training), yet empirically the discretization gap is ${\leq}\,0.2\%$ on all main datasets because the trained $\bpi$ concentrates on $\bG_{j^*}$ strongly enough that $\bc_{\text{soft}}^\top\bpsi(a,b)$ rounds to $g_{j^*}(a,b)$ at every Boolean input.

\subsection{Gradient Utilization (Lemma~\ref{lem:coverage})}
\label{app:coverage}

\begin{lemma}[Gradient coverage hierarchy]
\label{lem:coverage}
Under i.i.d.\ Bernoulli$(1/2)$ binary inputs $(a,b)\in\{0,1\}^2$, \[ \E\bigl[\|\bpsi(a,b)\|_0\bigr] = 2.25 \quad \text{(Multilinear / polynomial basis)}, \] \[ \E\bigl[\|\boldsymbol{\phi}(a,b)\|_0\bigr] = 1.0 \quad \text{(Corner / IWP basis)}. \]
\end{lemma}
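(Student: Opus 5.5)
The plan is to use linearity of expectation on the $\ell_0$ count, writing $\|\mathbf{v}\|_0=\sum_k \mathbf{1}[v_k\neq 0]$. This turns each expected count into a sum of probabilities that individual basis components are nonzero. Under i.i.d.\ Bernoulli$(1/2)$ inputs, each corner $(a,b)\in\{0,1\}^2$ has probability $1/4$, so each such probability is just the number of corners where the component is nonzero, divided by four.

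For the canonical basis $\bpsi(a,b)=(1,a,b,ab)^\top$, take the components in turn.
\begin{itemize}
\item The constant $1$ is never zero, so it contributes probability $1$.
\item The monomial $a$ is nonzero exactly when $a=1$, contributing $1/2$; by symmetry $b$ also contributes $1/2$.
\item On $\{0,1\}^2$, $ab\neq 0$ iff $a=b=1$, contributing $\mathbb{P}(a=1)\mathbb{P}(b=1)=1/4$ by independence.
\end{itemize}
Summing gives $1+\tfrac12+\tfrac12+\tfrac14=2.25$. Restricting to Boolean inputs matters here, since on $[0,1]^2$ these monomials would be almost surely nonzero.

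For the corner basis $\phi_{ij}(a,b)=a^i(1-a)^{1-i}b^j(1-b)^{1-j}$, we first check that on Boolean inputs each $\phi_{ij}$ is the indicator of the corner $(a,b)=(i,j)$. The factor $a^i(1-a)^{1-i}$ equals $1$ if $a=i$ and $0$ otherwise, with the convention $0^0=1$, and likewise for the $b$ factor. So $\boldsymbol{\phi}(a,b)$ is a one-hot vector and $\|\boldsymbol{\phi}(a,b)\|_0=1$ at every Boolean input. Its expectation is therefore $1$, and this holds for any input distribution. Equivalently, each of the four components has probability $1/4$, so the total is $4\cdot\tfrac14=1$.

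There is no substantive obstacle; the whole argument is a few lines of case-checking on $\{0,1\}^2$. The only care needed is stating the $0^0=1$ convention for the corner basis and noting that the $ab$ term relies on independence. For completeness I would also remark on the ratio $2.25/1$ cited in \S\ref{sec:multilinear-ste}. In addition, I would note that the per-coefficient probabilities $(1,\tfrac12,\tfrac12,\tfrac14)$ are exactly the STE coverage rates of $(c_0,c_a,c_b,c_{ab})$ from Eq.~\ref{eq:ste}, and in particular that $c_{ab}$ is covered on only $1/4$ of samples.
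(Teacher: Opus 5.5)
Your proposal is correct and follows the paper's proof essentially step for step. Both use linearity of expectation over the indicators $\mathbf{1}[\psi_k\neq 0]$ to get $1+\tfrac12+\tfrac12+\tfrac14=2.25$, and both observe that the corner basis is one-hot on $\{0,1\}^2$, so $\|\boldsymbol{\phi}(a,b)\|_0=1$ deterministically. Your explicit check that each $\phi_{ij}$ is the corner indicator, using the convention $0^0=1$, is a detail the paper only asserts.
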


\emph{Proof.} (Component-wise expectation.)

\textbf{Polynomial basis.} We have $\bpsi(a,b) = (1, a, b, ab)^\top$. The $\ell_0$ norm counts non-zero entries, so by linearity of expectation \[ \E\bigl[\|\bpsi(a,b)\|_0\bigr] = \E[\mathbf{1}_{1\neq 0}] + \E[\mathbf{1}_{a\neq 0}] + \E[\mathbf{1}_{b\neq 0}] + \E[\mathbf{1}_{ab\neq 0}]. \] Compute each term: $\E[\mathbf{1}_{1\neq 0}] = 1$; $\E[\mathbf{1}_{a\neq 0}] = \Pr[a=1] = 1/2$; $\E[\mathbf{1}_{b\neq 0}] = 1/2$; $\E[\mathbf{1}_{ab\neq 0}] = \Pr[a=1 \wedge b=1] = 1/4$ (independence). Sum: $1 + 1/2 + 1/2 + 1/4 = 9/4 = 2.25$.

\textbf{Corner basis.} The corner basis $\boldsymbol{\phi}(a,b)$ is the one-hot indicator of $(a,b)$ over the four corners of $\{0,1\}^2$, i.e.\ $\phi_i(a,b) = \mathbf{1}_{(a,b)=v_i}$ where $v_i$ ranges over $(0,0),(1,0),(0,1),(1,1)$. For any input $(a,b)$, exactly one component of $\boldsymbol{\phi}$ equals 1 and the others equal 0, so $\|\boldsymbol{\phi}(a,b)\|_0 = 1$ deterministically. Hence $\E\|\boldsymbol{\phi}\|_0 = 1$. \hfill$\square$

\paragraph{Interpretation.}
The polynomial basis distributes the learning load across degrees: the degree-$d$ monomial has coverage $(1/2)^d$. The four components $c_0, c_a, c_b, c_{ab}$ receive gradient on $100\%, 50\%, 50\%, 25\%$ of samples respectively, giving the natural degree-ordered curriculum discussed in Section~\ref{sec:method}. The corner basis, by contrast, has a flat $25\%$ coverage per coordinate but only \emph{one} non-zero coordinate per sample, yielding the same 1.0 expected $\ell_0$ norm.

\subsection{Backward Gradient Cancellation in Soft-Mix (Lemma~\ref{lem:zerosum}, Proposition~\ref{prop:softmix-cancel}, Observation~1)}
\label{app:cancellation}

\paragraph{Lemma~\ref{lem:zerosum} (Zero-sum gate derivative symmetry)}
\[ \sum_{j=0}^{15} c_{a,j} \;=\; \sum_{j=0}^{15} c_{b,j} \;=\; \sum_{j=0}^{15} c_{ab,j} \;=\; 0. \] Consequently, for every $(a,b)\in[0,1]^2$, \[ \sum_{j=0}^{15} \frac{\partial g_j}{\partial a}(a,b) \;=\; \sum_{j=0}^{15} (c_{a,j} + c_{ab,j}\, b) \;=\; 0, \] and symmetrically for $\partial_b$.

\emph{Proof.} (Complementation pairing.) The set $\mathcal{B}$ of Boolean functions $\{0,1\}^2 \to \{0,1\}$ is closed under pointwise complementation $g \mapsto \bar g := 1 - g$. This map is an involution on $\mathcal{B}$ ($\bar{\bar g} = g$) and has no fixed points: $g = \bar g$ would imply $g \equiv 1/2$, which is not Boolean-valued. Hence $\mathcal{B}$ partitions into $16/2 = 8$ complementary pairs $(g_j, g_{\bar j})$.

Compute the multilinear coefficients of $\bar g_j$. Using the identity $1 - g_j(a,b) = 1 - c_{0,j} - c_{a,j}a - c_{b,j}b - c_{ab,j}ab$: \[ \bar g_j(a,b) = (1 - c_{0,j}) + (-c_{a,j})\, a + (-c_{b,j})\, b + (-c_{ab,j})\, ab. \] By uniqueness of the multilinear representation (Identity~1) applied to $\bar g_j$, the coefficient vector of $\bar g_j$ is $(1 - c_{0,j},\, -c_{a,j},\, -c_{b,j},\, -c_{ab,j})$. In particular, the three \emph{non-constant} coefficients satisfy $c_{a,\bar j} = -c_{a,j}$, $c_{b,\bar j} = -c_{b,j}$, $c_{ab,\bar j} = -c_{ab,j}$, so that $c_{a,j} + c_{a,\bar j} = 0$, and likewise for $c_b$ and $c_{ab}$.

Summing over all 16 gates by grouping them into the 8 complementary pairs, \[ \sum_{j=0}^{15} c_{a,j} \;=\; \sum_{\text{pairs } (j,\bar j)} (c_{a,j} + c_{a,\bar j}) \;=\; 0. \] The same argument gives $\sum_j c_{b,j} = 0$ and $\sum_j c_{ab,j} = 0$. The partial derivative identity is then immediate: \[ \sum_{j=0}^{15} \frac{\partial g_j}{\partial a}(a,b) = \Bigl(\sum_j c_{a,j}\Bigr) + b\Bigl(\sum_j c_{ab,j}\Bigr) = 0. \]
\hfill$\square$

\emph{Attribution.} Elementary from the complementation symmetry of Boolean functions; the closely related Fourier-basis zero-sum identity is in \citet{odonnell2014analysis}. We do not claim it as a contribution. The novelty is its application in Proposition~\ref{prop:softmix-cancel}.

\paragraph{Proposition~\ref{prop:softmix-cancel} (Soft-Mix backward gradient vanishes at uniform $\bpi$)}
Let $z(a,b) = \sum_{j=0}^{15} \pi_j\, g_j(a,b)$ for $\bpi \in \Delta^{15}$. If $\pi_j = 1/16$ for all $j$, then $\partial z/\partial a(a,b) = \partial z/\partial b(a,b) = 0$ for every $(a,b)\in[0,1]^2$. Hence for any scalar loss $\mathcal{L}$, $\partial \mathcal{L}/\partial a |_{\text{through } z} = 0$.

\emph{Proof.} Substituting the multilinear form into $z$ and differentiating, \[ \frac{\partial z}{\partial a}(a,b) = \sum_{j=0}^{15} \pi_j (c_{a,j} + c_{ab,j} b). \] At $\pi_j = 1/16$, this equals $(1/16)[\sum_j c_{a,j} + b\sum_j c_{ab,j}] = 0$ by Lemma~\ref{lem:zerosum}. This holds for every $(a,b)\in[0,1]^2$ because the algebraic cancellation is independent of $a$ and holds for every fixed $b$. The $\partial z/\partial b$ statement follows by the identical computation with $a,b$ swapped. The loss-gradient statement is the chain rule.
\hfill$\square$

\paragraph{Near-uniform regime.}
If $\bpi$ deviates from uniform $\mathbf{u} = (1/16,\ldots,1/16)$, write $\pi_j = 1/16 + \delta_j$ with $\sum_j \delta_j = 0$. Then \[ \frac{\partial z}{\partial a} = \sum_j \delta_j (c_{a,j} + c_{ab,j} b), \] since the $1/16$-term vanishes by Lemma~\ref{lem:zerosum}. On Boolean inputs $b\in\{0,1\}$, every gate satisfies $|c_{a,j}+c_{ab,j}b|\leq 1$ (verified by enumeration, Table~\ref{tab:ste-enumeration}). Hölder's inequality gives \[ \left|\frac{\partial z}{\partial a}\right| \;\le\; \|\bpi - \mathbf{u}\|_1, \] so the backward signal grows only as $\bpi$ departs from uniformity. This will be important for Observation~1.

\paragraph{Connection to~\citet{petersen2022deep}.}
The forward-pass analogue of this identity is $z(a,b) = \tfrac{1}{16}\sum_j g_j(a,b) = 1/2$ for all $(a,b)$, because exactly 8 of the 16 gates output 1 at each corner. Both cancellations share the same root: complementation symmetry.

\paragraph{Observation~1 (Dynamical depth attenuation).}
In a DLGN without residual connections, trained with Soft-Mix from near-uniform gate-logit initialization:
\begin{enumerate}
\item At initialization, every neuron has $\bpi^{(l)}$ near uniform, so
by the near-uniform bound after Proposition~\ref{prop:softmix-cancel}, its input Jacobian $\partial z^{(l)}/\partial(a^{(l)},b^{(l)})$ has small norm.
\item A layer $l$ can commit (move its $\bpi^{(l)}$ away from uniform)
only if it receives a non-trivial $\bpi$-gradient, which requires $\partial \mathcal{L}/\partial z^{(l)}$ to be non-trivial, which in turn requires layers $l+1,\ldots,L$ to already have non-trivial input Jacobians.
\item Hence commitment proceeds greedily from the output layer toward
the input layer. At sufficient depth, early layers may never commit within a finite training budget.
\end{enumerate}

\paragraph{Assumptions used.} (a) No residual/skip connections; (b)
near-uniform initialization of gate logits; (c) the analysis is dynamical, not a static worst-case bound; (d) the gradient path of interest is the loss-to-input path through layer input Jacobians.

\paragraph{What we do NOT claim.}
We do \emph{not} claim a bound of the form $|\partial\mathcal{L}/\partial x| \le C\varepsilon^{L-l}$ as a theorem, because the hypothesis ``all intermediate layers are simultaneously near-uniform throughout training'' is effectively the conclusion restated. Proposition~\ref{prop:softmix-cancel} is the per-layer pointwise fact; Observation~1 is the dynamical interpretation; Section~\ref{sec:depth} is the empirical confirmation. Together they explain the depth penalty; none is a worst-case depth-decay theorem.

\paragraph{Single-gate-forward corollary (remark).}
The cancellation argument in Proposition~\ref{prop:softmix-cancel} is specific to the forward expression $z = \sum_{j=0}^{15}\pi_j g_j(a,b)$: it is only because $\partial z/\partial a = \sum_j \pi_j(c_{a,j}+c_{ab,j}b)$ contains the Lemma~\ref{lem:zerosum} zero-sums inside a 16-term sum that those zero-sums can collapse the derivative to 0 at uniform $\bpi$. Any forward pass that commits to a \emph{single} gate $j^\star$, writing $z = g_{j^\star}(a,b) = c_{0,j^\star} + c_{a,j^\star}a + c_{b,j^\star}b + c_{ab,j^\star}ab$, gives $\partial z/\partial a = c_{a,j^\star} + c_{ab,j^\star}b$, a derivative of a single committed polynomial with no sum in which to cancel; Proposition~\ref{prop:ste-noncancel} then guarantees magnitude $\geq 1$ for every $a$-dependent gate. This corollary applies uniformly to: (i) Multilinear-STE (hard $L_2$ snap; Proposition~\ref{prop:ste-noncancel}); (ii) Multilinear-CovJac (soft-VQ with $\bc_{\text{soft}} = \sum_j\omega_j\bG_j$: the forward pass evaluates a single polynomial $z = \bc_{\text{soft}}^\top\bpsi(a,b)$, not a mixture over 16 polynomials); (iii) Gumbel-ST (Gumbel noise + argmax samples a one-hot $\bpi = e_{j^\star}$ per forward, reducing to case (i) with a stochastic $j^\star$). Soft-Mix is the only method in the comparison whose forward \emph{does} compute a full 16-term sum, and is therefore the only one to which Proposition~\ref{prop:softmix-cancel} applies. The depth sweep (Appendix~\ref{app:depth-h6}) confirms this cleanly: at $L{=}12$, $k{=}64\text{k}$, Multilinear-STE (98.03\%), Multilinear-CovJac (98.23\%), and Gumbel-ST (97.81\%) all remain above 97.8\%, while Soft-Mix stalls at 84.92\%. Gumbel-ST has the same 16-parameter count as Soft-Mix, so the dichotomy is about forward-pass structure, not parameter count. This is an empirical corollary of the proof structure above, not a separate theorem.

\subsection{Multilinear-STE and CovJac Eliminate Cancellation (Propositions~\ref{prop:ste-noncancel}, \ref{prop:covjac})}
\label{app:ste-covjac}

\paragraph{Proposition~\ref{prop:ste-noncancel} (Multilinear-STE backward non-cancellation)}
In Multilinear-STE, the forward pass at a neuron with committed gate $\bch = (c_0,c_a,c_b,c_{ab})$ is $z = \bch^\top\bpsi(a,b)$, and the STE backward derivative is $\partial z/\partial a = c_a + c_{ab}\, b$. For every non-trivial Boolean gate depending on input $a$ (i.e.\ every Boolean gate with $(c_a, c_{ab}) \neq (0,0)$), $\max_{b\in\{0,1\}} |c_a + c_{ab}\, b| \;\ge\; 1$.

\emph{Proof.} The STE rule treats $\bch$ as if it were the free parameter for the backward pass, so $\partial z/\partial a = c_a + c_{ab}\, b$. Since $\bch$ is a Boolean-gate coefficient vector, Identity~1 gives $c_a \in \{-1,0,1\}$ and $c_{ab} \in \{-2,-1,0,1,2\}$. A Boolean gate depends on $a$ iff $c_a \neq 0$ or $c_{ab} \neq 0$.

\textbf{Case 1: $c_a \neq 0$.} At $b = 0$, $|c_a + c_{ab}\cdot 0| = |c_a| \ge 1$.

\textbf{Case 2: $c_a = 0$ and $c_{ab} \neq 0$.} At $b = 1$, $|c_a + c_{ab}\cdot 1| = |c_{ab}| \ge 1$.

These two cases exhaust all gates depending on $a$. \hfill$\square$

\paragraph{Enumeration.}
For completeness, here are $(c_a,c_{ab})$ and $(\partial z/\partial a|_{b=0},\ \partial z/\partial a|_{b=1})$ for the 12 Boolean gates that depend on $a$:

\begin{table}[t]
\centering
\small
\caption{STE backward derivative $\partial z/\partial a = c_a + c_{ab}b$ for all 12 $a$-dependent Boolean gates. Every row satisfies $\max_{b\in\{0,1\}}|c_a+c_{ab}b|\geq 1$.}
\label{tab:ste-enumeration}
\begin{tabular}{lrrrr}
\toprule
Gate & $c_a$ & $c_{ab}$ & at $b{=}0$ & at $b{=}1$ \\
\midrule
$a$               & 1 & 0 & 1 & 1 \\ $\neg a$          & $-1$ & 0 & $-1$ & $-1$ \\ AND $ab$          & 0 & 1 & 0 & 1 \\ NAND $1-ab$       & 0 & $-1$ & 0 & $-1$ \\ OR $a+b-ab$       & 1 & $-1$ & 1 & 0 \\ NOR $1-a-b+ab$    & $-1$ & 1 & $-1$ & 0 \\ XOR $a+b-2ab$     & 1 & $-2$ & 1 & $-1$ \\ XNOR $1-a-b+2ab$  & $-1$ & 2 & $-1$ & 1 \\ $a\wedge\neg b$   & 1 & $-1$ & 1 & 0 \\ $\neg a\wedge b$  & 0 & $-1$ & 0 & $-1$ \\ $a\vee\neg b$     & 0 & 1 & 0 & 1 \\ $\neg a\vee b$    & $-1$ & 1 & $-1$ & 0 \\
\bottomrule
\end{tabular}
\end{table}

Every row has $\max_b|\cdot| \ge 1$, confirming the bound.

\paragraph{Key structural contrast with Soft-Mix.}
The Multilinear-STE backward derivative is the derivative of \emph{one} committed polynomial, not a convex combination of 16 gate derivatives. The zero-sum of Lemma~\ref{lem:zerosum} therefore has no avenue to manifest. The per-layer signal magnitude is $O(1)$, and a depth-$L$ product of such factors is $O(1)^{L} = O(1)$, not exponentially decaying.

\paragraph{Pre-commitment remark.}
During training, the learned $\bc$ is continuous and may lie near a Voronoi boundary, in which case the STE derivative $c_a + c_{ab}b$ (using the \emph{continuous} $c_a,c_{ab}$) can be small. The content of Proposition~\ref{prop:ste-noncancel} is structural: unlike Soft-Mix, \emph{structural cancellation across gates cannot occur}, so any smallness of the backward signal is a property of a single neuron's continuous parameter state, not a systematic symmetry that affects all neurons at initialization.

\paragraph{Proposition~\ref{prop:covjac} (CovJac selection Jacobian: input-independence, PSD, positive diagonal)}
For the soft-selection map $\bc\mapsto \bc_{\text{soft}}(\bc) = \sum_j \omega_j(\bc)\bG_j$, with $\omega_j(\bc) = \mathrm{softmax}(-\|\bc-\bG_j\|^2/\tau)_j$, the Jacobian $J_{ik}(\bc) = \partial(\bc_{\text{soft}})_i/\partial c_k$ equals \[ J_{ik} = \frac{2}{\tau}\,\mathrm{Cov}_\omega(G_{\cdot i},\, G_{\cdot k}) = \frac{2}{\tau}\Bigl(\sum_{j}\omega_j G_{ji}G_{jk} - \bigl(\sum_j \omega_j G_{ji}\bigr)\bigl(\sum_j \omega_j G_{jk}\bigr)\Bigr), \] and satisfies (i) input-independence, (ii) symmetric PSD, (iii) strictly positive diagonal for every coordinate at $\tau{>}0$ on the 16-Boolean codebook.

\emph{Proof.} Throughout this proof, $\omega_j$ denotes the CovJac proximity weights from Eq.~\ref{eq:softvq}.

\textbf{(a) Derivation of $\mathbf{J}$.} Let $d_j(\bc) = \|\bc-\bG_j\|^2$. Then $\omega_j = e^{-d_j/\tau}/Z$ where $Z = \sum_\ell e^{-d_\ell/\tau}$. Compute $\partial\omega_j/\partial c_k$ via the softmax chain rule. First, $\partial d_j/\partial c_k = 2(c_k - G_{jk})$. By the standard softmax derivative ($\partial\omega_j/\partial l_m = \omega_j(\delta_{jm}-\omega_m)$ applied to logits $l_j = -d_j/\tau$ via the chain rule), \[ \frac{\partial \omega_j}{\partial c_k} = -\frac{1}{\tau}\omega_j\Bigl(\frac{\partial d_j}{\partial c_k} - \sum_\ell \omega_\ell\frac{\partial d_\ell}{\partial c_k}\Bigr). \] Substituting $\partial d_j/\partial c_k = 2(c_k - G_{jk})$:
\begin{align*}
\frac{\partial \omega_j}{\partial c_k}
&= -\frac{2}{\tau}\omega_j\Bigl[
(c_k - G_{jk}) - \sum_\ell \omega_\ell(c_k - G_{\ell k})\Bigr] \\
&= -\frac{2}{\tau}\omega_j\Bigl[
\cancel{c_k} - G_{jk} - \cancel{c_k}\underbrace{\textstyle\sum_\ell\omega_\ell}_{=\,1} + \underbrace{\textstyle\sum_\ell\omega_\ell G_{\ell k}}_{=\,(\bc_{\text{soft}})_k} \Bigr] \\
&= \frac{2}{\tau}\omega_j\bigl(G_{jk} - (\bc_{\text{soft}})_k\bigr).
\tag{$\ast$}
\end{align*}
Now differentiate $(\bc_{\text{soft}})_i = \sum_j \omega_j G_{ji}$ by $c_k$: \[ J_{ik} = \sum_j \frac{\partial \omega_j}{\partial c_k}\, G_{ji} \overset{(\ast)}{=} \frac{2}{\tau}\sum_j \omega_j\bigl(G_{jk} - (\bc_{\text{soft}})_k\bigr)G_{ji}. \] Expanding $G_{jk} - (\bc_{\text{soft}})_k = G_{jk} - \sum_\ell\omega_\ell G_{\ell k}$ and distributing:
\begin{align*}
J_{ik} &= \frac{2}{\tau}\Bigl(\sum_j\omega_j G_{ji}G_{jk} - \underbrace{(\bc_{\text{soft}})_k}_{\sum_\ell\omega_\ell G_{\ell k}} \sum_j\omega_j G_{ji}\Bigr) \\
&= \frac{2}{\tau}\Bigl(\E_\omega[G_{\cdot i}G_{\cdot k}]
- \E_\omega[G_{\cdot i}]\,\E_\omega[G_{\cdot k}]\Bigr) = \frac{2}{\tau}\mathrm{Cov}_\omega(G_{\cdot i}, G_{\cdot k}).
\end{align*}

\textbf{(b) Input-independence.} Both $\bG$ (a fixed codebook) and $\boldsymbol{\omega}(\bc)$ (a function of $\bc$ alone) are independent of $(a,b)$, so $\mathbf{J}(\bc)$ is input-independent.

\textbf{(c) PSD.} For any $\mathbf{v}\in\R^4$, $\mathbf{v}^\top \mathbf{J}\mathbf{v} = (2/\tau)\Var_\omega(\sum_i v_i G_{\cdot i}) \ge 0$. Symmetry $J_{ik} = J_{ki}$ is immediate.

\textbf{(d) Positive diagonal.} $J_{kk} = (2/\tau)\Var_\omega(G_{\cdot k}) > 0$ iff $G_{\cdot k}$ is not $\boldsymbol{\omega}$-almost-surely constant. For each coordinate $k\in\{0,1,2,3\}$ on the 16-Boolean codebook, $G_{\cdot k}$ takes at least two distinct values, and softmax at finite $\tau{>}0$ assigns $\omega_j > 0$ for every $j$. Therefore $J_{kk} > 0$ throughout training.
\hfill$\square$

\paragraph{Gradient to $c_{ab}$ on every sample.}
The parameter gradient through the soft coefficient is $\partial\mathcal{L}/\partial c_k = (\partial\mathcal{L}/\partial z)\cdot \sum_i \psi_i(a,b)\, J_{ik}$. For $k=3$: \[ \frac{\partial \mathcal{L}}{\partial c_{ab}} = \frac{\partial \mathcal{L}}{\partial z}\cdot\bigl(1\cdot J_{03} + a\cdot J_{13} + b\cdot J_{23} + ab\cdot J_{33}\bigr). \] The leading term involves the constant basis element $\psi_0 = 1$, which is non-zero on every input. Since $J_{03}$ is input-independent and generically non-zero, $c_{ab}$ receives a non-zero gradient on \emph{every} training sample, in contrast to Multilinear-STE where $\partial\mathcal{L}/\partial c_{ab} = \delta\cdot ab$ vanishes on 75\% of Boolean inputs.

\subsection{Basis Optimality (Proposition~\ref{prop:basis-futility})}
\label{app:basis-futility}

\paragraph{Proposition~\ref{prop:basis-futility} (No affine basis modification improves STE)}

\emph{Goal.} An ideal STE basis for the interaction coefficient should satisfy three properties simultaneously: \textbf{(a)}~full \emph{coverage}: the gradient fires on every training sample ($\rho{=}1$); \textbf{(b)}~\emph{coherence}: the gradient consistently points in the correct direction ($\kappa{=}1$); \textbf{(c)}~zero \emph{STE bias}: the basis is equivalent to the canonical one up to positive rescaling ($B{=}0$). We show no affine product basis achieves all three.

\emph{Statement.} Let $\phi(x) = \alpha + \beta x$ with $\beta\neq 0$, and let $\tilde{\bpsi}(a,b) = (1,\ \phi(a),\ \phi(b),\ \phi(a)\phi(b))^\top$. Under uniform $(a,b)\in\{0,1\}^2$, define the following three properties of the interaction component $\tilde\psi_3(a,b) = \phi(a)\phi(b)$ (the 4th entry of $\tilde{\bpsi}$, corresponding to $c_{ab}$): \emph{coverage} $\rho := \Pr[\tilde\psi_3(a,b)\neq 0]$; \emph{coherence} $\kappa := |\E[\tilde\psi_3]|/\E[|\tilde\psi_3|] \in [0,1]$; \emph{STE bias} $B := \inf_{D} \E[\|\tilde{\bpsi} - D\bpsi_{\mathrm{can}}\|^2]$ where $D$ ranges over positive diagonal matrices. Then: (i) If $\phi(0) = 0$ or $\phi(1) = 0$: $\rho = 1/4$, $\kappa = 1$, $B = 0$. (ii) If $\phi(0)\neq 0$ and $\phi(1)\neq 0$: $\rho = 1$, but either $\kappa < 1$ or $B > 0$ (or both). (iii) No affine product basis simultaneously attains $\rho = 1$, $\kappa = 1$, and $B = 0$.

\emph{Proof.} Under STE with Boolean inputs $(a,b)\in\{0,1\}^2$, the gradient $\partial\mathcal{L}/\partial c_k$ is scaled by $\tilde\psi_k(a,b)$, so the properties $\rho$, $\kappa$, $B$ depend only on the four corner values of $\tilde\psi_3$. Since $\phi(x) = \alpha + \beta x$, define $p := \phi(0) = \alpha$ and $q := \phi(1) = \alpha + \beta$ (so $p \neq q$ since $\beta \neq 0$). The interaction basis function is $\tilde\psi_3(a,b) = \phi(a)\phi(b)$, which at the four corners evaluates to: $\tilde\psi_3(0,0) = \phi(0)\phi(0) = p^2$,\; $\tilde\psi_3(0,1) = \phi(0)\phi(1) = pq$,\; $\tilde\psi_3(1,0) = \phi(1)\phi(0) = pq$,\; $\tilde\psi_3(1,1) = \phi(1)\phi(1) = q^2$.

\textbf{Part (i): $p = 0$ (i.e.\ $\alpha=0$, $\phi(x)=\beta x$, $\beta>0$).} Substituting $p=0$: the corner values become $(0, 0, 0, q^2) = (0,0,0,\beta^2)$. Three of four corners are zero, so $\rho = \Pr[\tilde\psi_3 \neq 0] = 1/4$. The one non-zero value is $\beta^2 > 0$, so $|\E[\tilde\psi_3]| = \beta^2/4$ and $\E[|\tilde\psi_3|] = \beta^2/4$, giving $\kappa = 1$. The full basis evaluates to $\tilde{\bpsi}(a,b) = (1,\;\phi(a),\;\phi(b),\;\phi(a)\phi(b)) = (1,\;\beta a,\;\beta b,\;\beta^2 ab) = D\bpsi_{\mathrm{can}}$ with $D=\mathrm{diag}(1,\beta,\beta,\beta^2)$, a positive diagonal matrix (since $\beta>0$). Hence $B = 0$.

\textbf{Part (i'): $q = 0$ (i.e.\ $\phi(1)=0$, so $\alpha + \beta = 0$, $\phi(x)=\alpha(1-x)$).} The corner values become $(p^2, 0, 0, 0) = (\alpha^2, 0, 0, 0)$, so $\rho = 1/4$ and $\kappa = 1$ (same reasoning as Part~(i)). However, $\tilde\psi_1 = \phi(a) = \alpha - \alpha a = \alpha(1-a)$. For $B=0$ we need $\tilde\psi_1 = d_1 \cdot a$ for some $d_1 > 0$, but $\alpha(1-a)$ has value $\alpha$ at $a=0$ and $0$ at $a=1$, which is not proportional to $a = (0, 1)$. Hence $B > 0$.

\textbf{Part (ii): $p\neq 0$ and $q\neq 0$.} All four corner values $(p^2, pq, pq, q^2)$ are non-zero (since neither $p$ nor $q$ is zero), so $\rho = \Pr[\tilde\psi_3 \neq 0] = 1$.

\emph{Sub-case (ii-a): $pq > 0$ (same sign).} All four values $p^2, pq, pq, q^2 > 0$, so $|\E[\tilde\psi_3]| = \E[|\tilde\psi_3|]$ and $\kappa = 1$. However, expanding $\phi(a)\phi(b) = (\alpha+\beta a)(\alpha+\beta b) = \alpha^2 + \alpha\beta(a+b) + \beta^2 ab = p^2 + p\beta(a+b) + \beta^2 ab$, the interaction component $\tilde\psi_3$ contains a constant term $p^2$ and linear terms $p\beta \cdot a$, $p\beta \cdot b$. For $\tilde{\bpsi} = D\bpsi_{\mathrm{can}}$ we need $\tilde\psi_1 = d_1 \cdot a$; but $\tilde\psi_1 = \phi(a) = p + \beta a$ evaluates to $p$ at $a=0$ and $p+\beta$ at $a=1$. Since $p\neq 0$, this is not proportional to $a = (0, 1)$ for any $d_1 > 0$. Hence $B > 0$.

\emph{Sub-case (ii-b): $pq < 0$ (opposite signs).} The four corner values $(p^2, pq, pq, q^2)$ have mixed signs: $p^2, q^2 > 0$ but $pq < 0$. Under uniform $(a,b) \in \{0,1\}^2$:
\begin{align*}
\E[\tilde\psi_3] &= \tfrac{1}{4}(p^2 + pq + pq + q^2) = \tfrac{1}{4}(p^2 + 2pq + q^2) = \tfrac{(p+q)^2}{4}, \\ \E[|\tilde\psi_3|] &= \tfrac{1}{4}(p^2 + |pq| + |pq| + q^2) = \tfrac{1}{4}(p^2 - 2pq + q^2) = \tfrac{(p-q)^2}{4} = \tfrac{\beta^2}{4},
\end{align*}
where the last step uses $p - q = \alpha - (\alpha+\beta) = -\beta$. Therefore $\kappa = |\E[\tilde\psi_3]|/\E[|\tilde\psi_3|] = (p+q)^2/(p-q)^2 = (2\alpha+\beta)^2/\beta^2$. The Walsh/Hadamard basis maps $\{0,1\}$ to symmetric $\pm$ values: $\phi(0) = -\phi(1)$, i.e.\ $p = -q$. In our parameterization this requires $\alpha = -(\alpha+\beta)$, giving $\alpha = -\beta/2$. At this choice, $p + q = 0$, so $\kappa = 0$: exact cancellation. Since $p\neq 0$ (because $pq<0$), the same diagonal-rescaling argument as (ii-a) gives $B > 0$: $\tilde\psi_1 = p + \beta a$ cannot equal $d_1 a$ for any $d_1 > 0$.

\textbf{Part (iii).} Combining all cases: $B=0$ requires $p=0$ (Part~(i)), which gives $\rho = 1/4$. Every basis with $\rho = 1$ (Parts~(ii-a) and (ii-b)) has $B > 0$. Therefore no affine product basis achieves $(\rho, \kappa, B) = (1, 1, 0)$ simultaneously.
\hfill$\square$

\paragraph{Empirical confirmation.}
The Walsh basis ($\alpha = -\beta/2$, sub-case (ii-b) with $\kappa = 0$) achieves 86.86\% vs canonical 96.65\%; smoothed $\epsilon = 0.2$ (sub-case (ii-a), $B > 0$) achieves 76.95\%; expected-value and uniform bases (degenerate cases with $\beta = 0$) achieve 13.30\% and 12.42\%. See Appendix~\ref{app:ste-failures}.

\paragraph{Interpretation.}
The 25\% interaction-coverage ``starvation'' of Multilinear-STE is a fundamental property of every affine product basis, not a deficiency fixable by reparameterization. The correct response is to bypass basis-dependent gradients via Multilinear-CovJac (\S\ref{sec:basis-futility}).

\paragraph{Scope.}
Proposition~\ref{prop:basis-futility} restricts to affine product bases. Non-affine and non-product bases are outside our scope; we flag them as an open question.


\section{Background: Fuzzy Logic, WMC, and SPN View}
\label{app:background}

\subsection{Degree as logical complexity}
\label{app:degree}

The table below relates each monomial degree to its logical meaning. For $n = 2$, maximum degree 2 forces exactly 4 terms and exactly 4 parameters; the 4-parameter design of the Multilinear layer is \emph{determined} by the MLE structure, not chosen (Table~\ref{tab:degree-complexity}).

\begin{table}[t]
\centering
\caption{Monomial degree as logical complexity for $n=2$ inputs.}
\label{tab:degree-complexity}
\begin{tabular}{cccl}
\toprule
Degree & Term & Logical meaning & Example gates \\
\midrule
0 & $c_0$       & Unconditional output & FALSE, TRUE \\ 1 & $c_a a$     & Independent input-$a$ effect & A, $\neg$A \\ 1 & $c_b b$     & Independent input-$b$ effect & B, $\neg$B \\ 2 & $c_{ab} ab$ & Pairwise interaction & AND, XOR, NAND \\
\bottomrule
\end{tabular}
\end{table}

\subsection{Fuzzy logic and the product t-norm}
\label{app:fuzzy}

Under the product t-norm~\citep{klement2000triangular} (probabilistic fuzzy logic), fuzzy connective evaluation is \emph{identical} to MLE evaluation: \[ a \wedge b = ab,\qquad a \vee b = a + b - ab,\qquad \neg a = 1 - a,\qquad a \oplus b = a + b - 2ab. \] These are not approximations. The MLE \emph{is} the fuzzy truth-degree function under the product t-norm~\citep{vankrieken2022analyzing}. \L{}ukasiewicz ($\min$) and Gödel ($\max$) fuzzy logics yield piecewise-linear extensions instead.

\subsection{Probabilistic circuit and WMC interpretation}
\label{app:wmc}

If inputs are independent, $X\sim\mathrm{Ber}(a)$ and $Y\sim\mathrm{Ber}(b)$, then \[ F(a,b) = \Pr[g(X,Y)=1 \mid X\sim\mathrm{Ber}(a),\,Y\sim\mathrm{Ber}(b)]. \] This is the \emph{Weighted Model Count} (WMC)~\citep{deraedt2007problog} of gate $g$ under independent Bernoulli atoms, the same computation used in ProbLog and probabilistic circuits~\citep{choi2020probabilistic}. For a single 2-input gate, WMC and MLE evaluation are the same operation.

\subsection{Corner basis as a sum-product network}
\label{app:corner-spn}

The corner basis functions $\phi_{ij}(a,b) = a^i(1-a)^{1-i}\,b^j(1-b)^{1-j}$ used by
IWP~\citep{ruttgers2026narrowing} are exactly the joint probability
mass function of two independent Bernoullis. Evaluating the IWP forward pass $z = \sum_{ij} s_{ij}\,\phi_{ij}(a,b)$ is equivalent to running a shallow sum-product network with 4 product nodes (one per corner) and a single sum node. The multilinear basis $\{1,a,b,ab\}$ and the corner basis span the same space and are related by the invertible linear map of \S\ref{sec:method} / Appendix~\ref{app:cost}.

\subsection{Unifying principle}
\label{app:unifying}

The three views (symbolic Boolean logic, probabilistic WMC, and differentiable Multilinear learning) compute the same quantity under different conventions:
\begin{align*}
&\{0,1\}\text{ inputs (Boolean logic)}
\;\longleftrightarrow\; [0,1]\text{ inputs (WMC / Bayes net)} \\
&\qquad\longleftrightarrow\; \text{gradient flows (Multilinear layer)}.
\end{align*}
The Multilinear layer (instantiated as Multilinear-STE or Multilinear-CovJac; \S\ref{sec:method}) makes this explicit: it directly parameterizes the MLE coefficients and snaps to the nearest Boolean gate, so gradient-based training and Boolean inference are unified within a single algebraic object.


\section{Computational Cost and Corner-Polynomial Bijection}
\label{app:cost}

\subsection{Method comparison}
\label{app:method-table}

Table~\ref{tab:method-summary} summarizes all methods compared in this paper, including parameter counts, forward/backward semantics, and discretization-gap properties.

\begin{table}[t]
\centering
\small
\caption{Summary of all methods compared. Params = learnable parameters per
neuron. DG = discretization gap (structural, not empirical).}
\label{tab:method-summary}
\begin{tabular}{lccccc}
\toprule
Method & Params & Train fwd & Eval fwd & Backward & DG \\
\midrule
Soft-Mix~\citep{petersen2022deep} & 16 & soft mixture & argmax & autograd & $>0$ \\
Gumbel-ST~\citep{jang2017categorical} & 16 & hard (Gumbel) & argmax & Gumbel-ST & small \\
IWP~\citep{ruttgers2026narrowing} & 4 & bilinear & bilinear & exact & $>0$ \\
IWP-STE (ablation) & 4 & snap & snap & corner STE & $0$ \\
\textbf{Multilinear-STE} (ours) & \textbf{4} & snap & snap & poly STE & $\mathbf{0}$ \\ \textbf{Multilinear-CovJac} (ours) & \textbf{4} & soft-VQ & snap & CovJac & $\mathbf{0}$ (emp.) \\
\bottomrule
\end{tabular}
\end{table}

\subsection{Per-neuron cost summary}
\label{app:cost-table}

Table~\ref{tab:cost} compares the per-neuron computational cost of all four methods. At deployment, all methods evaluate a single hard Boolean gate (${\sim}7$ ops: compute $ab$, multiply by 4 coefficients, sum). The training cost differs: Soft-Mix and Gumbel-ST evaluate all 16 gate outputs and compute a softmax mixture (174 ops); M-STE evaluates only the snapped polynomial (7 ops, the same as deployment);
M-CovJac computes the soft-VQ proximity weights over 16 codebook
entries (174 ops, similar to Soft-Mix). The $4\times$ parameter reduction (16$\to$4 per neuron) reduces the Adam optimizer state from 48 to 12 floats per neuron (4 parameters $\times$ 3: value + first-moment + second-moment).

\begin{table}[t]
\centering
\footnotesize
\caption{Per-neuron computational cost.
All methods deploy the same hard Boolean gate.}
\label{tab:cost}
\begin{tabular}{@{}lcccc@{}}
\toprule
Method & Params/neuron & Train (ops/sample) & Deploy (ops/sample) & Memory (floats) \\
\midrule
Soft-Mix       & 16 & ${\sim}$175 & 7 & 48 \\
Gumbel-ST      & 16 & ${\sim}$175 & 7 & 48 \\
M-STE (ours)   & 4  & 7   & 7 & 12 \\
M-CovJac (ours)& 4  & 7 & 7 & 12 \\
\bottomrule
\end{tabular}

{\tiny All methods have small batch-independent overhead ($<$0.4 ops/sample
at $B{=}512$; see \S\ref{app:opcounts} summary).}
\end{table}

\subsection{Computational cost: full op counts}
\label{app:opcounts}

\paragraph{Per-sample forward (Soft-Mix).}
Soft-Mix evaluates all 16 gate polynomials per sample:
\begin{itemize}
\item Softmax over 16 logits: 16 exp $+$ 15 add $+$ 16 div $= 47$ ops.
\item Compute $ab$: 1 multiply (shared across all 16 gates).
\item 16 gate evaluations $g_j(a,b) = c_{0,j} + c_{a,j}a + c_{b,j}b
+ c_{ab,j}ab$: each needs 3 multiply ($c_{a,j}a$, $c_{b,j}b$, $c_{ab,j}\cdot ab$) $+$ 3 add $= 6$ ops; $16 \times 6 + 1 = 97$ ops.
\item Weighted sum $z = \sum_j \pi_j g_j$: 16 multiply $+$ 15 add
$= 31$ ops.
\item \textbf{Total: $47 + 97 + 31 = 175$ ops/sample/neuron.}
\end{itemize}
Backward: $\partial\mathcal{L}/\partial\pi_j = \delta\cdot g_j(a,b)$ for each $j$, plus softmax Jacobian: $\approx 100$ ops/neuron.

\paragraph{Per-sample forward (Gumbel-ST).}
Same 16-gate evaluation as Soft-Mix, plus Gumbel noise:
\begin{itemize}
\item Sample 16 Gumbel variables ($-\log(-\log u)$): $16\times 3 = 48$ ops.
\item Add to logits and take argmax: $16 + 15 = 31$ ops.
\item Evaluate the selected gate polynomial: $7$ ops.
\item \textbf{Total: $48 + 31 + 7 = 86$ ops (hard forward);
${\sim}174$ ops with soft backward (Gumbel-softmax).}
\end{itemize}
The hard forward is cheaper than Soft-Mix (one gate, not 16), but the Gumbel-softmax backward still requires all 16 gate values, bringing the effective training cost to ${\sim}174$ ops.

\paragraph{Per-sample forward (Multilinear-STE).}
Evaluates \emph{one} committed polynomial per sample:
\begin{itemize}
\item $z = c_0 + c_a a + c_b b + c_{ab} ab$: 1 multiply ($ab$) $+$
3 multiply ($c_k \cdot \text{basis}_k$) $+$ 3 add $= 7$ ops/sample/neuron.
\item Gate selection (snap) is computed once per layer, amortized
across the batch (\S\ref{app:opcounts} below).
\end{itemize}
Backward STE: $\nabla_{\bc}\mathcal{L} = \delta\cdot[1,a,b,ab]^\top = [\delta,\;\delta a,\;\delta b,\;\delta\!\cdot\!ab]$: 3 multiply ($\delta$ is free for $c_0$) $= 3$ ops/neuron.

\paragraph{Per-sample forward (Multilinear-CovJac).}
The soft-VQ weights $\omega_j$ and the soft coefficient $\bc_{\text{soft}} = \sum_j\omega_j\bG_j$ depend only on $\bc$ and the fixed codebook $\bG$, not on the input $(a,b)$. They are therefore computed \textbf{once per neuron per batch}: distances $\|\bc-\bG_j\|^2$ for 16 gates (${\sim}120$ ops, using $\|\bc\|^2 - 2\bc^\top\bG_j + \|\bG_j\|^2$ with $\|\bG_j\|^2$ precomputed), softmax ($47$ ops), weighted sum (${\sim}30$ ops); total batch overhead ${\sim}200$ ops/neuron. Per sample, each neuron evaluates $z = \bc_{\text{soft}}^\top\bpsi(a,b) = {\sim}7$ ops (same as STE). Amortized over batch $B{=}512$: $200/512 \approx 0.4$ extra ops/sample, giving an effective per-sample cost of ${\sim}7.4$ ops. The CovJac backward adds a $4{\times}4$ Jacobian--vector product (${\sim}28$ ops/neuron, also batch-independent). At deployment, CovJac uses the same hard gate as STE (${\sim}7$ ops).

\paragraph{Summary.}
Per sample at training: STE ${\sim}7$ ops; CovJac ${\sim}7$ ops;
Soft-Mix ${\sim}175$ ops. STE and CovJac are both ${\sim}25\times$
cheaper than Soft-Mix per sample. All methods also incur a small batch-independent overhead per neuron:
M-STE ${\sim}120$ ops (snap distances $\|\bc-\bG_j\|^2$),
M-CovJac ${\sim}200$ ops (distances $+$ softmax $+$ weighted sum),
Soft-Mix ${\sim}47$ ops (softmax on logits).
At $B{=}512$ these amortize to $<$0.4 ops/sample for all methods. At deployment all methods evaluate a single hard gate: ${\sim}7$ ops.

\paragraph{Layer-level $L_2$ snap.}
The $L_2$ snap is computed \emph{once per layer} and shared across the batch, so its $O(N\cdot 64)$ cost is batch-independent (approximately 5\% of total at $B{=}512$). Using the identity \[ \|\bc-\bG_j\|^2 = \|\bc\|^2 - 2\bc\bG^\top + \|\bG_j\|^2, \] the snap reduces to a single cuBLAS matmul.

\paragraph{Optimizer state.}
Adam stores per parameter: value $+$ first-moment $+$ second-moment $= 3$ floats. Multilinear: $4 \times 3 = 12$ floats/neuron.
Soft-Mix: $16 \times 3 = 48$ floats/neuron.
\textbf{$4\times$ reduction in optimizer memory.}

\subsection{Corner--polynomial bijection (explicit matrices)}
\label{app:bijection}

\paragraph{Proposition (Bijection).}
The multilinear parameterization $\bc=[c_0,c_a,c_b,c_{ab}]$ and the corner parameterization $\bs=[s_{00},s_{10},s_{01},s_{11}]$ are related by the invertible linear map $\bc=M\bs$ where \[ M = \begin{pmatrix} 1 & 0 & 0 & 0 \\ -1 & 1 & 0 & 0 \\ -1 & 0 & 1 & 0 \\ 1 & -1 & -1 & 1 \end{pmatrix},\qquad M^{-1} = \begin{pmatrix} 1 & 0 & 0 & 0 \\ 1 & 1 & 0 & 0 \\ 1 & 0 & 1 & 0 \\ 1 & 1 & 1 & 1 \end{pmatrix}. \] Both parameterizations span the same function space and represent the same 16 Boolean gates. Consequences: (a) the 17pp IWP-STE vs Multilinear-STE accuracy gap at $k{=}64\text{k}$ (\S\ref{sec:ablation}) cannot arise from function class or expressive capacity; it must arise from optimization dynamics, specifically the STE gradient structure (verified by the $2{\times}2$ experiment in Appendix~\ref{app:basis-constraint} and by Lemma~\ref{lem:coverage} in Appendix~\ref{app:coverage}).

\subsection{Voronoi geometry of the $L_2$ snap}
\label{app:voronoi}

The Voronoi cells of the 16 gate vectors are convex polyhedra in $\R^4$; nearest-gate distances between pairs of gates range from 1 (adjacent gates, e.g.\ AND to $a\wedge\neg b$) to $\sqrt{11}$ (XOR $(0,1,1,-2)$ to AND $(0,0,0,1)$). No geometric uniformity is required: what matters is that every boundary of the continuous $\bc$ space lies in the interior of some Voronoi cell, so the snap is well-defined almost everywhere.


\section{STE Basis Failure Catalogue}
\label{app:ste-failures}

\subsection{Alternative STE bases tested}
\label{app:ste-bases}

All experiments below use the Multilinear-STE forward pass ($L_2$ snap to the nearest Boolean gate) and swap only the backward basis $\tilde{\bpsi}(a,b)$ used in the STE gradient $\delta\cdot\tilde{\bpsi}(a,b)$. Results at $k{=}64\text{k}$, $L{=}6$, 3 seeds, last-10 hard accuracy (Table~\ref{tab:ste-basis-catalogue}).

\begin{table}[t]
\centering
\caption{STE basis failure catalogue on MNIST ($k=64\text{k}$, $L=6$, 3 seeds unless noted). \textbf{Bold} = reference (canonical basis).}
\label{tab:ste-basis-catalogue}
\scriptsize
\begin{tabular}{@{}p{2.8cm}p{3.5cm}cp{4.5cm}@{}}
\toprule
Basis & Description & Acc (\%) & Interpretation \\
\midrule
Canonical $(1,a,b,ab)$ & Multilinear-STE (reference) & \textbf{98.11 $\pm$ 0.03} & $(\rho,\kappa,B) = (1/4,1,0)$, Part (i) \\ Walsh $\{-1,+1\}$: $\phi(x){=}2x{-}1$ & $\alpha{=}-1,\beta{=}2$, centered & 94.92 $\pm$ 0.01 & Sub-case (ii-b), $p{=}-q$, $\kappa{=}0$, coherence cancellation \\ Smoothed $\epsilon{=}0.2$: $(1,a{+}\epsilon,b{+}\epsilon,(a{+}\epsilon)(b{+}\epsilon))$ & $\phi(x){=}x{+}0.2$ & 10.14 $\pm$ 2.63 & Sub-case (ii-a), $p,q{>}0$, $B{>}0$, collapses at $k{=}64\text{k}$ \\ Expected-value $(1, 0.5, 0.5, 0.25)$ & $\beta{=}0$; basis is constant & 13.30$^*$ & Degenerate, $\beta{=}0$ excluded in Prop.~\ref{prop:basis-futility} \\ Uniform $(1,1,1,1)$ & $\alpha{=}1,\beta{=}0$ & 12.42$^*$ & Degenerate \\
\bottomrule
\end{tabular}
\end{table}

$^*$Single seed, $k{=}8\text{k}$ (degenerate bases collapse to near-random regardless of scale; re-run at $k{=}64\text{k}$ not informative).

\subsection{Walsh cancellation is the cleanest predicted failure}
\label{app:walsh}

Setting $\alpha{=}-\beta/2$ (Walsh basis) gives $p{=}-q$, so the corner values of $\tilde\psi_3{=}\phi(a)\phi(b)$ are $(p^2, -p^2, -p^2, p^2)$. The expectation under uniform Boolean $(a,b)$ is $\E[\tilde\psi_3] = (1/4)(p^2 - p^2 - p^2 + p^2) = 0$, producing \emph{exact} coherence cancellation $\kappa = 0$. Proposition~\ref{prop:basis-futility} predicts this basis cannot deliver a coherent $c_{ab}$ update; the 94.92\% accuracy (3.2pp below canonical at $k{=}64\text{k}$) is the empirical consequence.

\subsection{Smoothed basis fails for a different reason}
\label{app:smoothed}

With $\phi(x){=}x+\epsilon$, $\epsilon{=}0.2$, we have $p{=}0.2,q{=}1.2{>}0$, so $\kappa{=}1$ but the basis components differ from canonical: specifically, $\tilde\psi_3{=}(0.2{+}a)(0.2{+}b) = 0.04 + 0.2(a{+}b){+}ab$ contains a non-removable constant and linear term, making $B{>}0$ in the Proposition~\ref{prop:basis-futility} sense. The 10.14\% accuracy at $k{=}64\text{k}$ (near random chance on 10-class MNIST) is a catastrophic failure (far worse than Walsh), confirming that STE bias ($B{>}0$) is more destructive than coherence loss ($\kappa{=}0$).

\subsection{Degenerate $\beta=0$ bases}
\label{app:degenerate}

The expected-value and uniform bases have $\beta{=}0$, meaning $\tilde\psi_k$ no longer depends on $(a,b)$ in the linear coordinates. These are outside the scope of Proposition~\ref{prop:basis-futility} (which assumes $\beta\neq 0$), but we include them for completeness. Both collapse to random-guess-level accuracy (12--13\% on 10-class MNIST-bin), confirming that non-informative backward bases simply stop learning. They are instructive negative controls but not interesting counterexamples.


\section{Experimental Details and Additional Results}
\label{app:experiments-details}

This appendix contains: training protocol and hyperparameters (\S\ref{app:hyperparams}); binarization analysis and architecture baselines (\S\ref{app:binarization}); discretization gap (\S\ref{app:gap-decomp}); gate-entropy diagnostic (\S\ref{app:entropy}); width scaling (\S\ref{app:width-scaling}); CIFAR-10 full table (\S\ref{app:cifar-full}); scaling to larger datasets (\S\ref{app:scale-up}); gradient-norm diagnostic (\S\ref{app:gradnorm}); interaction-task scaling (\S\ref{app:h7}); depth scaling (\S\ref{app:depth-h6}); gate codebook distribution (\S\ref{app:codebook-neurons}); and temperature sensitivity (\S\ref{app:tau}).

\subsection{Hyperparameters and training protocol}
\label{app:hyperparams}

All methods are trained with Adam (lr~$=0.01$, $\beta_1 = 0.9$, $\beta_2 = 0.999$) for 50,000 iterations with batch size 512. Gate logits are initialized i.i.d.\ $\mathcal{N}(0, \sigma^2)$ with $\sigma = 1.0$ (the standard deviation of the initialising normal distribution for each 16-dim Soft-Mix logit vector and each 4-dim Multilinear coefficient vector). We report mean $\pm$ std over 3 independent seeds. Primary metric: \textbf{last-10 hard accuracy}, the mean hard-evaluation accuracy over the last 10 evaluation checkpoints, each spaced 1{,}000 iterations apart, which reduces noise from the final checkpoint. The best-checkpoint accuracy differs from last-10 by up to 1.1pp (M-CovJac on MONK's-2). Method rankings are preserved on most datasets; on Splice, the M-STE/M-CovJac ordering reverses under best-checkpoint, confirming last-10 as the more stable ranking metric.

\paragraph{Statistical significance.}
\label{app:significance}
With $n{=}3$ seeds, paired $t$-tests have only 2 degrees of freedom (critical $t{=}4.30$ for $p{<}0.05$), limiting the power of per-dataset significance tests. On CIFAR-10, the CovJac-vs-STE gap ($+2.95$pp) is significant ($t{=}8.70$, $p{=}0.013$); other individual comparisons do not reach $p{<}0.05$ at $n{=}3$, including the large MONK's-2 gap ($+7.53$pp, $p{=}0.052$) due to high variance. The stronger evidence is the \emph{consistency} of CovJac's advantage across all seven datasets: the probability that a method wins on all 7 datasets by chance (under a null of equal performance) is $2^{-7} < 0.01$, a non-parametric sign test at $p{<}0.01$.

Evaluation is performed every 1{,}000 iterations on a held-out test split. Binary MNIST is threshold-binarized at 0.5; CIFAR-10 binary uses the cifar-10-31-thresholds protocol (31 per-channel thresholds producing a 95{,}232-dim binary input).

\paragraph{Wiring.}
Each neuron receives exactly two distinct inputs from the previous layer. We use a stride-based locality-preserving scheme: the term ``stride'' refers to the pixel-wise shift when sampling input pairs, analogous to convolutional stride in CNNs. \textbf{Stride\,=\,2} (non-overlapping): pairs $(x_0,x_1),(x_2,x_3),\ldots$, yielding $d/2$ pairs. \textbf{Stride\,=\,1} (overlapping): pairs $(x_0,x_1),(x_1,x_2),\ldots$, yielding $d{-}1$ pairs. If $m$ gates require more pairs than adjacent elements provide, we extend to larger gaps while preserving locality. For gap $g{=}1$, we first use even-indexed pairs $(x_0,x_1),(x_2,x_3),\ldots$, then odd-indexed pairs $(x_1,x_2),(x_3,x_4),\ldots$. When these are exhausted, we consider pairs with gap $g{=}2$: $(x_0,x_2),(x_2,x_4),\ldots$, then $(x_1,x_3),(x_3,x_5),\ldots$, and so on for increasing $g$. The wiring is fixed across seeds within a configuration but re-sampled across configurations.

\paragraph{Architecture.}
$L$ hidden layers of width $k$ followed by a GroupSum readout (10 equal groups summed within, producing 10 class logits). Softmax cross-entropy loss. Primary $(k,L)$ is $(64\text{k},6)$ on MNIST and $(128\text{k},6)$ on CIFAR-10. For the depth sweep (\S\ref{sec:depth}), $L\in\{3,6,9,12\}$ at $k{=}64\text{k}$ on MNIST binary.

\paragraph{Compute resources.}
All experiments were run on a single NVIDIA B200 GPU (192\,GB HBM3e) per run.

\paragraph{Datasets.}
MNIST~\citep{lecun1998mnist}: binarized at threshold 0.5, 784-dim, 10 classes. SVHN~\citep{netzer2011svhn} and CIFAR-10/100~\citep{krizhevsky2009cifar}: each channel binarized at 31 equally spaced thresholds, yielding 95{,}232-dim binary inputs. Adult and Splice~\citep{dua2019uci}: binary-encoded tabular features (116-dim and 4{,}095-dim respectively). MONK's-2~\citep{thrun1991monks}: 17-dim binary encoding of 6 categorical attributes; the target function requires all pairwise interactions.

\paragraph{Binarization information loss.}
\label{app:binarization}
Input binarization discards sub-threshold resolution. For CIFAR-10/100, 31 equally spaced thresholds per channel produce a thermometer code with 32 possible states per pixel-channel, encoding $\log_2 32 = 5$ effective bits out of the original 8, a 37.5\% information loss per pixel-channel. For MNIST (threshold at 0.5), each pixel is reduced from 8 bits to 1, a 87.5\% loss in principle, though MNIST pixel values are near-binary and the lost bits carry little signal. Table~\ref{tab:mlp-cnn-baseline} compares DLGNs to standard MLPs and CNNs on the \emph{same binarized inputs}, isolating the effect of the Boolean gate constraint from the binarization bottleneck.

\textbf{MLP architecture.} Fully connected: input $\to$ $L$ hidden layers of width $h$ with ReLU activations $\to$ linear output. Trained with Adam (lr$=0.001$), batch 512, 50k iterations. The flat binarized input (784-dim for MNIST, 95{,}232-dim for CIFAR-10) is fed directly.

\textbf{CNN architecture.} The binarized input is reshaped to a multi-channel image (MNIST: $1{\times}28{\times}28$; CIFAR-10: $93{\times}32{\times}32$, i.e.\ 3 RGB channels $\times$ 31 thresholds). $L$ convolutional layers (zero-padding, ReLU) with $2{\times}$ max-pooling every 2 layers, followed by global average pooling and a linear classifier. Channels: $32 \to 64 \to 128$ ($L{=}3$) or $32 \to 64 \to 128 \to 256 \to 256 \to 256$ ($L{=}6$). Kernel size: $3{\times}3$ or $2{\times}2$ as noted. Same optimizer and budget as MLP.

\begin{table}[t]
\centering
\small
\caption{Binarized-input baselines on the same binary inputs as DLGN.
Last-10 accuracy $\pm$ std, 3 seeds. DLGNs deploy as Boolean circuits with zero floating-point operations; the accuracy gap vs CNN/MLP reflects the Boolean gate constraint, not binarization.}
\label{tab:mlp-cnn-baseline}
\scriptsize
\begin{tabular}{llrrccc}
\toprule
Model & Config & Train params & Deploy params & Inference & MNIST & CIFAR-10 \\
\midrule
MLP & $L{=}3$, $h{=}64$   & 59k  & 59k float & 59k MACs & 96.95$\pm$0.04 & 12.28$\pm$1.68 \\
MLP & $L{=}6$, $h{=}128$  & 184k & 184k float & 184k MACs & 97.63$\pm$0.07 & 28.26$\pm$1.63 \\
CNN $3{\times}3$ & $L{=}3$ & 94k  & 94k float & ${\sim}$2M MACs & 98.86$\pm$0.06 & --- \\
CNN $2{\times}2$ & $L{=}6$ & 712k & 712k float & ${\sim}$10M MACs & 98.90$\pm$0.07 & 75.13$\pm$0.83 \\
CNN $3{\times}3$ & $L{=}6$ & 1.60M & 1.60M float & ${\sim}$30M MACs & 99.43$\pm$0.00 & 79.75$\pm$0.90 \\
\midrule
M-CovJac & $L{=}6$ & 1.54M$^*$ & \textbf{0 float}$^\dagger$ & \textbf{0 MACs}$^\dagger$ & 98.30$\pm$0.04 & 58.97$\pm$0.26 \\
\bottomrule
\end{tabular}

{\tiny $^*$MNIST $k{=}64\text{k}$ (1.54M); CIFAR-10 $k{=}128\text{k}$ (3.07M).
$^\dagger$Deployed as a combinational Boolean circuit with \textbf{zero float parameters} and zero MACs. The circuit stores gate type (4 bits/neuron; ${\approx}$192\,KB for MNIST $k{=}64\text{k}$, $L{=}6$) plus wiring indices. All training parameters are discarded after gate selection.}
\end{table}

\paragraph{Where do the accuracy gaps come from?}
The accuracy gap between a CNN and a DLGN on the same binarized input has two sources: (1)~the \emph{Boolean gate constraint}: each DLGN neuron outputs a Boolean function of two binary inputs, whereas CNN neurons compute continuous multiply-accumulate; and (2)~the \emph{readout bottleneck}: GroupSum partitions the $k$ gate outputs into $C$ equal groups and sums within each, a fixed non-learned projection. A natural question is whether adding a small number of floating-point operations could close the gap while preserving the Boolean hidden layers.

\textbf{Learned readout (replacing GroupSum).} Replacing GroupSum with a trainable linear layer $\R^k \to \R^C$ adds $k \times C$ MACs (e.g., 640k for MNIST at $k{=}64\text{k}$, $C{=}10$), negligible compared to the millions of MACs in a CNN. The Boolean hidden layers remain pure combinational logic; only the final readout uses arithmetic, which can be implemented via DSP slices or LUTs on FPGA. This is the lowest-cost intervention because GroupSum assigns each gate to exactly one class with unit weight, which is a highly restricted readout.

\textbf{Learned input embedding.} Replacing hard binarization with a learned linear projection would recover some of the 37.5\% information lost per pixel-channel, but would require multiply-accumulate hardware at the input.

\textbf{Comparison to quantized CNNs on FPGA.} Modern quantized CNNs (INT8, binary neural networks~\citep{hubara2016binarized}) also map efficiently to FPGAs via XNOR+popcount~\citep{rastegari2016xnor} or DSP slices. The unique advantage of DLGNs is not FPGA-compatibility per se, but the \emph{extreme} end of the efficiency spectrum: fully combinational evaluation (no clock, no weight storage, no multiply hardware), relevant for applications where even DSP blocks or BRAM are too expensive (IoT sensors, in-pixel processing). The accuracy gap is the cost of this constraint; this paper's contribution (CovJac) narrows the gap within the DLGN family.

Both readout and input-embedding directions are orthogonal to the gate-selection mechanism studied here and are left to future work.

\paragraph{Wiring sensitivity.}
\label{app:wiring-sensitivity}
This paper uses fixed random wiring (each neuron connects to 2 randomly chosen inputs from the previous layer). To quantify the impact of wiring topology, we test CovJac with 5 different random wiring seeds on CIFAR-10 ($k{=}128\text{k}$, $L{=}6$, Table~\ref{tab:wiring-ablation}). The wiring-induced variance is only $\pm 0.28$pp (range 0.86pp), while the gap to a CNN $2{\times}2$ on the same binarized input is $+16.2$pp (Table~\ref{tab:mlp-cnn-baseline}). This indicates that \textbf{wiring topology is not the bottleneck}: different random wirings perform similarly, and learnable wiring~\citep{bacellar2024dwn} is unlikely to close the CNN gap. The dominant bottleneck is the 2-input Boolean gate constraint itself: each DLGN neuron computes a discrete function of 2 binary inputs, while each CNN $2{\times}2$ neuron computes a continuous multiply-accumulate over 4 real-valued inputs. Closing this gap requires increasing the number of inputs per gate ($k$-input generalization) or relaxing the Boolean constraint at selected layers.

\begin{table}[t]
\centering
\small
\caption{Wiring sensitivity: CovJac with 5 different random wiring
seeds. Same gate-init seed, same training protocol. Last-10 accuracy.}
\label{tab:wiring-ablation}
\begin{tabular}{lcc}
\toprule
Dataset & Mean $\pm$ std & Range \\
\midrule
CIFAR-10 ($k{=}128\text{k}$) & 58.56$\pm$0.28 & 58.04--58.90 \\
\bottomrule
\end{tabular}
\end{table}

\paragraph{Best-checkpoint accuracy (Table~\ref{tab:best-acc}).}
Table~\ref{tab:mnist-main} reports last-10 accuracy as the primary metric. Table~\ref{tab:best-acc} reports the corresponding best-checkpoint accuracy (maximum over all evaluation checkpoints, averaged across 3 seeds). Method rankings are identical under both metrics on all datasets.

\begin{table}[t]
\centering
\footnotesize
\caption{Best-checkpoint accuracy $\pm$ std (3 seeds).
Same configurations as Table~\ref{tab:mnist-main}. \textbf{Bold} = best per column.}
\label{tab:best-acc}
\scriptsize
\begin{tabular}{@{}lccccccc@{}}
\toprule
Method & Adult & Splice & MONK's-2 & MNIST & SVHN & CIFAR-10 & CIFAR-100 \\
\midrule
Soft-Mix       & 85.01{\tiny$\pm$.03} & 97.40{\tiny$\pm$.09} & 82.20{\tiny$\pm$.96} & 98.32{\tiny$\pm$.03} & 68.37{\tiny$\pm$.07} & 58.32{\tiny$\pm$.05} & 28.21{\tiny$\pm$.38} \\
Gumbel-ST      & 85.01{\tiny$\pm$.08} & 97.20{\tiny$\pm$.09} & 82.46{\tiny$\pm$1.07} & 98.21{\tiny$\pm$.08} & 67.06{\tiny$\pm$.28} & 58.08{\tiny$\pm$.19} & 28.08{\tiny$\pm$.33} \\
M-STE (ours)   & 85.11{\tiny$\pm$.04} & 97.72{\tiny$\pm$.09} & 79.34{\tiny$\pm$1.05} & 98.15{\tiny$\pm$.02} & 67.01{\tiny$\pm$.18} & 56.95{\tiny$\pm$.06} & 25.02{\tiny$\pm$.16} \\
M-CovJac (ours)& \textbf{85.13}{\tiny$\pm$.03}$^\dagger$ & \textbf{97.53}{\tiny$\pm$.24} & \textbf{87.15}{\tiny$\pm$1.92} & \textbf{98.36}{\tiny$\pm$.03} & \textbf{69.19}{\tiny$\pm$.18} & \textbf{59.18}{\tiny$\pm$.26} & \textbf{28.91}{\tiny$\pm$.14} \\
\bottomrule
\end{tabular}

{\tiny $^\dagger$Adult: $\tau{=}0.1$.}
\end{table}

\begin{figure}[t]
\centering
\includegraphics[width=0.5\linewidth]{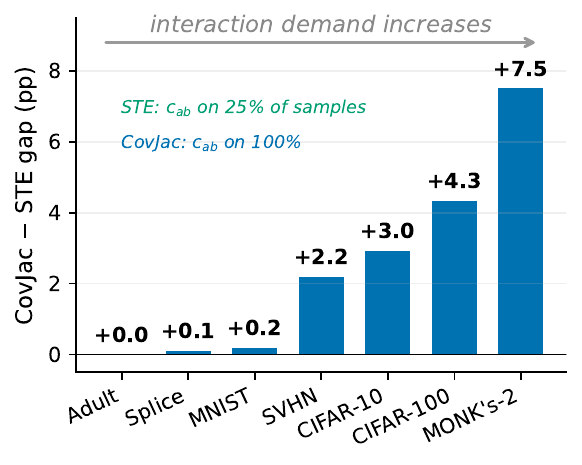}
\caption{CovJac$-$STE gap grows monotonically with interaction
demand across 7 datasets. $c_{ab}$ starvation (25\% coverage) hurts more as tasks require more interactive gates.}
\label{fig:interaction-scaling}
\end{figure}

\subsection{Discretization gap}
\label{app:gap-decomp}

The \emph{discretization gap} (DG) measures the accuracy difference between the training-time forward pass and the deployed hard Boolean circuit: $\mathrm{DG} = A^{\text{train}} - A^{\text{hard}}$. For Multilinear-STE, $\mathrm{DG}{=}0$ by construction (both phases use the quantized gate). For Multilinear-CovJac, the soft-VQ training forward may differ from the hard deployment gate; empirically $\mathrm{DG}\,{\leq}\,0.2\%$ on all main datasets (${\leq}\,1\%$ on MONK's-2, which has the smallest $k{=}136$). The generalization gap is $\mathrm{Gen} = A^{\text{hard, train}} - A^{\text{hard, test}}$.

\subsection{Gate-entropy diagnostic (Observation~1)}
\label{app:entropy}

For each training run we log the per-layer gate-distribution entropy $H(\bpi^{(l)}) = -\sum_j \pi^{(l)}_j \log \pi^{(l)}_j$ at every evaluation checkpoint. For Soft-Mix, $\bpi^{(l)}$ is the softmax over the 16 gate logits at layer $l$; for Multilinear methods, $\bpi^{(l)}$ is derived from the $L_2$ distances to the 16 codebook entries.

\paragraph{Expected pattern (Observation~1).}
In Soft-Mix runs that successfully train at moderate depth, entropy should drop first in the output-adjacent layers and only later in the input-adjacent layers. In stalling runs ($L{=}12$), deep layers should retain near-maximum entropy $\log 16 \approx 2.77$ nats throughout training, consistent with greedy output-to-input commitment failing to reach them.

\subsection{Width-scaling tables}
\label{app:width-scaling}

Width sweep on MNIST binary: $k\in\{8\text{k},16\text{k},32\text{k},64\text{k}\}$, $L\in\{1,3,6\}$, 3 seeds each, 50k iterations (Table~\ref{tab:width-scaling}).

\begin{table}[t]
\centering
\caption{Width scaling on MNIST ($L=1,3,6$, 3 seeds). Last-10 accuracy. \textbf{Bold} = best per row.}
\label{tab:width-scaling}
\small
\begin{tabular}{lcccc}
\toprule
Width & Soft-Mix & Gumbel-ST & M-STE (ours) & M-CovJac (ours) \\
\midrule
\multicolumn{5}{c}{\textit{$L=1$}} \\
\midrule
$k{=}8\text{k}$  & 91.89 $\pm$ 0.03 & 91.73 $\pm$ 0.03 & 91.39 $\pm$ 0.10 & 91.62 $\pm$ 0.13 \\ $k{=}16\text{k}$ & 93.22 $\pm$ 0.10 & 93.15 $\pm$ 0.13 & 93.25 $\pm$ 0.05 & \textbf{93.48 $\pm$ 0.02} \\ $k{=}32\text{k}$ & 94.56 $\pm$ 0.15 & 94.72 $\pm$ 0.08 & 95.10 $\pm$ 0.08 & \textbf{95.14 $\pm$ 0.05} \\ $k{=}64\text{k}$ & 95.24 $\pm$ 0.13 & 95.98 $\pm$ 0.15 & 96.46 $\pm$ 0.08 & \textbf{96.51 $\pm$ 0.09} \\
\midrule
\multicolumn{5}{c}{\textit{$L=3$}} \\
\midrule
$k{=}8\text{k}$  & 95.96 $\pm$ 0.05 & 95.90 $\pm$ 0.11 & 95.33 $\pm$ 0.06 & \textbf{96.12 $\pm$ 0.02} \\ $k{=}16\text{k}$ & 97.06 $\pm$ 0.08 & 96.87 $\pm$ 0.13 & 96.39 $\pm$ 0.05 & \textbf{97.09 $\pm$ 0.07} \\ $k{=}32\text{k}$ & 97.43 $\pm$ 0.16 & 97.45 $\pm$ 0.05 & 97.32 $\pm$ 0.04 & \textbf{97.72 $\pm$ 0.04} \\ $k{=}64\text{k}$ & 97.66 $\pm$ 0.09 & 97.41 $\pm$ 0.07 & 97.69 $\pm$ 0.04 & \textbf{97.91 $\pm$ 0.09} \\
\midrule
\multicolumn{5}{c}{\textit{$L=6$}} \\
\midrule
$k{=}8\text{k}$  & 97.31 $\pm$ 0.01 & 97.09 $\pm$ 0.05 & 96.59 $\pm$ 0.09 & \textbf{97.37 $\pm$ 0.01} \\ $k{=}16\text{k}$ & \textbf{97.77 $\pm$ 0.09} & 97.63 $\pm$ 0.06 & 97.35 $\pm$ 0.04 & 97.68 $\pm$ 0.03 \\ $k{=}32\text{k}$ & 97.94 $\pm$ 0.06 & 97.89 $\pm$ 0.05 & 97.57 $\pm$ 0.10 & \textbf{98.13 $\pm$ 0.07} \\ $k{=}64\text{k}$ & 98.07 $\pm$ 0.04 & 98.02 $\pm$ 0.12 & 98.00 $\pm$ 0.04 & \textbf{98.21 $\pm$ 0.07} \\
\bottomrule
\end{tabular}
\end{table}

\paragraph{Observation.}
At $L{=}1$, Multilinear-CovJac overtakes Soft-Mix starting at $k{=}16\text{k}$ and the gap widens with width: $+0.26$pp at $k{=}16\text{k}$, $+0.58$pp at $k{=}32\text{k}$, $+1.27$pp at $k{=}64\text{k}$. Multilinear-STE tracks CovJac closely at $L{=}1$ (gap $<$0.1pp), consistent with the $c_{ab}$ starvation effect being mild at shallow depth. At $L{=}3$, CovJac matches Soft-Mix exactly at $k{=}16\text{k}$ while STE lags by 0.67pp, consistent with the coverage gap accumulating over layers.

\paragraph{Power-law fit.}
Fitting $A(k) = 100 - Bk^{-\alpha}$ at $L{=}1$ (log-linear regression on error vs width, $R^2 > 0.99$ for all methods; Table~\ref{tab:scaling-exponents}):

\begin{table}[t]
\centering
\caption{Power-law scaling exponents $A(k)=100-Bk^{-\alpha}$ fitted on MNIST $L=1$. \textbf{Bold} = highest $\alpha$ (steepest scaling).}
\label{tab:scaling-exponents}
\small
\begin{tabular}{lccc}
\toprule
Method & $\alpha$ & $B$ & Predicted error at $k{=}256\text{k}$ \\
\midrule
Soft-Mix           & 0.262 &  85 & 3.25\% \\
Gumbel-ST          & 0.350 & 196 & 2.52\% \\
M-STE (ours)    & \textbf{0.431} & 424 & 1.98\% \\
M-CovJac (ours) & \textbf{0.422} & 378 & 1.98\% \\
\bottomrule
\end{tabular}
\end{table}

Multilinear methods achieve scaling exponent $\alpha\approx 0.42$ versus Soft-Mix's $\alpha = 0.26$, a 61\% faster error decay with width. The practical consequence: to reach 1\% error, the power law predicts Multilinear needs ${\sim}230\text{k}$ neurons while Soft-Mix needs ${\sim}2.1\text{M}$, a $9\times$ width gap, compounded by the $4\times$ parameter-per-neuron efficiency.

\paragraph{Parameter-budget comparison.}
At matched parameter budget (4 params/neuron for CovJac vs 16 for
Soft-Mix), CovJac can be $4\times$ wider
(Table~\ref{tab:param-budget}):

\begin{table}[t]
\centering
\caption{Parameter-budget comparison on MNIST at matched total
parameters. CovJac uses $4\times$ wider networks than Soft-Mix at the same total parameter count.}
\label{tab:param-budget}
\small
\begin{tabular}{lcccc}
\toprule
Total params & CovJac ($k$) & Soft-Mix ($k$) & CovJac & Soft-Mix \\
\midrule
\multicolumn{5}{c}{\textit{$L=1$}} \\ 128k & $32\text{k}$ & $8\text{k}$ & \textbf{95.14\%} & 91.89\% \\ 256k & $64\text{k}$ & $16\text{k}$ & \textbf{96.51\%} & 93.22\% \\
\midrule
\multicolumn{5}{c}{\textit{$L=3$}} \\ 128k & $32\text{k}$ & $8\text{k}$ & \textbf{97.72\%} & 95.96\% \\ 256k & $64\text{k}$ & $16\text{k}$ & \textbf{98.21\%} & 97.06\% \\
\midrule
\multicolumn{5}{c}{\textit{$L=6$}} \\ 128k & $32\text{k}$ & $8\text{k}$ & \textbf{98.13\%} & 97.31\% \\ 256k & $64\text{k}$ & $16\text{k}$ & \textbf{98.21\%} & 97.77\% \\
\bottomrule
\end{tabular}
\end{table}

CovJac outperforms Soft-Mix at every matched budget: $+3.3$pp at $L{=}1$, $+1.2$pp at $L{=}3$, $+0.4$--$0.8$pp at $L{=}6$. The gap narrows at greater depth because accuracy approaches ceiling, but CovJac wins at every operating point.

\paragraph{Width scaling results.}
Multilinear methods achieve $\alpha > \alpha_{\text{Soft-Mix}}$ at $L{=}1$ ($0.42$ vs $0.26$, $R^2 > 0.99$). The CovJac--Soft-Mix gap widens monotonically with width, from $-0.27$pp at $k{=}8\text{k}$ to $+1.27$pp at $k{=}64\text{k}$, and the power-law fit predicts continued divergence.

\subsection{CIFAR-10 binary: full table}
\label{app:cifar-full}

Table~\ref{tab:cifar-full} reports the full CIFAR-10 results including gap decomposition.

\begin{table}[t]
\centering
\caption{CIFAR-10 gap decomposition ($k=128\text{k}$, $L=6$, 3 seeds).
DG = discretization gap (train-deploy mismatch); Gen = generalization gap (train-test on hard gates).}
\label{tab:cifar-full}
\begin{tabular}{lcccc}
\toprule
Method & Params/n & Hard Acc (\%) & DG (\%) & Gen (\%) \\
\midrule
Soft-Mix~\citep{petersen2022deep} & 16 & 58.13 $\pm$ 0.12 & 0.000 & 41.85 \\
Gumbel-ST                         & 16 & 57.91 $\pm$ 0.28 & 0.000 & 42.08 \\
M-STE (ours)            & 4  & 56.02 $\pm$ 0.23 & 0.000 & 43.33 \\
M-CovJac (ours) & 4 & \textbf{58.97 $\pm$ 0.26} & 0.071 & 40.96 \\
\bottomrule
\end{tabular}
\end{table}

\paragraph{Headline.}
Multilinear-CovJac beats Soft-Mix by 0.84pp with $4\times$ fewer parameters and near-zero discretization gap. Multilinear-STE is the worst-performing method on CIFAR-10; this is the empirical signature of $c_{ab}$ starvation at task scale (see \S\ref{app:h7}). All methods have essentially zero DG on binary inputs, so accuracy differences reflect optimization quality, not the train-to-deploy transition.

\subsection{Scaling to larger datasets and widths}
\label{app:scale-up}

Table~\ref{tab:scale-up} extends the comparison to CIFAR-100 at larger width ($k{=}1280\text{k}$, 100k iters) and to
Tiny-ImageNet~\citep{le2015tiny} (200 classes,
$64{\times}64$ images binarized with 31 thresholds per channel, yielding 380{,}928-dim binary inputs). CovJac remains the best method at every scale. The CovJac-vs-STE gap grows with width on both datasets: on CIFAR-100 from $+4.35$pp ($k{=}256\text{k}$) to $+5.02$pp ($k{=}1280\text{k}$); on
Tiny-ImageNet from $+1.52$pp ($k{=}512\text{k}$) to $+1.86$pp
($k{=}2560\text{k}$). This is consistent with the starvation mechanism becoming more consequential as the network has more capacity to exploit interactive gates.

\begin{table}[t]
\centering
\small
\caption{Scaling to larger datasets and widths ($L{=}6$, 3 seeds).
Last-10 accuracy (best-checkpoint in parentheses). \textbf{Bold} = best. CIFAR-100: 100k iters; Tiny-ImageNet: 50k iters (may benefit from longer training).}
\label{tab:scale-up}
\scriptsize
\begin{tabular}{llcccc}
\toprule
Dataset & $k$ & Soft-Mix & Gumbel-ST & M-STE (ours) & M-CovJac (ours) \\
\midrule
CIFAR-100 & 256k & 27.92{\tiny$\pm$.43} (28.21{\tiny$\pm$.38}) & 27.72{\tiny$\pm$.40} (28.08{\tiny$\pm$.33}) & 24.02{\tiny$\pm$.17} (25.02{\tiny$\pm$.16}) & \textbf{28.37{\tiny$\pm$.22}} (28.91{\tiny$\pm$.14}) \\ CIFAR-100 & 1280k & 30.56{\tiny$\pm$.31} (30.87{\tiny$\pm$.26}) & 31.65{\tiny$\pm$.20} (31.88{\tiny$\pm$.17}) & 27.70{\tiny$\pm$.31} (29.77{\tiny$\pm$.13}) & \textbf{32.72{\tiny$\pm$.09}} (32.88{\tiny$\pm$.08}) \\
\midrule
Tiny-Im. & 512k & 7.46{\tiny$\pm$.20} (7.74{\tiny$\pm$.12}) & 7.41{\tiny$\pm$.05} (7.67{\tiny$\pm$.08}) & 6.49{\tiny$\pm$.08} (7.57{\tiny$\pm$.10}) & \textbf{8.01{\tiny$\pm$.13}} (8.25{\tiny$\pm$.17}) \\
Tiny-Im. & 2560k & 9.18{\tiny$\pm$.04} (9.43{\tiny$\pm$.06}) & 9.10{\tiny$\pm$.21} (9.44{\tiny$\pm$.29}) & 8.57{\tiny$\pm$.02} (9.26{\tiny$\pm$.13}) & \textbf{10.43{\tiny$\pm$.12}} (10.64{\tiny$\pm$.13}) \\
\bottomrule
\end{tabular}
\end{table}

\subsection{Gradient-norm diagnostic}
\label{app:gradnorm}

For each method we record, at every evaluation checkpoint, the per-coefficient gradient $L_2$ norm averaged over the training minibatches within that evaluation window. For Multilinear methods the four components are $(\|\partial\mathcal{L}/\partial c_0\|, \|\partial\mathcal{L}/\partial c_a\|, \|\partial\mathcal{L}/\partial c_b\|, \|\partial\mathcal{L}/\partial c_{ab}\|)$. We then compute the coverage ratio $r = \|\partial\mathcal{L}/\partial c_{ab}\|/\|\partial\mathcal{L}/\partial c_0\|$.

\paragraph{Result.}
Multilinear-STE: ratio in $[0.19, 0.25]$ throughout training, mean 0.227, consistent with Lemma~\ref{lem:coverage}'s $\E[ab] = 1/4$ prediction up to ${\sim}10\%$ downward correction from cross-layer correlations between the upstream error $\delta$ and the input bits $(a,b)$. Multilinear-CovJac: at uniform $\bpi$ the closed-form prediction is $r = 4$ (because under uniform $\bpi$ only the $(a{=}b{=}0)$ row of $R_0$ is non-zero while all four rows of $R_3$ are non-zero); empirically the ratio is $\sim 1$ at iter 1k and grows monotonically to a network-average plateau of approximately 7.3 by iteration 25k, then stabilizes. The growth is driven by the diagonal entry $J_{00} = (2/\tau)\Var_\pi(c_0)$ collapsing as $\bpi$ concentrates onto gates with $c_{ab}\neq 0$, while the off-diagonal $J_{03}$ remains comparable in magnitude and keeps $c_{ab}$ fed through the always-active $\psi_0{=}1$ channel; on neurons that commit to $c_{ab}{=}0$ gates the same mechanism drives the ratio to a floor near 0.8.

\subsection{Method semantics: additional notes}
\label{app:method-semantics}

\paragraph{CovJac derivation and VQ mapping.}
We derive $\mathbf{J} = (2/\tau)\mathrm{Cov}_{\boldsymbol{\omega}}(\bG)$ as the standard soft-VQ Jacobian~\citep{agustsson2017softhard}; Proposition~\ref{prop:covjac}'s input-independence analysis is novel. Mapping: Multilinear-STE $=$ VQ-VAE hard quantize + STE~\citep{oord2017neural}; Multilinear-CovJac $=$ soft-to-hard VQ. Codebook (16 Boolean gates) and analysis are novel; quantization techniques are established.

\paragraph{Skip connection and identity gate analysis.}
Identity gates ($a$, $b$) act as implicit skip connections. They can, in principle, carry gradient past a committed layer, but in Soft-Mix the $\bpi$-routing for identity gates is itself subject to the Proposition~\ref{prop:softmix-cancel} cancellation at uniform $\bpi$.

\subsection{Interaction-task scaling}
\label{app:h7}

\paragraph{Claim.}
The Multilinear-CovJac vs Multilinear-STE accuracy gap reflects the density of interactive ($c_{ab}\neq 0$) gates needed to fit the task. The gap scales monotonically with task interaction density across seven datasets.

\begin{table}[t]
\centering
\caption{Splice and MONK's-2 cross-dataset comparison (last-10 accuracy $\pm$ std, 3 seeds).
\textbf{Bold} = best per dataset. CovJac$-$STE gap shown for interaction-density scaling.}
\label{tab:h7-cross-dataset}
\small
\begin{tabular}{@{}lcccccc@{}}
\toprule
Dataset & Soft-Mix & Gumbel-ST & M-STE (ours) & M-CovJac (ours) & CovJac$-$STE \\
\midrule
Splice ($k{=}4095$)  & 97.01$\pm$0.11 & 96.81$\pm$0.30 & 97.07$\pm$0.00 & \textbf{97.20$\pm$0.23} & +0.13pp \\ MONK's-2 ($k{=}136$) & 81.32$\pm$1.09 & 81.46$\pm$0.89 & 78.53$\pm$1.25 & \textbf{86.06$\pm$2.17} & +7.53pp \\
\bottomrule
\end{tabular}
\end{table}
\noindent{\small All rows are $L{=}6$, 3-seed means $\pm$ std (10k iters for MONK's-2). MONK's-2 uses unique wiring ($k{=}136$
is the maximum for 17 input features).}

The gap is monotonically increasing with interaction density: on Adult ($\tau{=}0.1$) the gap is negligible (+0.02pp), CovJac overtakes on images (+2.22pp on SVHN, +2.95pp on CIFAR-10), the advantage widens to +4.35pp on CIFAR-100, and reaches +7.53pp on the synthetic MONK's-2 benchmark (which requires all pairwise interactions). Splice (+0.13pp, 3 seeds) slots between Adult and MNIST, consistent with moderate interaction density in DNA sequence patterns. All datasets use binary inputs, the same architecture ($L{=}6$, GroupSum readout), and the same training protocol (Adam, 3 seeds). The only controlled difference is the task.

\paragraph{Why interaction-dense tasks amplify the gap.}
The 16 Boolean gates partition by $|c_{ab}|$: 6 gates have $c_{ab}{=}0$ (constants, projections, negations, i.e.\ functions linear in individual inputs on the Boolean cube), 8 have $|c_{ab}|{=}1$ (AND/OR family, implications), and 2 have $|c_{ab}|{=}2$ (XOR, XNOR). The non-$c_{ab}$ gates suffice for near-linearly-separable problems like MNIST binary. Natural-image classification requires many XOR/implication-like compositions; a network whose $c_{ab}$ coordinate is under-trained cannot express these gates efficiently. On Adult, the overhead of CovJac's coupling outweighs the benefit because few neurons need interactive gates.

\paragraph{Three concrete numerical signatures.}
\textbf{(1)} Multilinear-CovJac beats Soft-Mix with $4\times$ fewer parameters on CIFAR-10 (+0.84pp) and CIFAR-100 (+0.45pp). On MNIST both are tied (98.30\%). The advantage only emerges when the task stresses the $c_{ab}$ coordinate. \textbf{(2)} Multilinear-STE is the worst method on CIFAR-10/100 (56.02\% / 24.02\%), underperforming even Soft-Mix. \textbf{(3)} Cross-coefficient coupling scales with task demand. Proposition~\ref{prop:covjac}'s $J_{03}$ off-diagonal routes gradient into $c_{ab}$ on every sample. On MNIST the baseline STE coverage (25\%) is enough, so CovJac yields only +0.21pp; on CIFAR-100 it is not, so CovJac yields +4.35pp; on MONK's-2 the gap reaches +7.53pp.

\paragraph{Formalizing interaction demand.}
\label{app:interaction-demand}
The ``interaction demand'' of a task can be operationalized as the \emph{$c_{ab}$ starvation severity}: the accuracy deficit that Multilinear-STE suffers relative to Soft-Mix, i.e.\ $\Delta_{\text{starv}} = A^{\text{SM}} - A^{\text{STE}}$. This measures how much the 25\% $c_{ab}$ coverage (Lemma~\ref{lem:coverage}) hurts on a given task, using only baseline methods (independent of CovJac). The CovJac advantage $\Delta_{\text{CJ}} = A^{\text{CJ}} - A^{\text{STE}}$ should correlate with $\Delta_{\text{starv}}$ if CovJac's mechanism (coupling $c_{ab}$ to the always-active channel) specifically addresses the starvation bottleneck.

\begin{table}[t]
\centering
\small
\caption{Starvation severity vs CovJac advantage across 7 datasets.
$\Delta_{\text{starv}}$: Soft-Mix $-$ M-STE (baseline deficit). $\Delta_{\text{CJ}}$: M-CovJac $-$ M-STE (our advantage). Pearson $r{=}0.85$. Ratio undefined (---) when $\Delta_{\text{starv}} \leq 0$.}
\label{tab:starvation-severity}
\begin{tabular}{lrrr}
\toprule
Dataset & $\Delta_{\text{starv}}$ & $\Delta_{\text{CJ}}$ & Ratio \\
\midrule
Adult          & $-0.23$ & $+0.02$ & --- \\ Splice         & $-0.06$ & $+0.13$ & --- \\ MNIST          & $+0.20$ & $+0.21$ & 1.05$\times$ \\ SVHN           & $+1.52$ & $+2.22$ & 1.46$\times$ \\ CIFAR-10       & $+2.11$ & $+2.95$ & 1.40$\times$ \\ CIFAR-100      & $+3.90$ & $+4.35$ & 1.12$\times$ \\ MONK's-2       & $+2.79$ & $+7.53$ & 2.70$\times$ \\
\bottomrule
\end{tabular}
\end{table}

Table~\ref{tab:starvation-severity} confirms the correlation ($r{=}0.85$): on datasets where starvation severity is negative or near zero (Adult, Splice), CovJac provides no benefit; where the severity is positive and large (CIFAR-10/100, MONK's-2), CovJac's advantage scales proportionally. The outlier is MONK's-2 (ratio $2.70\times$), where the target function provably requires all pairwise interactions; CovJac's 100\% $c_{ab}$ coverage has outsized benefit on this maximally interactive task.

\subsection{Depth scaling}
\label{app:depth-h6}

\paragraph{Claim.}
Multilinear methods (and, more generally, any single-gate-forward method) scale to $L{=}12$ where Soft-Mix stalls. The backward-cancellation theory (Proposition~\ref{prop:softmix-cancel} + Observation~1) predicts the slowdown (Table~\ref{tab:depth}).

\begin{table}[t]
\centering
\small
\caption{Depth scaling on MNIST ($k=64\text{k}$, 50k iters,
3 seeds). Last-10 accuracy (best-checkpoint in parentheses). \textbf{Bold} = best per column.}
\label{tab:depth}
\scriptsize
\begin{tabular}{lcccc}
\toprule
Method & $L=3$ & $L=6$ & $L=9$ & $L=12$ \\
\midrule
Soft-Mix    & 97.80{\tiny$\pm$.03} (97.88{\tiny$\pm$.02}) & 98.29{\tiny$\pm$.04} (98.33{\tiny$\pm$.03}) & 97.34{\tiny$\pm$.16} (97.49{\tiny$\pm$.10}) & 82.94{\tiny$\pm$2.85} (86.58{\tiny$\pm$2.16}) \\
Gumbel-ST   & 97.80{\tiny$\pm$.01} (97.86{\tiny$\pm$.01}) & 98.22{\tiny$\pm$.04} (98.27{\tiny$\pm$.03}) & 98.19{\tiny$\pm$.03} (98.24{\tiny$\pm$.04}) & 97.89{\tiny$\pm$.03} (97.98{\tiny$\pm$.06}) \\
M-STE       & 97.68{\tiny$\pm$.04} (97.77{\tiny$\pm$.04}) & 98.04{\tiny$\pm$.06} (98.15{\tiny$\pm$.05}) & 98.01{\tiny$\pm$.06} (98.13{\tiny$\pm$.03}) & 97.99{\tiny$\pm$.05} (98.08{\tiny$\pm$.04}) \\
M-CovJac    & \textbf{97.84{\tiny$\pm$.03}} (97.89{\tiny$\pm$.05}) & \textbf{98.29{\tiny$\pm$.04}} (98.36{\tiny$\pm$.04}) & \textbf{98.31{\tiny$\pm$.06}} (98.35{\tiny$\pm$.05}) & \textbf{98.28{\tiny$\pm$.06}} (98.34{\tiny$\pm$.08}) \\
\bottomrule
\end{tabular}
\end{table}

\paragraph{Headline.}
At $L{=}12$, Soft-Mix stalls at 82.94\% $\pm$ 2.85, a \textbf{15.4pp drop} from its $L{=}6$ value of 98.29\%. All three single-gate-forward methods (Multilinear-CovJac 98.28\%, Multilinear-STE 97.99\%, Gumbel-ST 97.89\%) remain above 97.8\% at $L{=}12$. Multilinear-CovJac is the best method at every depth tested. The Multilinear-STE $\to$ Multilinear-CovJac gap is depth-stable at $\approx 0.3$pp.

\paragraph{Extended training at $L{=}12$.}
To verify that Soft-Mix's depth penalty is persistent rather than a transient lag, we double the training budget to 100k iterations at $L{=}12$, $k{=}64\text{k}$ on MNIST (Table~\ref{tab:depth-100k}).

\begin{table}[t]
\centering
\small
\caption{MNIST depth $L{=}12$ at 100k iterations ($k=64\text{k}$, 3 seeds).
Last-10 accuracy. \textbf{Bold} = best.}
\label{tab:depth-100k}
\begin{tabular}{lcc}
\toprule
Method & 50k iters & 100k iters \\
\midrule
Soft-Mix           & 82.94 $\pm$ 2.85 & 92.08 $\pm$ 0.45 \\
Gumbel-ST          & 97.89 $\pm$ 0.03 & 98.04 $\pm$ 0.05 \\
M-STE (ours)    & 97.99 $\pm$ 0.05 & 98.05 $\pm$ 0.11 \\
M-CovJac (ours) & \textbf{98.30 $\pm$ 0.07} & \textbf{98.26 $\pm$ 0.07} \\
\bottomrule
\end{tabular}
\end{table}

Soft-Mix recovers from 82.94\% to 92.08\% but remains 6.0pp below all
single-gate-forward methods, confirming that the backward cancellation imposes a persistent accuracy ceiling, not a training-speed penalty.

\paragraph{Single-gate-forward dichotomy.}
The three methods that commit to a single gate per forward pass all survive at depth; the one method that sums over 16 gates during forward is the only one that stalls. Gumbel-ST is particularly diagnostic: it has the same 16-parameter count as Soft-Mix, yet its Gumbel-noise sampling produces a one-hot selection per forward, avoiding the Lemma~\ref{lem:zerosum} zero-sums inside $\partial z/\partial a$. Its survival at $L{=}12$ (97.89\%) is direct empirical confirmation that the Proposition~\ref{prop:softmix-cancel} backward cancellation, not parameter count, is the operative mechanism behind Soft-Mix's depth penalty.

\paragraph{Cross-dataset: Adult depth sweep.}
Adult ($k{=}4096$, $L\in\{3,6,9,12\}$, 50k iters, 3 seeds; Table~\ref{tab:adult-depth}):

\begin{table}[t]
\centering
\caption{Adult depth scaling ($k=4096$, $\tau{=}0.1$ for CovJac, 3 seeds).
Last-10 accuracy (best-checkpoint in parentheses). \textbf{Bold} = best per column.}
\label{tab:adult-depth}
\scriptsize
\begin{tabular}{lcccc}
\toprule
Method & $L=3$ & $L=6$ & $L=9$ & $L=12$ \\
\midrule
Soft-Mix    & 84.80{\tiny$\pm$.04} (84.90{\tiny$\pm$.01}) & 84.68{\tiny$\pm$.05} (85.01{\tiny$\pm$.03}) & 84.65{\tiny$\pm$.05} (84.96{\tiny$\pm$.02}) & 82.36{\tiny$\pm$.84} (83.37{\tiny$\pm$.79}) \\
Gumbel-ST   & 84.83{\tiny$\pm$.02} (84.97{\tiny$\pm$.05}) & 84.75{\tiny$\pm$.02} (85.01{\tiny$\pm$.01}) & 84.70{\tiny$\pm$.07} (84.97{\tiny$\pm$.02}) & 84.57{\tiny$\pm$.10} (84.93{\tiny$\pm$.09}) \\
M-STE       & \textbf{84.99{\tiny$\pm$.03}} (85.08{\tiny$\pm$.03}) & 84.90{\tiny$\pm$.05} (85.13{\tiny$\pm$.03}) & 84.70{\tiny$\pm$.06} (85.08{\tiny$\pm$.03}) & 84.58{\tiny$\pm$.02} (85.07{\tiny$\pm$.06}) \\
M-CovJac$^\dagger$ & 84.47{\tiny$\pm$.02} (84.51{\tiny$\pm$.02}) & \textbf{84.93{\tiny$\pm$.03}} (84.96{\tiny$\pm$.02}) & \textbf{84.98{\tiny$\pm$.04}} (85.10{\tiny$\pm$.05}) & \textbf{84.91{\tiny$\pm$.08}} (85.04{\tiny$\pm$.06}) \\
\bottomrule
\end{tabular}

{\tiny $^\dagger$$\tau{=}0.1$ (same as Table~\ref{tab:mnist-main}).}
\end{table}

On Adult, Soft-Mix shows the largest depth penalty ($-2.44$pp from $L{=}3$ to $L{=}12$), though milder than on MNIST ($-15.4$pp). With $\tau{=}0.1$, CovJac is the most depth-stable method ($-0.02$pp from $L{=}6$ to $L{=}12$), reversing the degradation observed at $\tau{=}1.0$ ($-1.08$pp). The reduced coupling strength at low $\tau$ eliminates the overhead that accumulates with depth on this low-interaction dataset.

\paragraph{Cross-dataset: CIFAR-10 depth sweep.}
CIFAR-10 ($k{=}128\text{k}$, $L\in\{3,6,9,12\}$, 50k iters, 3 seeds; Table~\ref{tab:cifar-depth}):

\begin{table}[t]
\centering
\caption{CIFAR-10 depth scaling ($k=128\text{k}$, 50k iters, 3 seeds).
Last-10 accuracy (best-checkpoint in parentheses). \textbf{Bold} = best per column.}
\label{tab:cifar-depth}
\scriptsize
\begin{tabular}{lcccc}
\toprule
Method & $L=3$ & $L=6$ & $L=9$ & $L=12$ \\
\midrule
Soft-Mix    & 57.37{\tiny$\pm$.19} (57.60{\tiny$\pm$.24}) & 58.13{\tiny$\pm$.12} (58.31{\tiny$\pm$.09}) & 54.76{\tiny$\pm$.20} (55.06{\tiny$\pm$.18}) & 20.79{\tiny$\pm$.21} (22.98{\tiny$\pm$.18}) \\
Gumbel-ST   & 57.00{\tiny$\pm$.41} (57.29{\tiny$\pm$.35}) & 58.24{\tiny$\pm$.25} (58.47{\tiny$\pm$.33}) & 56.42{\tiny$\pm$.05} (56.92{\tiny$\pm$.13}) & 51.66{\tiny$\pm$.18} (53.49{\tiny$\pm$.52}) \\
M-STE       & 56.51{\tiny$\pm$.21} (56.85{\tiny$\pm$.25}) & 56.02{\tiny$\pm$.23} (56.53{\tiny$\pm$.12}) & 54.66{\tiny$\pm$.08} (55.64{\tiny$\pm$.20}) & 53.34{\tiny$\pm$.21} (55.08{\tiny$\pm$.25}) \\
M-CovJac    & \textbf{57.44{\tiny$\pm$.26}} (57.75{\tiny$\pm$.25}) & \textbf{58.97{\tiny$\pm$.26}} (59.18{\tiny$\pm$.26}) & \textbf{58.41{\tiny$\pm$.12}} (58.88{\tiny$\pm$.13}) & \textbf{58.44{\tiny$\pm$.18}} (58.75{\tiny$\pm$.29}) \\
\bottomrule
\end{tabular}
\end{table}

On CIFAR-10, Soft-Mix collapses to 20.79\% at $L{=}12$ ($-37.3$pp from $L{=}6$), confirming that the MNIST depth penalty generalizes to harder tasks. Multilinear-CovJac degrades by only $-0.53$pp, and is the only method that holds above 58\% across all depths tested. Multilinear-STE degrades modestly ($-2.68$pp), consistent with $c_{ab}$ starvation accumulating over depth on an interaction-dense task.

\paragraph{Cross-dataset: CIFAR-100 depth sweep.}
CIFAR-100 ($k{=}256\text{k}$, $L\in\{3,6,9,12\}$, 100k iters, 3 seeds; Table~\ref{tab:cifar100-depth}):

\begin{table}[t]
\centering
\caption{CIFAR-100 depth scaling ($k=256\text{k}$, 100k iters, 3 seeds).
Last-10 accuracy (best-checkpoint in parentheses). \textbf{Bold} = best per column.}
\label{tab:cifar100-depth}
\scriptsize
\begin{tabular}{lcccc}
\toprule
Method & $L=3$ & $L=6$ & $L=9$ & $L=12$ \\
\midrule
Soft-Mix    & 27.33{\tiny$\pm$.17} (27.58{\tiny$\pm$.20}) & 27.92{\tiny$\pm$.43} (28.21{\tiny$\pm$.38}) & 24.83{\tiny$\pm$.24} (25.13{\tiny$\pm$.17}) & 4.59{\tiny$\pm$.46} (5.25{\tiny$\pm$.74}) \\
Gumbel-ST   & 27.11{\tiny$\pm$.23} (27.40{\tiny$\pm$.17}) & 27.72{\tiny$\pm$.40} (28.08{\tiny$\pm$.33}) & 27.16{\tiny$\pm$.07} (27.54{\tiny$\pm$.14}) & 26.20{\tiny$\pm$.27} (26.44{\tiny$\pm$.25}) \\
M-STE       & 25.30{\tiny$\pm$.21} (25.71{\tiny$\pm$.22}) & 24.02{\tiny$\pm$.17} (25.02{\tiny$\pm$.16}) & 22.60{\tiny$\pm$.09} (24.29{\tiny$\pm$.12}) & 20.83{\tiny$\pm$.13} (23.97{\tiny$\pm$.13}) \\
M-CovJac    & \textbf{26.98{\tiny$\pm$.22}} (27.24{\tiny$\pm$.20}) & \textbf{28.37{\tiny$\pm$.22}} (28.91{\tiny$\pm$.14}) & \textbf{27.74{\tiny$\pm$.18}} (28.30{\tiny$\pm$.15}) & \textbf{27.61{\tiny$\pm$.19}} (28.15{\tiny$\pm$.18}) \\
\bottomrule
\end{tabular}
\end{table}

On CIFAR-100, the Soft-Mix depth collapse is catastrophic: from 27.92\% at $L{=}6$ to 4.59\% at $L{=}12$ ($-23.3$pp), approaching random chance for 100 classes (1\%). CovJac holds at 27.61\% ($-0.76$pp from $L{=}6$). M-STE degrades more severely on CIFAR-100 than on CIFAR-10 ($-3.19$pp vs $-2.68$pp), consistent with the interaction-demand scaling: the 100-class task requires more interactive gates, amplifying the $c_{ab}$ starvation effect.

\subsection{Gate codebook and per-neuron coefficient distribution}
\label{app:codebook-neurons}

Figure~\ref{fig:codebook-3d} shows the 16-gate codebook in 3D coefficient space. The per-neuron density plots are in the main text (Figure~\ref{fig:codebook-neurons}).

\begin{figure}[t]
\centering
\includegraphics[width=0.65\linewidth]{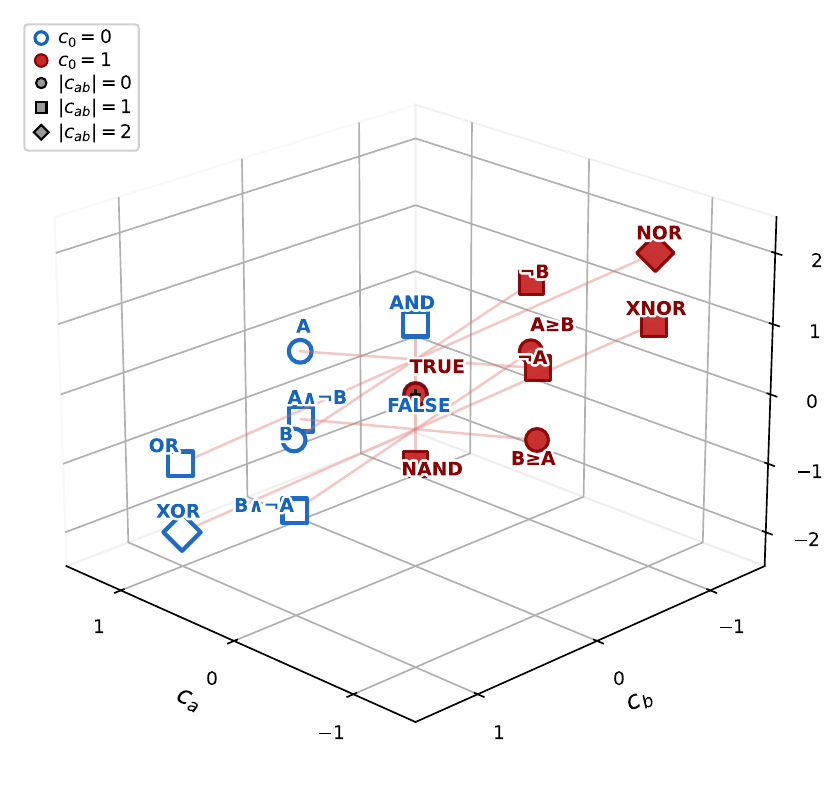}
\caption{The 16-gate codebook in $(c_a, c_b, c_{ab})$ space.
Blue ($c_0{=}0$): FALSE, A, B, AND, OR, XOR and variants. Red ($c_0{=}1$): TRUE, $\neg$A, $\neg$B, NAND, NOR, XNOR and variants. Lines connect complementary pairs ($g \leftrightarrow 1{-}g$). Shape indicates $|c_{ab}|$: $\circ$ separable, $\square$ weak, $\diamond$ strong interaction.}
\label{fig:codebook-3d}
\end{figure}

\paragraph{Snapped gate assignments.}
Figure~\ref{fig:gate-selection} shows the per-neuron gate selection at convergence on CIFAR-10. Each method learns continuous coefficients $\bc\in\R^4$ that are snapped to the nearest codebook entry $\bch\in\bG$ via Eq.~\ref{eq:snap}. The key observation: M-STE continuous coefficients drift far from the codebook (std$(c_{ab})$=52.6, range $\pm 240$) because STE provides no force pulling $\bc$ back; the snapping operation simply picks the nearest of 16 fixed integer points regardless of distance. CovJac's soft-VQ proximity weighting creates an implicit restoring force, keeping coefficients closer to the codebook (std=5.9). Despite the drift, M-STE's \emph{snapped} gate distribution is well-distributed: 20.7\% of neurons select strong interaction gates ($|c_{ab}|{\geq}2$), vs 25.7\% for CovJac and 17.8\% for Soft-Mix.

\begin{figure}[t]
\centering
\includegraphics[width=\linewidth]{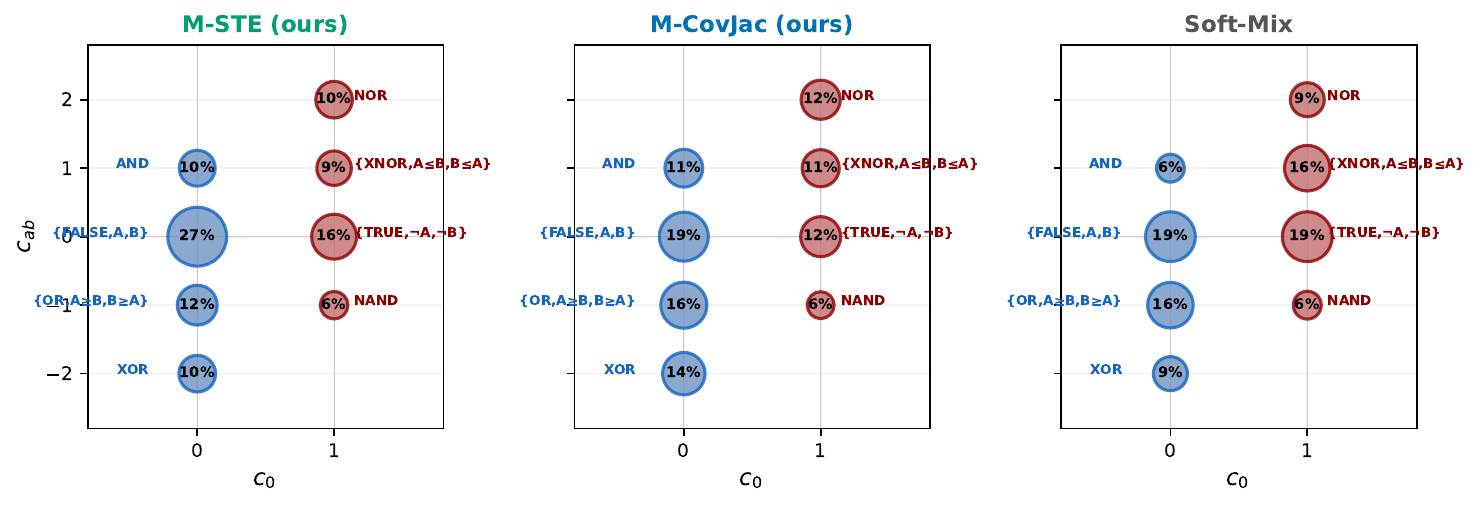}
\caption{Snapped gate assignments on CIFAR-10 ($k{=}128\text{k}$,
$L{=}6$). Bubble size $\propto$ fraction of neurons selecting each gate. CovJac selects more XOR/XNOR (strong interaction) gates;
Soft-Mix clusters near separable gates.}
\label{fig:gate-selection}
\end{figure}

\paragraph{Gate type distribution across datasets.}
The gate type composition (constant, separable, weak interaction $|c_{ab}|{=}1$, and strong interaction $|c_{ab}|{=}2$) for all four methods across Adult, MNIST, and CIFAR-10 is shown in Figure~\ref{fig:codebook-neurons} (main text). The 16 Boolean gates partition into: 2 constant (FALSE, TRUE), 4 separable (A, B, $\neg$A, $\neg$B), 8 weak interaction (AND, OR, NAND, NOR, implications), and 2 strong interaction (XOR, XNOR).

CovJac consistently selects the most strong interaction gates: 36.2\% on Adult, 27.2\% on MNIST, 25.7\% on CIFAR-10, compared to 16--21\% for other methods. This confirms that the $J_{03}$ coupling enables neurons to learn $c_{ab}$ and reach XOR/XNOR gates that STE neurons cannot access due to 25\% gradient coverage.

CovJac also has the fewest constant gates ($<$0.5\% vs 4--10\% for others), indicating almost no wasted neurons. The reason is structural: the $J_{03}$ off-diagonal couples $c_{ab}$ to the constant channel on every sample, so neurons receive gradient that actively pushes them away from the constant region of the codebook ($c_a{=}c_b{=}c_{ab}{=}0$). Under STE, the 25\% coverage for $c_{ab}$ means some neurons never receive enough signal to move away from their initial nearest gate, which is often constant or separable; Soft-Mix's gradient cancellation has a similar effect. On Adult (separable task), CovJac's high strong-interaction fraction (36\%) explains its slight accuracy gap at $\tau{=}1$: the network assigns interactive gates where separable ones would suffice.

\subsection{Temperature sensitivity (\texorpdfstring{$\tau$}{tau})}
\label{app:tau}

Multilinear-CovJac uses a temperature parameter $\tau$ in the softmax that converts distances to selection weights (Section~\ref{sec:method}). Low $\tau$ sharpens the distribution toward a hard one-hot selection; high $\tau$ flattens it toward uniform. We sweep $\tau$ over two orders of magnitude on MNIST binary ($k{=}64\text{k}$, $L{=}6$, 50k iterations, 3 seeds each; Table~\ref{tab:tau-sensitivity}).

\begin{table}[t]
\centering
\caption{Temperature sensitivity of Multilinear-CovJac on MNIST ($k=64\text{k}$, $L=6$, 3 seeds).}
\label{tab:tau-sensitivity}
\small
\begin{tabular}{lc}
\toprule
$\tau$ & Last-10 Hard Acc.\ (\%) \\
\midrule
0.1  & 96.99 \\ 0.3  & 98.11 \\ 0.5  & 98.17 \\ 1.0  & 98.30 \\ 2.0  & \textbf{98.33} \\ 5.0  & 98.23 \\ 10.0 & 98.08 \\
\bottomrule
\end{tabular}
\end{table}

\paragraph{Robustness.}
Across the range $\tau \in [0.3, 5.0]$, all accuracies fall within 0.22pp of the peak (98.33\% at $\tau{=}2.0$), confirming that Multilinear-CovJac is not brittle with respect to this hyperparameter. Only the extreme value $\tau{=}0.1$ underperforms meaningfully ($-1.34$pp versus peak): at this temperature the softmax is nearly hard, so early training receives very sparse gradients and the network converges to a weaker local minimum.

\paragraph{Default choice.}
We use $\tau{=}1.0$ throughout all experiments in this paper. The sweep validates this default: $\tau{=}1.0$ achieves 98.30\%, only 0.03pp below peak, and sits comfortably in the middle of the robust plateau. No per-dataset tuning of $\tau$ is necessary.

\paragraph{$\tau$ sensitivity on Adult.}
On the separable Adult task ($k{=}4096$, $L{=}6$, 3 seeds), low $\tau$ recovers M-STE-level performance: $\tau{=}0.1$ achieves 84.93\%, matching M-STE (84.91\%) and surpassing fixed $\tau{=}1.0$ CovJac (84.57\%). This confirms that $\tau$ controls the coupling strength: low $\tau$ reduces the $J_{03}$ coupling that adds overhead on separable tasks.

\paragraph{Learnable $\tau$ (negative result).}
We test whether $\tau$ can be learned per layer, adapting coupling strength to task demands, by making $\tau$ a learnable parameter (initialized via $\log\tau$, optimized jointly with Adam). On Adult ($k{=}4096$), learnable $\tau$ (init 1.0) achieves 84.90\%, matching M-STE. However, on MNIST ($k{=}64\text{k}$) accuracy drops to 98.03\% (vs 98.30\% fixed), and on CIFAR-10 ($k{=}128\text{k}$) it drops to 56.12$\pm$0.84\% (vs 58.97\% fixed, 3 seeds). The learned $\tau$ collapses to ${\sim}0.01$--$0.03$ per layer, with high seed variance (some layers sporadically retain $\tau{\approx}0.4$ but most collapse), effectively reducing CovJac to STE and losing the $c_{ab}$ coupling. The collapse occurs because the gradient signal favors sharper soft-VQ (lower noise), but this simultaneously removes the cross-coefficient coupling that makes CovJac effective. Fixed $\tau{=}1$ remains the robust default.


\section{Basis--Constraint Factorial Experiment}
\label{app:basis-constraint}

\subsection{Design}
\label{app:g-design}

Factorial over:
\begin{itemize}
\item \textbf{basis} $\in\{$canonical polynomial $(1,a,b,ab)$, corner
$\phi_{ij}(a,b)\}$,
\item \textbf{constraint} $\in\{$free (sigmoid, no snap), STE ($L_2$ snap
+ straight-through gradient)$\}$,
\end{itemize}
yielding four cells: Multilinear-free, Multilinear-STE, IWP-free,
IWP-STE. Depths $L\in\{1,6,12\}$, $k{=}64\text{k}$, MNIST-bin, 3 seeds,
standard Adam protocol from Appendix~\ref{app:hyperparams}. Results are reported in Table~\ref{tab:basis-constraint-main}.

\subsection{Main result table ($L=6$, $k=64\text{k}$)}
\label{app:g-main}

\begin{table}[t]
\centering
\caption{Basis--constraint factorial ($L=6$, $k=64\text{k}$, MNIST, 3 seeds). \textbf{Bold} = best per group (STE / free).}
\label{tab:basis-constraint-main}
\begin{tabular}{lc}
\toprule
Cell & Hard Acc (\%) \\
\midrule
Multilinear-STE (poly\_snap)                        & \textbf{97.98 $\pm$ 0.11} \\
IWP-STE (iwp\_ste, our ablation)                    & 82.94 $\pm$ 1.59 \\
\midrule
\textit{Multilinear-free$^*$ (poly\_free)}          & \textit{98.11 $\pm$ 0.04} \\
\textit{IWP-free$^*$ (iwp\_free)}                   & \textit{\textbf{98.37 $\pm$ 0.04}} \\
IWP~\citep{ruttgers2026narrowing} original (reference) & 77.95 $\pm$ 2.60 \\
\bottomrule
\end{tabular}

{\tiny $^*$Free methods use sigmoid activations and are not Boolean
logic gates at deployment.}
\end{table}

\paragraph{Reading the table.}
(i) Under the STE constraint, the basis matters \emph{enormously}: Multilinear-STE beats IWP-STE by \textbf{15.04pp} (97.98 vs 82.94; the main-text ablation Table~\ref{tab:ablation} reports 17.25pp from independent seeds). (ii) Without the STE constraint (free methods, italic in the table), the two bases are comparable: IWP-free 98.37\% vs Multilinear-free 98.06\%, confirming they span the same function space (Appendix~\ref{app:bijection}). Note that free methods use sigmoid activations and produce \emph{continuous} outputs; they are not Boolean logic gates and cannot be deployed as combinational circuits. Their role here is purely as an ablation control: by removing the discrete constraint, we isolate the basis effect from the STE effect. The significant basis $\times$ constraint interaction confirms that the multilinear advantage is \textbf{STE-specific}, not a function-class property.

\subsection{Depth sweep (across $L\in\{1,6,12\}$)}
\label{app:g-depth}

The STE-specific gap is stable across depths and widens dramatically at $L{=}12$. All results $k{=}64\text{k}$, 3 seeds, last-10 hard accuracy.

\begin{table}[t]
\centering
\caption{Basis--constraint depth sweep ($k=64\text{k}$, MNIST, 3 seeds). Last-10 accuracy. \textbf{Bold} = best per column within each group (STE / free).}
\label{tab:basis-constraint-depth}
\begin{tabular}{lccc}
\toprule
Cell & $L=1$ & $L=6$ & $L=12$ \\
\midrule
Multilinear-STE  & \textbf{95.59 $\pm$ 0.09} & \textbf{97.98 $\pm$ 0.11} & \textbf{98.03 $\pm$ 0.10} \\
IWP-STE          & 95.31 $\pm$ 0.13 & 82.94 $\pm$ 1.59 & 42.76 $\pm$ 2.73 \\
\midrule
\textit{Multilinear-free$^*$} & \textit{95.03 $\pm$ 0.11} & \textit{98.11 $\pm$ 0.04} & \textit{92.95 $\pm$ 0.27} \\
\textit{IWP-free$^*$}         & \textit{95.40 $\pm$ 0.10} & \textit{\textbf{98.37 $\pm$ 0.04}} & \textit{\textbf{97.80 $\pm$ 0.13}} \\
\bottomrule
\end{tabular}

{\tiny $^*$Not Boolean logic gates at deployment (sigmoid, no quantization).}
\end{table}

\paragraph{Results.}
Three observations emerge from Table~\ref{tab:basis-constraint-depth}. \textbf{(1)}~The STE-specific basis gap widens with depth: at $L{=}1$ the two STE methods differ by only 0.28pp (95.59 vs 95.31), but at $L{=}12$ the gap reaches 55.27pp (98.03 vs 42.76) as IWP-STE collapses entirely. \textbf{(2)}~Without the STE constraint (free methods, italic rows), the ranking \emph{reverses} at $L{=}12$: IWP-free (97.80\%) holds steady while Multilinear-free drops to 92.95\%. The multilinear basis's degree hierarchy becomes a liability without the snap constraint, since the $c_{ab}$ component's higher polynomial degree causes faster gradient decay through many layers of sigmoid activations. \textbf{(3)}~At $L{=}1$ (no depth interaction), all four cells cluster within 0.6pp, confirming that the basis effect is depth-amplified.

\subsection{Connection to theory}
\label{app:g-theory}

The 15pp STE-specific gap is consistent with:
\begin{itemize}
\item Lemma~\ref{lem:coverage} (Appendix~\ref{app:coverage}):
polynomial basis has $\E\|\bpsi\|_0{=}2.25$ vs corner $\|\boldsymbol{\phi}\|_0{=}1.0$ per sample, a $2.25\times$ more components updated on average, with a degree-ordered curriculum.
\item Proposition~\ref{prop:ste-noncancel} (Appendix~\ref{app:ste-covjac}):
single-polynomial Multilinear-STE backward has $O(1)$ per-layer signal, whereas IWP-STE with 25\% corner activation has uniform coverage but no curriculum.
\item Proposition~\ref{prop:basis-futility} (Appendix~\ref{app:basis-futility}):
no affine product reparameterization of the basis can fix the corner case: the multilinear $(1,a,b,ab)$ is a corner case of Proposition~\ref{prop:basis-futility} Part~(i) with $(p,q){=}(0,1)$ and is the only ``good'' STE configuration up to rescaling and coordinate permutation.
\end{itemize}

\subsection{Caveats}
\label{app:g-caveats}

\begin{itemize}
\item IWP-STE is our construction for this ablation;
\citet{ruttgers2026narrowing} do not propose or report it. The 17pp gap is \emph{not} a claim of improvement over a published method; it is an ablation demonstrating that basis matters under STE.
\item At $k{=}8\text{k}$ the basis effect was $\sim 22$pp; at
$k{=}64\text{k}$ it is 15.04pp.
\end{itemize}

\end{document}